\documentclass[11pt,letterpaper]{article}
\usepackage[margin=1in]{geometry}
\usepackage[T1]{fontenc}
\usepackage[utf8]{inputenc}
\usepackage{times}
\usepackage{microtype}
\usepackage[authoryear,round]{natbib}
\setcitestyle{citesep={;},aysep={,},yysep={;}}
\usepackage{xcolor}

\usepackage{amsmath,amsfonts,bm}

\def\1{\bm{1}}

\DeclareMathAlphabet{\mathsfit}{\encodingdefault}{\sfdefault}{m}{sl}
\SetMathAlphabet{\mathsfit}{bold}{\encodingdefault}{\sfdefault}{bx}{n}

\newcommand{\bc}{\begin{center}}
\newcommand{\ec}{\end{center}}
\newcommand{\be}{\begin{equation}}
\newcommand{\ee}{\end{equation}}
\newcommand{\ba}{\begin{array}}
\newcommand{\ea}{\end{array}}
\newcommand{\bea}{\setlength\arraycolsep{1pt}\begin{eqnarray}}
\newcommand{\eea}{\end{eqnarray}}
\newcommand{\ben}{\begin{enumerate}}
\newcommand{\een}{\end{enumerate}}
\newcommand{\bed}{\begin{itemize}}
\newcommand{\eed}{\end{itemize}}

\usepackage{comment}
\usepackage[hidelinks]{hyperref}
\usepackage{url}
\usepackage{amsmath, amssymb, amsthm}
\usepackage{algorithm}
\usepackage{algpseudocode}
\usepackage{booktabs}
\usepackage{longtable}
\usepackage{multirow}
\usepackage{amsmath}
\usepackage{graphicx}
\usepackage{float}
\usepackage{tikz}
\usetikzlibrary{arrows.meta,positioning,fit,backgrounds,calc}
\newcommand{\sd}[1]{{\fontsize{6}{6}\selectfont (#1)}}
\newcommand{\afterdisplay}{}
\usepackage{array}
\usepackage{tabularx}
\usepackage{subcaption} 

\numberwithin{lemma}{subsection}

\newtheorem{theorem}{Theorem}[section]

\newtheorem{proposition}[theorem]{Proposition}
\newtheorem{corollary}[theorem]{Corollary}
\theoremstyle{remark}

\usepackage{enumitem}

\newcommand{\bean}{\setlength\arraycolsep{1pt}\begin{eqnarray*}}
\newcommand{\eean}{\end{eqnarray*}}

\graphicspath{{images/}}

\title{Parameter-Efficient Distributionally Robust Adaptation
of Tabular Foundation Models under Subpopulation Shift}

\author{Seonghwi Kim \qquad Sung Ho Jo \qquad
Minwoo Chae\thanks{Corresponding author.} \\[0.5em]
\normalsize Pohang University of Science and Technology \\
\normalsize\texttt{\{kshwi,tjdgh1813,mchae\}@postech.ac.kr}}
\date{}

\hypersetup{
  pdftitle={Parameter-Efficient Distributionally Robust Adaptation of Tabular Foundation Models under Subpopulation Shift},
  pdfauthor={Seonghwi Kim, Sung Ho Jo, Minwoo Chae},
  pdfsubject={Distributionally robust adaptation of tabular foundation models},
  pdfkeywords={DR-TFM, DR-TabPFN, tabular foundation models, distributionally robust optimization, subpopulation shift}
}

\begin{document}
\addtocontents{toc}{\protect\setcounter{tocdepth}{-1}}

\maketitle
\begin{abstract}
Despite strong mean accuracy, tabular foundation models (TFMs)
can perform poorly on underrepresented groups under subpopulation shift,
where group proportions change between training and deployment.
We propose DR-TFM, a parameter-efficient distributionally robust
adaptation framework that requires no true group annotations.
DR-TFM adjusts attention to labeled context examples by fine-tuning
an existing query scaling network or adding and training one,
while keeping all other parameters fixed.
We instantiate the framework with two robust objectives using
estimated groups or source conditional distributions derived from
training data.
For TabPFN-3, adaptation updates only $0.016\%$ of the pretrained
model's parameters.
Across five tabular benchmarks, DR-TFM achieves substantially higher
average worst-group accuracy than pretrained TFMs and the compared
robust baselines without true group annotations, while maintaining
competitive mean group accuracy.
DR-TFM also improves average worst-group accuracy on ACS Income
and across four additional TFMs.
\end{abstract}

\begingroup
\setlength{\textfloatsep}{8pt plus 2pt minus 2pt}
\setlength{\floatsep}{6pt plus 2pt minus 2pt}
\setlength{\intextsep}{8pt plus 2pt minus 2pt}
\captionsetup[table]{skip=4pt}
\captionsetup[figure]{skip=6pt}
\setlength{\aboverulesep}{1.4pt}
\setlength{\belowrulesep}{2pt}
\section{Introduction}

Tabular foundation models (TFMs), such as the TabPFN\footnote{Unless otherwise specified, TabPFN refers to version~3.}, have emerged as a promising approach to general-purpose prediction on tabular data \citep{hollmann2025accurate,qu2025tabicl,jiang2026tabularsurvey}.
They have demonstrated strong predictive performance, while recent advances have further improved their scalability and computational efficiency~\citep{grinsztajn2026tabpfn}.

However, strong predictive performance does not necessarily imply robustness to distribution shift, where the training and test distributions differ. We focus on subpopulation shift, a practically important form of distribution shift in which the training and test distributions consist of the same underlying subpopulations, which we also refer to as groups, but differ in their mixture proportions \citep{duchi2021learning, sagawa2019distributionally, yang2023change}. For example, the relative frequencies of demographic groups may vary across data collection sites or deployment environments. When some groups are underrepresented in the training data, predictive methods optimized for average performance can exhibit substantial variation in performance across groups, with particularly poor performance on underrepresented groups. As the training mixture becomes more imbalanced, these disparities can become more pronounced, while aggregate performance may obscure degradation in the worst-performing group.

\begin{figure}[t]
    \centering
    \includegraphics[width=0.78\textwidth]{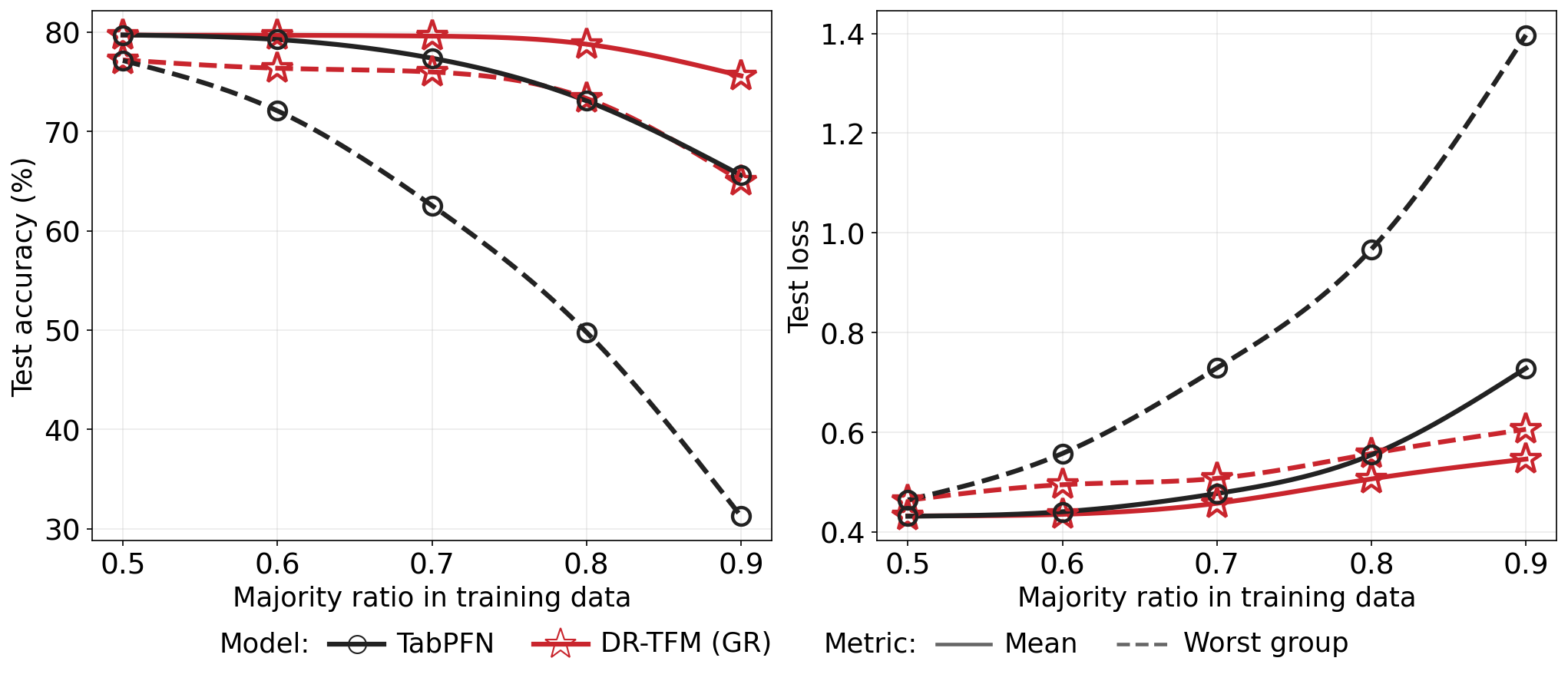}
    \caption{Classification results under subpopulation shift with balanced test groups. TabPFN's worst-group accuracy deteriorates more sharply than its mean group accuracy, whereas DR-TFM (GR) mitigates this degradation without true group labels. Curves show averages over 50 runs.}
    \label{fig:intro_failure}
\end{figure}

Figure~\ref{fig:intro_failure} illustrates this challenge. It shows TabPFN's mean group and worst-group accuracies as the combined proportion of the majority groups in the training data increases, while test groups remain balanced. When the majority groups together account for $90\%$ of the training data, the worst-group accuracy falls to approximately $31\%$, despite a mean group accuracy of approximately $66\%$.

To address this challenge, we propose Distributionally Robust Tabular Foundation Models (DR-TFM), a framework that adapts TFMs to subpopulation shift by updating only a small set of model parameters using distributionally robust optimization (DRO) objectives. We instantiate this framework with two objectives. DR-TFM (GR) uses GroupDRO~\citep{sagawa2019distributionally}. For DR-TFM (MS), we adopt the multi-source DRO objective of \citet{kim2026distributionally}, motivated by the empirical gains reported on spurious-correlation benchmarks. Both objectives rely on auxiliary group or source structure, which we infer or construct from the training data. Specifically, we estimate groups by clustering pretrained TFM embeddings and form source subsets through clustering or resampling, enabling adaptation without true group annotations. To the best of our knowledge, DR-TFM is the first adaptation framework for TFMs that explicitly targets subpopulation shift.

The central idea of DR-TFM is to improve robustness by adapting how
pretrained representations are used for prediction, rather than modifying
the representations themselves.
For classification, we implement this idea through query scaling:
a lightweight neural network learns coordinate-wise rescalings of query
vectors, thereby adjusting how each query attends to labeled context
examples.
Both DRO instantiations optimize this query scaling network while
keeping all remaining model parameters fixed, concentrating robust
adaptation on how the context contributes to predictions.

This targeted adaptation offers computational and memory efficiency,
scalability, and applicability across TFMs.
First, restricting adaptation to the scaling network reduces the
computational and memory costs of parameter updates.
For example, adapting TabPFN-3 updates only 8,320 parameters,
approximately $0.016\%$ of the model.
Second, sharing the network across attention heads allows larger
encoders and more heads without increasing the trainable parameter
count, provided that the head dimension and scaling network architecture
remain fixed.
Third, the mechanism does not depend on query scaling being part of
the pretrained architecture.
We fine-tune the existing query scaling network in TabPFN's
classification decoder; for TFMs without this component, we add
and train one in the final attention layer that attends to the context.

We evaluate DR-TFM through simulations and experiments on five tabular
benchmarks and ACS Income. Across these benchmarks, both instantiations achieve substantially higher average worst-group accuracy than pretrained TabPFN while maintaining
competitive mean group accuracy. Both also require substantially less time than TabPFN fine-tuning. We further apply DR-TFM (GR) to four additional TFMs and
observe consistent improvements in average worst-group accuracy.

Our main contributions can be summarized as follows.
\begin{itemize}[left=1pt]
    \item We propose DR-TFM, a parameter-efficient distributionally robust
    adaptation framework for TFMs under subpopulation shift, instantiated
    with two robust objectives without requiring true group annotations.
    \item We develop an efficient and scalable adaptation strategy that reweights attention through a small, shared query scaling network while keeping pretrained representations fixed.
    \item Through extensive simulations and real-world experiments,
    we demonstrate that both instantiations of DR-TFM improve robustness
    to subpopulation shift, achieving higher average worst-group accuracy
    than the compared baselines that do not use true group annotations,
    while maintaining competitive mean group accuracy.
    \item We demonstrate the applicability of our robust adaptation
    across multiple TFMs, with consistent improvements in average
    worst-group accuracy.
\end{itemize}

\section{Related Work}
\label{sec:related}

\paragraph{Tabular foundation models and their adaptation.}
Tabular foundation models (TFMs) transfer knowledge across datasets
through pretraining and in-context learning.
TabPFN~\citep{hollmann2025accurate,grinsztajn2026tabpfn} and
TabICL~\citep{qu2025tabicl} are pretrained on synthetic datasets,
whereas TabDPT~\citep{ma2025tabdpt} uses real tabular datasets.

TFM adaptation methods modify the context, model parameters, or prediction outputs.
TuneTables~\citep{feuer2024tunetables} compresses the training data into a
compact learned context through prompt tuning.
\citet{thomas2024retrieval} propose TabPFN-$k$NN, which constructs
query-specific contexts through nearest-neighbor retrieval, and LoCalPFN,
which further combines this retrieval with task-specific fine-tuning.
MixturePFN~\citep{xu2025mixture} clusters the training data and fine-tunes
cluster-specific experts, while BETA~\citep{LiuBeta2025} adapts lightweight input encoders
while keeping the pretrained TabPFN transformer fixed.
DistPFN~\citep{lee2026distpfn} targets label shift through test-time
posterior adjustment without updating model parameters.
We address the distinct challenge of subpopulation shift through
parameter-efficient decoder-side adaptation: we keep the pretrained
representations fixed and update only a small query-scaling network that
modifies how each query attends to labeled context examples.

Beyond generic adaptation, recent work has also examined fairness in TFMs.
FairPFN~\citep{robertson2025fairpfn} incorporates causal fairness into
pretraining using synthetically generated causal data, while
\citet{kenfack2026towards} study fairness in tabular in-context learning
through context selection that either balances groups or ranks examples by
the uncertainty in predicting sensitive attributes.
These methods target fairness with respect to explicitly observed sensitive
attributes, whereas we address robustness to changes in latent subpopulation
proportions without access to true group annotations during adaptation or
model selection.

\paragraph{Robust learning under subpopulation shift.}
A broad line of work seeks to improve robustness to subpopulation shifts,
where changes in group proportions can disproportionately degrade performance
on particular subpopulations
\citep{yang2023change}.
In settings where group annotations are available,
GroupDRO~\citep{sagawa2019distributionally} is a representative approach
that minimizes the worst-group loss over predefined groups.
The ambiguity set determines which distribution shifts are considered.
Subsequent work has expanded these sets.
\citet{krueger2021out} allow mixture weights outside the probability simplex,
while \citet{jo2025mitigating} introduce hierarchical ambiguity sets that
account for both inter- and intra-group uncertainty.

Without group annotations, existing approaches instead rely on proxy signals
or latent structure inferred from the data.
JTT~\citep{liu2021just} upweights examples misclassified by an initial model.
Among latent-group approaches,
GEORGE~\citep{sohoni2020no} clusters learned representations within each
class and applies GroupDRO to the resulting pseudo-groups.
BPA~\citep{seo2022unsupervised} and
SPARE~\citep{yang2024identifying} similarly reweight or resample groups
inferred from learned representations or early-training outputs.
In a related direction, \citet{kim2026distributionally} construct
pseudo-sources through subsampling and optimize over mixtures of their
estimated conditional label distributions.
Much of this literature has focused on vision-based benchmarks, while
comparatively less attention has been given to tabular data.
A recent tabular-specific approach is LSR~\citep{tong2025latent}, which
reweights samples using score-based density estimates in a learned latent
space.
Unlike our approach, however, LSR learns its latent representation and
predictor from the training data rather than building on a pretrained TFM.
Despite growing work on both TFM adaptation and subpopulation robustness,
their intersection remains largely unexplored.
To our knowledge, no prior method explicitly adapts TabPFN to subpopulation
shift without true group annotations.

\section{Preliminaries}
\label{sec:preliminaries}

\paragraph{Setup and notation.} Let $X\in\mathcal{X} \subseteq \mathbb{R}^{d} $ and $Y\in\mathcal{Y}$ denote the input and response variables.
Let $P^{\rm tr}$ and $P^{\rm te}$ denote the training and test joint distributions of $(X,Y)$, respectively.
For any joint distribution $P$ of $(X,Y)$, let $P_X$ denote its marginal distribution of $X$, $P_{Y\mid X}$ its conditional distribution of $Y$ given $X$, and $\mathbb{E}_P[\cdot]$ the corresponding expectation.
Let $\mathcal D=\{(x_i,y_i)\}_{i=1}^{n}$ be the labeled training dataset and 
$\mathcal D'_X=\{x'_j\}_{j=1}^{m}$ be the unlabeled test dataset.
For a parametrized predictor $f_\theta$, let $\ell(f_\theta(X),Y)$ denote its loss.
For a positive integer $K$, let $[K]=\{1,\ldots,K\}$ and let $\Delta_{K-1}$ denote the $(K-1)$-dimensional probability simplex.

\paragraph{Subpopulation shift.}
We consider training and test distributions that share the same $K$ subpopulation
distributions $P_1,\ldots,P_K$ but differ in their mixture proportions.
Specifically, $P^{\rm tr}=\sum_{k=1}^{K}\pi_k^{\rm tr}P_k$ and
$P^{\rm te}=\sum_{k=1}^{K}\pi_k^{\rm te}P_k$, where
$\pi^{\rm tr},\pi^{\rm te}\in\Delta_{K-1}$ denote the training and test mixture weight
vectors, respectively, with $\pi^{\rm tr}\neq\pi^{\rm te}$.
Standard empirical risk minimization (ERM) does not explicitly account for such changes
in mixture proportions and can therefore be vulnerable to subpopulation shift.
A detailed theoretical discussion is provided in Appendix~\ref{appendix:rationale}.

\paragraph{Distributionally robust optimization.}
DRO is a promising approach to
addressing such distributional shifts~\citep{mohajerin2018data,duchi2021learning}.
A standard DRO problem solves
\begin{equation}
\min_{\theta}
\sup_{Q\in\mathcal B}
\mathbb E_{Q}
\left[\ell\!\left(f_\theta(X),Y\right)\right],
\label{eq:dro}
\end{equation}
where $\mathcal B$ is an ambiguity set specifying the collection of distributions
over which robustness is sought, often defined as a suitable neighborhood of
the empirical distribution.

\paragraph{Fine-tuning TFM.}
Given the training dataset $\mathcal D$ and a test input $x$, a pretrained TFM with parameters $\theta_0$ produces a prediction $f_{\theta_0}(x;\mathcal D)$, where the labeled dataset $\mathcal D$ serves as the \emph{context} and $x$ serves as the \emph{query}. A pretrained TFM can be further adapted to $\mathcal D$ through
fine-tuning~\citep{grinsztajn2026tabpfn,LiuBeta2025}.
For this purpose, we split $\mathcal D$ into a context set
$\mathcal D^{(c)}$ and a loss-evaluation set $\mathcal D^{(e)}$.
Let $n_c = |\mathcal D^{(c)}|$ and $n_e = |\mathcal D^{(e)}|$, and let
$\widehat P^{(e)}$ denote the empirical distribution of $\mathcal D^{(e)}$.
We partition the model parameters as $\theta=(\theta_F,\theta_L)$, where
$\theta_F$ and $\theta_L$ denote the frozen and learnable parameters, respectively.
Standard fine-tuning initializes both components from their corresponding pretrained
values in $\theta_0$, keeps $\theta_F$ fixed, and solves
\begin{equation}
\min_{\theta_L}
\mathbb E_{\widehat P^{(e)}}
\left[
\ell\!\left(
f_\theta(X;\mathcal D^{(c)}),Y
\right)
\right].
\label{eq:standard_finetuning}
\end{equation}

\section{Proposed Method}
\label{sec:method}
\subsection{Distributionally Robust Formulation}
\label{sec:robust-formulation}

\paragraph{General formulation.}
Combining the fine-tuning objective in \eqref{eq:standard_finetuning} with the
standard DRO formulation in \eqref{eq:dro}, we formulate DR-TFM as
\begin{equation}
\min_{\theta_L}
\sup_{Q\in\mathcal B}
\mathbb E_{ Q}
\left[
\ell\!\left(
f_\theta(X;\mathcal D^{(c)}),Y
\right)
\right].
\label{eq:general_dr_tabpfn}
\end{equation}
A key ingredient is the choice of an ambiguity set $\mathcal B$ that captures
uncertainty arising from subpopulation shift.
Drawing on established formulations in distributionally robust learning, we consider two ambiguity-set constructions tailored to subpopulation shift.
Both rely on auxiliary structure, such as group assignments or source subsets, but this structure is derived from the observed training data rather than supplied as group annotations.
Consequently, DR-TFM does not require true group labels for training.
We present the key ideas below; detailed optimization formulations and algorithms are provided in Appendix~\ref{appendix:method}.

\paragraph{DR-TFM (GR).}
To motivate the construction, suppose first that each observation $i$ in the training dataset $\mathcal D$
is associated with a group index $g_i \in [G]$.
Following the GroupDRO framework~\citep{sagawa2019distributionally}, we
instantiate \eqref{eq:general_dr_tabpfn} with the ambiguity set
\begin{equation}
\mathcal B_{\mathrm{GR}}
=\left\{
Q=\sum_{k=1}^{G}q_k\widehat P_k^{(e)}
\;:\;q\in\Delta_{G-1}
\right\},
\label{eq:gr_ambiguity}
\end{equation}
where $\widehat P_k^{(e)}$ is the empirical distribution of the samples
$(x_i,y_i)\in\mathcal D^{(e)}$ with $g_i=k$.
In practice, however, we do not assume that these group indices are observed. Instead, we estimate them from pretrained TFM representations. Specifically, we first obtain an embedding from the frozen pretrained TFM for each training observation and then cluster the embeddings within each response class using a simple method such as $k$-means or a Gaussian mixture model. The resulting class--cluster pairs define the estimated group indices. In \eqref{eq:gr_ambiguity}, we use these estimated assignments in place of the unobserved \(g_i\).


\paragraph{DR-TFM (MS).}
Motivated by the strong empirical performance of the multi-source DRO formulation of \citet{kim2026distributionally} under subpopulation shift, we construct an ambiguity set from multiple source-conditioned predictive distributions.
We first form source subsets $\mathcal S_1,\ldots,\mathcal S_M\subseteq\mathcal D^{(c)}$ using either resampling or clustering. Using each $\mathcal S_k$ as context, the pretrained TFM with fixed parameters $\theta_0$ defines a source-conditioned predictive distribution $\widehat P^{(k)}_{Y\mid X}(\cdot\mid x)
= f_{\theta_0}(x;\mathcal S_k)$.
For radii $\epsilon_1,\epsilon_2\geq0$, we define
\begin{equation}
\mathcal B_{\mathrm{MS}}
=\left\{
Q=Q_X Q_{Y\mid X}:\begin{array}{l}
Q_{Y\mid X}=\displaystyle\sum_{k=1}^{M}\beta_k\widehat P^{(k)}_{Y\mid X},
\quad\beta\in\Delta_{M-1},\\[2pt]
W_\infty(Q_X,\widehat P_X^{(e)})\leq\epsilon_1,
\quad\|\beta-\bar\beta\|_2\leq\epsilon_2
\end{array}
\right\},
\label{eq:ms_ambiguity}
\end{equation}
where $W_\infty$ denotes the infinite-order Wasserstein distance and $\bar\beta$ is the uniform probability vector.
The radii $\epsilon_1$ and $\epsilon_2$ control the uncertainty in the input marginal and the source-mixture weights, respectively.



\subsection{Parameter-Efficient Adaptation}
\label{sec:efficient-adaptation}

\begin{samepage}
We now describe a parameter-efficient adaptation procedure for both DRO
instantiations defined by \eqref{eq:gr_ambiguity} and \eqref{eq:ms_ambiguity},
focusing on TabPFN.
We optimize the objective in~\eqref{eq:general_dr_tabpfn} by fine-tuning only
the \emph{query scaling function} $s_{\theta_L}$,
parameterized by a small neural network within the classification decoder
(Figure~\ref{fig:module}).
The function produces scaling factors that multiply the query vector
element-wise, thereby reweighting attention to context examples.
\par
\end{samepage}

\begin{figure}[t]
    \centering
    \includegraphics[width=\textwidth,trim=0bp 7bp 0bp 4bp,clip]{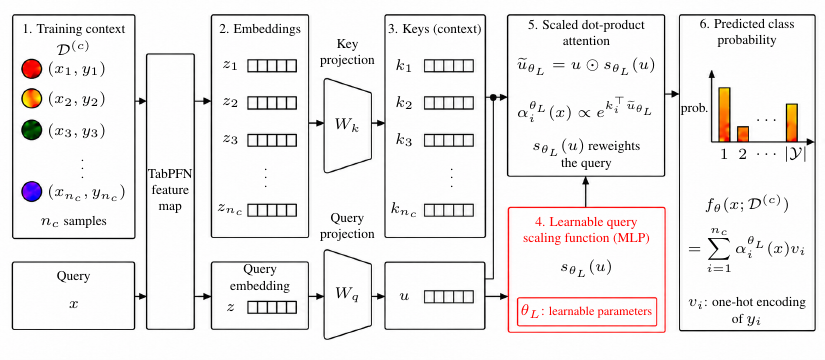}
    \caption{Query scaling for classification during adaptation. A query is
    predicted from the labeled training context. The scaling function $s_{\theta_L}$ adjusts
    the attention weights used to aggregate context labels. Only $\theta_L$ is
    updated; $\theta_F$ remains fixed.}
    \label{fig:module}
\end{figure}

The key idea is to improve robustness by adapting how pretrained representations are used for prediction while preserving the representations themselves. TabPFN’s pretrained embeddings have been shown to capture useful latent structure in tabular data~\citep{ye2025closer,grinsztajn2026tabpfn}. However, these informative embeddings alone do not ensure accurate predictions for underrepresented subpopulations. We implement this idea through query scaling, which adjusts how each query attends to labeled context examples. We fine-tune only the scaling function under either DRO objective while keeping all remaining model parameters fixed, concentrating adaptation on how the context contributes to predictions.

This adaptation strategy offers three advantages: computational and memory
efficiency, scalability, and applicability across TFMs.
First, we update only $8{,}320$ parameters ($0.016\%$ of TabPFN), compared with
nearly all parameters in full-model fine-tuning.
This effectively reduces the computational and memory costs of parameter updates.
Second, the query scaling network is shared across attention heads.
The number of trainable parameters does not increase with encoder size or the
number of heads, provided that the head dimension and scaling network
architecture remain fixed.
Third, the same mechanism can be applied to other TFMs: we fine-tune an existing
query scaling network when available, or add and train one to scale queries
before attention weights are computed.

We next describe how query scaling reweights attention to context examples.
Specifically, we consider a single query $x$ from a sample
$(x,y)\in\mathcal D^{(e)}$, given the context set $\mathcal D^{(c)}$.
Let $z$ denote the query embedding and $z_1,\ldots,z_{n_c}$ the context embeddings,
all obtained from the pretrained TabPFN feature map.
For a single attention head, the fixed projections $W_k$ and $W_q$
map these embeddings to context keys $k_i\in\mathbb R^{r}$
and a query vector $u\in\mathbb R^{r}$,
where $r$ is the dimension of the key and query vectors in each attention head.
Query scaling is applied as follows:
\begin{equation}
k_i=W_kz_i,\qquad u=W_qz,\qquad
\widetilde u_{\theta_L}=u\odot s_{\theta_L}(u).
\label{eq:query-scaling}
\end{equation}
Here $s_{\theta_L}:\mathbb R^{r}\to\mathbb R^{r}$ assigns a scaling factor to each
coordinate of $u$, and $\odot$ denotes element-wise multiplication. Then, we compute the attention weights and class probabilities as
\[
\alpha_i^{\theta_L}(x)
=
\frac{\exp(k_i^\top\widetilde u_{\theta_L})}
{\sum_{j=1}^{n_c}\exp(k_j^\top\widetilde u_{\theta_L})},
\qquad
f_\theta(x;\mathcal D^{(c)})
=\sum_{i=1}^{n_c}\alpha_i^{\theta_L}(x)v_i,
\]
where $v_i\in\{0,1\}^{|\mathcal Y|}$ is the one-hot vector corresponding to the
context label $y_i$.
The probability assigned to each class is the total attention weight on context
samples with that label. 

To see how scaling changes the relative contribution of two context samples, note that
\begin{equation*}
\log\frac{\alpha_i^{\theta_L}(x)}{\alpha_j^{\theta_L}(x)}
=(k_i-k_j)^\top\bigl(u\odot s_{\theta_L}(u)\bigr).
\end{equation*}
A positive right-hand side indicates that context example $i$ receives more
attention than example $j$; a negative value indicates the reverse.
Query scaling can change this attention ratio by rescaling individual
dimensions of the query vector.

Appendix~\ref{appendix:scaling-implementation} specifies the scaling network
and its adapted parameters, while Appendix~\ref{appendix:regression}
discusses the regression setting.

\begin{figure}[t]
    \centering
    \includegraphics[width=0.80\textwidth,trim=0bp 12bp 0bp 5bp,clip]{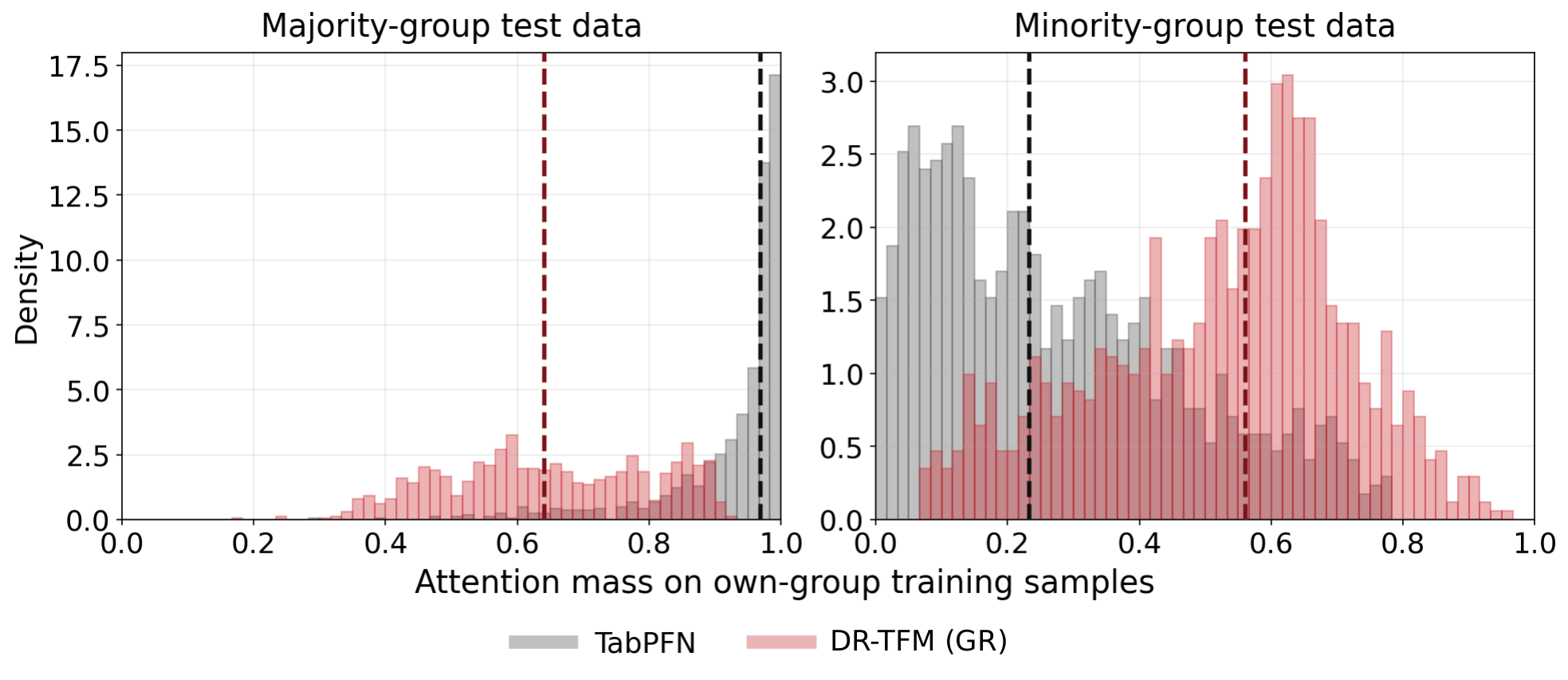}
    \caption{Histograms of the sum of attention weights assigned to each test sample's
    own true group, for majority (left) and minority (right) test samples.
    Gray and red denote TabPFN and DR-TFM (GR), respectively;
    dashed lines mark medians.
    For adaptation, DR-TFM estimates groups using a Gaussian mixture model (GMM).}
    \label{fig:simulation}
\end{figure}

\section{Experiments}
\subsection{Simulation}

In this section, we analyze changes in attention weights before and after adaptation.
We use the same data-generating model as in Figure~\ref{fig:intro_failure}.
Specifically, the training distribution assigns probability $45\%$ to each of two majority
groups and $5\%$ to each of two minority groups. At test time, all four groups
have probability $25\%$.
Only group proportions change; the distribution within each group remains fixed.
Appendix~\ref{appendix:simulation} provides the simulation details.

For each test input $x'\in\mathcal D'_X$ with true group $g'$, we first compute
the decoder attention weights $\alpha_i(x')$ over the context samples.
We then sum these weights over context samples in the same true group to obtain
$A_{\mathrm{own}}(x')=\sum_{i:\,g_i=g'}\alpha_i(x')$.
Figure~\ref{fig:simulation} shows histograms of $A_{\mathrm{own}}(x')$ separately
for majority- and minority-group test samples.
Since the attention weights sum to one, the remaining weight is assigned to context
samples in the other three groups.
Before adaptation, the median own-group attention weight is $0.97$ for majority
test samples and $0.23$ for minority test samples. Thus, majority test samples
predominantly attend to their own group, whereas minority test samples
predominantly attend to other groups.

Updating only the query scaling function under our DRO objective changes the
attention weights assigned to context samples.
After adaptation, the median attention weights on the own group and all other
groups change from $(0.97, 0.03)$ to $(0.64, 0.36)$ for majority test samples.
For minority test samples, they change from $(0.23, 0.77)$ to $(0.56, 0.44)$.
After adaptation, minority test samples assign a much larger share of their
attention to context samples in their own group.

\begin{table}[t]
\caption{Classification results on five tabular benchmarks for methods without true group labels. Throughout the tables, Avg.\ denotes the average of each metric across the datasets or settings in its block. Bold and underlined values indicate the best and second-best results in each column. $^{\dagger}$: accuracy results from \citet{tong2025latent}.}
\label{tab:main-table1}
\centering
\fontsize{7}{8}\selectfont
\renewcommand{\arraystretch}{0.92}
\setlength{\tabcolsep}{4.5pt}
\begin{tabular*}{\textwidth}{@{\extracolsep{\fill}}lcccccccccc|cc}
\toprule
\multirow{2}{*}{Methods} & \multicolumn{2}{c}{Adult} & \multicolumn{2}{c}{Bank} & \multicolumn{2}{c}{Default} & \multicolumn{2}{c}{Shoppers} & \multicolumn{2}{c|}{Taxi} & \multicolumn{2}{c}{Avg.} \\
\cmidrule(lr){2-3}\cmidrule(lr){4-5}\cmidrule(lr){6-7}\cmidrule(lr){8-9}\cmidrule(lr){10-11}\cmidrule(lr){12-13}
 & Mean & Worst & Mean & Worst & Mean & Worst & Mean & Worst & Mean & Worst & Mean & Worst \\
\midrule
\multicolumn{13}{@{}l}{\emph{Standard methods}}\\
ERM-MLP & 73.45 & 45.29 & 71.86 & 42.03 & 63.70 & 28.52 & 76.44 & 50.11 & 75.27 & 57.37 & 72.14 & 44.67\\
XGBoost & 77.09 & 52.58 & 68.69 & 33.87 & 64.78 & 30.72 & 78.03 & 55.42 & 75.81 & 57.57 & 72.88 & 46.03\\
CatBoost & 76.50 & 51.33 & 68.94 & 34.43 & 64.75 & 31.03 & 77.50 & 54.24 & 76.04 & 58.71 & 72.75 & 45.95\\
\midrule
\multicolumn{13}{@{}l}{\emph{TFMs}}\\
TabICL & 76.40 & 51.17 & 71.90 & 40.38 & 64.70 & 31.48 & 78.71 & 56.70 & 76.07 & 57.86 & 73.56 & 47.52\\
BETA & 74.70 & 47.36 & 66.26 & 30.51 & 64.46 & 30.93 & 78.44 & 56.84 & 75.03 & 57.14 & 71.78 & 44.56\\
DistPFN & 75.02 & 48.37 & 72.63 & 42.04 & 64.61 & 30.71 & 78.00 & 54.63 & 75.76 & 57.26 & 73.20 & 46.60\\
EXAONE & 77.47 & 53.46 & 72.67 & 42.35 & 64.74 & 31.10 & 77.74 & 53.72 & 75.99 & 57.77 & 73.72 & 47.68\\
TabFM & 77.39 & 52.88 & 73.75 & 44.42 & 65.02 & 31.91 & 78.28 & 54.96 & 76.34 & 58.11 & 74.15 & 48.46\\
Causilo & 77.25 & 52.89 & 72.95 & 43.27 & 64.70 & 31.27 & 78.60 & 56.44 & 76.28 & 58.98 & 73.95 & 48.57\\
\midrule
\multicolumn{13}{@{}l}{\emph{Robust methods}}\\
EIIL$^{\dagger}$ & 69.37 & 38.97 & 61.85 & 21.69 & 65.05 & 28.91 & 74.82 & 46.18 & 69.52 & 58.00 & 68.12 & 38.75\\
LSR$^{\dagger}$ & 74.33 & 54.79 & 69.50 & 41.28 & 62.62 & 38.78 & 79.68 & 60.73 & 67.85 & 63.14 & 70.80 & 51.74\\
GEORGE & 77.48 & 60.18 & 81.72 & 73.32 & 62.73 & 56.60 & 77.47 & 71.56 & 64.03 & 52.82 & 72.69 & 62.89\\
BPA & 64.21 & 54.34 & 65.52 & 54.04 & 57.45 & 51.97 & 62.88 & 50.18 & 59.96 & 50.19 & 62.00 & 52.14\\
SPARE & 69.81 & 46.64 & 76.34 & 57.52 & 57.68 & 34.70 & 79.93 & 69.67 & 66.08 & 51.24 & 69.97 & 51.95\\
\midrule
\multicolumn{13}{@{}l}{\emph{TabPFN adaptation}}\\
TabPFN & 75.33 & 49.17 & 73.07 & 42.96 & 64.77 & 31.09 & 79.59 & 58.29 & 76.01 & 58.19 & 73.75 & 47.94\\
TabPFN (fine-tuned) & 76.33 & 50.80 & 73.10 & 42.99 & 65.03 & 31.71 & 78.88 & 56.25 & 76.11 & 58.23 & 73.89 & 48.00\\
\cmidrule[0.03em](lr){1-13}
DR-TFM (GR) & \underline{79.64} & \textbf{67.90} & \underline{85.01} & \textbf{80.14} & \textbf{69.42} & \textbf{60.37} & \textbf{85.79} & \textbf{81.66} & \underline{76.69} & \textbf{72.79} & \textbf{79.31} & \textbf{72.57}\\
DR-TFM (MS) & \textbf{79.78} & \underline{67.41} & \textbf{85.02} & \underline{78.41} & \underline{69.18} & \underline{57.86} & \underline{84.48} & \underline{77.09} & \textbf{77.13} & \underline{65.80} & \underline{79.12} & \underline{69.31}\\
\bottomrule
\end{tabular*}
\end{table}

\subsection{Real data analysis}

We follow the experimental setup of \citet{tong2025latent}, adopting their
dataset settings, evaluation group definitions, and evaluation metrics.
Details are provided in Appendix~\ref{appendix:setup}.

\subsubsection{Experimental setup}
\paragraph{Datasets.}
We evaluate the proposed method on Adult~\citep{becker1996adult},
Bank~\citep{moro2012bank}, Default~\citep{yeh2016default},
Shoppers~\citep{sakar2018online}, Taxi~\citep{navas2018taxi},
and ACS Income~\citep{ding2021retiring}.
For ACS Income, we consider three within-state settings and three transfers between
Arizona, Massachusetts, and Michigan.
All tasks are binary classification. Each evaluated binary attribute is paired
with the label to define four groups.

\paragraph{Baselines and evaluation.}
We compare DR-TFM with 30 baseline methods, including standard classifiers,
TFMs and their adaptations, and robust learning methods.
Tables~\ref{tab:main-table1} and~\ref{tab:main-table2} report results for our
TabPFN adaptations and representative baselines that do not use true group labels.
Appendix~\ref{appendix:baselines} describes the baselines.
We evaluate performance using mean group accuracy $\mathrm{Acc}_{\mathrm{mean}}$ and
worst-group accuracy $\mathrm{Acc}_{\mathrm{worst}}$.
Let $\mathcal A$ denote the set of evaluated attributes and $a_{A,g}$ the test
accuracy of group $g\in[4]$ for attribute $A\in\mathcal A$. These metrics are defined as
$\mathrm{Acc}_{\mathrm{mean}}
=\frac{1}{|\mathcal A|}\sum_{A\in\mathcal A}
\frac{1}{4}\sum_{g=1}^{4}a_{A,g}$ and
$\mathrm{Acc}_{\mathrm{worst}}
=\frac{1}{|\mathcal A|}\sum_{A\in\mathcal A}
\min_{g\in[4]}a_{A,g}$.
Both metrics are expressed as percentages and denoted by Mean and Worst in the tables.

DR-TFM does not require true group labels for either training or model selection.
We use $k$-means to estimate groups for GR and construct source subsets for MS.
Appendix~\ref{appendix:setup} provides the training protocols, hyperparameter
selection procedures, and ablations on the number of clusters.

\subsubsection{Results}
Tables~\ref{tab:main-table1} and~\ref{tab:main-table2} summarize benchmark
performance on the five tabular benchmarks and ACS Income, respectively.
Full results, including additional baselines and standard deviations, are reported in
Appendices~\ref{appendix:full-tables} and~\ref{appendix:tfm-extension-results}.
We report accuracy differences in percentage points (pp).

\paragraph{Five tabular benchmarks.}
Table~\ref{tab:main-table1} shows that our proposed methods achieve the best performance
on all five tabular benchmarks.
The corresponding averages of worst-group accuracies are $72.57\%$ and $69.31\%$,
yielding improvements of $24.63$ and $21.37$ pp over TabPFN.
DR-TFM (GR) and DR-TFM (MS) improve the average of mean group accuracies
over TabPFN by $5.56$ and $5.37$ pp and the average of worst-group accuracies
over TabPFN fine-tuning by $24.57$ and $21.31$ pp, respectively.
The corresponding gains over GEORGE ($k$-means) are $9.68$ and $6.42$ pp.

\begin{table}[t]
\caption{Classification results on ACS Income for methods without true group labels:
within-state prediction (left)
and transfers between states (right).}
\label{tab:main-table2}
\centering
\fontsize{7}{8}\selectfont
\renewcommand{\arraystretch}{0.92}
\setlength{\tabcolsep}{1.4pt}
\begin{tabular*}{\textwidth}{@{\extracolsep{\fill}}lcccccc|cc|cccccc|cc}
\toprule
\multirow{2}{*}{Methods} & \multicolumn{2}{c}{AZ} & \multicolumn{2}{c}{MA} & \multicolumn{2}{c|}{MI} & \multicolumn{2}{c|}{Avg.} & \multicolumn{2}{c}{AZ $\rightarrow$ MA} & \multicolumn{2}{c}{MA $\rightarrow$ MI} & \multicolumn{2}{c|}{MI $\rightarrow$ AZ} & \multicolumn{2}{c}{Avg.} \\
\cmidrule(lr){2-3}\cmidrule(lr){4-5}\cmidrule(lr){6-7}\cmidrule(lr){8-9}\cmidrule(lr){10-11}\cmidrule(lr){12-13}\cmidrule(lr){14-15}\cmidrule(lr){16-17}
 & Mean & Worst & Mean & Worst & Mean & Worst & Mean & Worst & Mean & Worst & Mean & Worst & Mean & Worst & Mean & Worst \\
\midrule
\multicolumn{17}{@{}l}{\emph{Standard methods}}\\
ERM-MLP & 77.20 & 59.42 & 80.82 & 73.82 & 75.69 & 57.99 & 77.90 & 63.74 & 76.82 & 60.23 & 79.13 & 71.34 & 75.56 & 56.31 & 77.17 & 62.63\\
XGBoost & 77.64 & 58.99 & 82.26 & 76.38 & 77.50 & 62.77 & 79.13 & 66.05 & 77.96 & 61.99 & 79.67 & 72.01 & 76.82 & 58.99 & 78.15 & 64.33\\
CatBoost & 77.35 & 58.26 & 81.97 & 75.59 & 77.27 & 62.43 & 78.86 & 65.43 & 77.66 & 61.15 & 79.60 & \underline{72.31} & 76.68 & 58.71 & 77.98 & 64.06\\
\midrule
\multicolumn{17}{@{}l}{\emph{TFMs}}\\
TabICL & 78.14 & 59.46 & 82.89 & 76.50 & 77.84 & 64.30 & 79.62 & 66.75 & 78.03 & 62.08 & 79.73 & 71.78 & 76.98 & 59.69 & 78.25 & 64.52\\
BETA & 76.73 & 58.08 & 80.67 & 73.25 & 76.80 & 62.12 & 78.07 & 64.49 & 77.25 & 60.70 & 79.39 & \textbf{72.55} & 76.30 & 58.35 & 77.65 & 63.87\\
DistPFN & 78.09 & 59.55 & \underline{82.98} & 76.96 & 77.82 & 63.78 & 79.63 & 66.76 & \underline{78.76} & 65.16 & 79.43 & 71.58 & 76.70 & 58.56 & 78.29 & 65.10\\
EXAONE & 78.06 & 58.93 & 82.70 & 76.05 & 77.80 & 63.50 & 79.52 & 66.16 & 78.06 & 61.52 & \underline{79.85} & 72.02 & 76.90 & 59.03 & 78.27 & 64.19\\
TabFM & 77.90 & 59.64 & 82.95 & \textbf{77.10} & 78.05 & 64.60 & 79.63 & 67.11 & 77.97 & 61.11 & \textbf{79.90} & 72.14 & 77.09 & 59.86 & 78.32 & 64.37\\
Causilo & 77.77 & 59.45 & 82.87 & 76.52 & 77.77 & 64.04 & 79.47 & 66.67 & 77.90 & 61.53 & \underline{79.85} & 72.02 & 76.93 & 59.31 & 78.23 & 64.29\\
\midrule
\multicolumn{17}{@{}l}{\emph{Robust methods}}\\
EIIL$^{\dagger}$ & 74.98 & 55.22 & 78.18 & 68.42 & 75.60 & 63.77 & 76.25 & 62.47 & 75.35 & 56.80 & 76.92 & 65.62 & 74.78 & 57.32 & 75.68 & 59.91\\
LSR$^{\dagger}$ & 76.28 & 66.42 & 78.35 & 73.33 & 75.53 & 67.10 & 76.72 & 68.95 & 75.00 & 62.85 & 74.75 & 68.75 & 74.73 & 64.10 & 74.83 & 65.23\\
GEORGE & 74.76 & 58.70 & 76.65 & 68.82 & 72.82 & 66.52 & 74.74 & 64.68 & 73.25 & 61.69 & 73.22 & 64.80 & 74.21 & 62.10 & 73.56 & 62.86\\
BPA & 63.26 & 54.44 & 65.78 & 58.90 & 63.42 & 58.18 & 64.15 & 57.17 & 65.65 & 59.36 & 62.72 & 57.64 & 63.57 & 58.96 & 63.98 & 58.65\\
SPARE & 74.35 & \underline{69.94} & 77.51 & 70.17 & 71.05 & 61.67 & 74.31 & 67.26 & 74.23 & 65.44 & 73.23 & 65.67 & 70.27 & 58.96 & 72.58 & 63.36\\
\midrule
\multicolumn{17}{@{}l}{\emph{TabPFN adaptation}}\\
TabPFN & 78.19 & 59.87 & \textbf{83.01} & 77.02 & 77.72 & 63.50 & 79.64 & 66.80 & 77.95 & 61.57 & 79.69 & 71.86 & 76.81 & 59.08 & 78.15 & 64.17\\
TabPFN (fine-tuned) & 78.23 & 60.35 & 82.95 & \underline{77.03} & 78.23 & 65.02 & \underline{79.80} & 67.47 & 77.95 & 62.05 & 79.62 & 72.20 & 76.76 & 58.77 & 78.11 & 64.34\\
\cmidrule[0.03em](lr){1-17}
DR-TFM (GR) & \underline{78.81} & \textbf{70.36} & 79.96 & 70.93 & \underline{78.87} & \textbf{71.90} & 79.21 & \underline{71.06} & 78.43 & \underline{69.70} & 78.73 & 66.43 & \underline{78.46} & \underline{68.84} & \underline{78.54} & \underline{68.32}\\
DR-TFM (MS) & \textbf{79.46} & 69.30 & 81.24 & 72.75 & \textbf{79.30} & \underline{71.42} & \textbf{80.00} & \textbf{71.16} & \textbf{80.08} & \textbf{73.75} & 78.82 & 70.53 & \textbf{78.55} & \textbf{71.38} & \textbf{79.15} & \textbf{71.89}\\
\bottomrule
\end{tabular*}
\end{table}

\begin{table}[t]
\caption{Runtime in seconds and peak GPU memory in GB per result, each averaged over the
five tabular benchmarks and three seeds. “ft” denotes fine-tuning.
Runtime includes hyperparameter selection.}
\label{tab:main-runtime}
\centering
\scriptsize
\setlength{\tabcolsep}{1pt}
\begin{tabular*}{\textwidth}{@{\extracolsep{\fill}}lcccccc!{\hspace{6pt}\vrule\hspace{-6pt}}cc}
\toprule
 & GEORGE & LSR & MixturePFN & BETA & TabPFN & TabPFN (ft) & \textbf{DR-TFM (GR)} & \textbf{DR-TFM (MS)} \\
\midrule
Seconds & 77.0 & 5,156.4 & 229.5 & 1,020.6 & 5.4 & 214.1 & 23.8 & 30.6 \\
Memory (GB) & 1.21 & 3.12 & 2.17 & 14.33 & 0.61 & 9.25 & 1.07 & 0.98 \\
\bottomrule
\end{tabular*}
\end{table}

\paragraph{ACS Income.}
In Table~\ref{tab:main-table2}, our proposed methods achieve the best and second-best
averages of worst-group accuracies in both within-state and transfer settings.
Within states, DR-TFM (GR) and DR-TFM (MS) achieve $71.06\%$ and $71.16\%$,
exceeding TabPFN by $4.26$ and $4.36$ pp, respectively.
Across transfers, they achieve $68.32\%$ and $71.89\%$.
These exceed TabPFN by $4.15$ and $7.72$ pp and LSR by $3.09$ and $6.66$ pp, respectively.
Both also outperform TabPFN and fine-tuned TabPFN in the average of mean group
accuracies in the transfer settings.

\paragraph{Computational and memory efficiency.}
Table~\ref{tab:main-runtime} reports runtime in seconds and peak GPU memory in GB
per result, each averaged over the five tabular benchmarks.
DR-TFM (GR) and DR-TFM (MS) require $23.8$ and $30.6$ seconds per result,
respectively. These correspond to $9.0$- and $7.0$-fold speedups over
TabPFN standard fine-tuning ($214.1$ seconds), and $42.9$- and $33.3$-fold speedups over
BETA ($1{,}020.6$ seconds).
Both also use less GPU memory: their average peak memory is $1.07$ and $0.98$ GB,
respectively, compared with $9.25$ GB for TabPFN standard fine-tuning.
Further details are provided in Appendix~\ref{appendix:runtime}.

\paragraph{Training objective and adaptation strategy.}
Table~\ref{tab:main-objswap} compares five fine-tuning strategies under
ERM~\eqref{eq:standard_finetuning} and DRO~\eqref{eq:gr_ambiguity}.
Our DRO-based adaptation substantially improves the average of worst-group accuracies
over ERM across all five strategies.
Under DRO-based adaptation, query scaling achieves the highest average of
worst-group accuracies ($72.57\%$) with the fewest trainable parameters
($0.016\%$ of the model).
Its accuracy is slightly higher than that of decoder fine-tuning ($71.93\%$),
while using approximately $1/51$ as many trainable parameters.
Appendix~\ref{appendix:objswap} provides the adaptation settings and per-dataset results.

\begin{table}[!t]
\captionsetup{skip=4pt}
\caption{Average of worst-group accuracies over five tabular benchmarks and three seeds.
Trainable (\%) is the percentage of model parameters updated.
Encoder refers to all model parameters outside the decoder.
Parenthesized values denote improvements over ERM in percentage points.
Seconds and Memory (GB) are runtime and peak GPU memory per result under DRO-based adaptation.}
\label{tab:main-objswap}
\centering
\fontsize{8}{9}\selectfont
\setlength{\tabcolsep}{4pt}
\begin{tabular*}{\textwidth}{@{\extracolsep{\fill}}lccccc}
\toprule
Fine-tuned parameters & Trainable (\%) & ERM (\%) & DRO (\%) & Seconds & Memory (GB) \\
\midrule
Input adapter & 0.063--0.155 & 45.45 & 66.64\,{\fontsize{6}{6}\selectfont (+21.19)} & 43.5 & 9.80 \\
Encoder & 99.196 & 48.39 & 68.92\,{\fontsize{6}{6}\selectfont (+20.53)} & 241.2 & 10.22 \\
Full backbone & $\approx 100$ & 48.00 & 68.98\,{\fontsize{6}{6}\selectfont (+20.98)} & 246.6 & 10.31 \\
Decoder & 0.804 & 48.68 & 71.93\,{\fontsize{6}{6}\selectfont (+23.25)} & 24.0 & 1.10 \\
\cmidrule[0.03em](lr){1-6}
\textbf{Query scaling} & 0.016 & 47.34 & 72.57\,{\fontsize{6}{6}\selectfont (+25.23)} & 23.8 & 1.07 \\
\bottomrule
\end{tabular*}
\end{table}

\begin{table}[!t]
\captionsetup{skip=4pt}
\caption{GR adaptation across tabular foundation models without true group labels.
Results are averaged over three seeds.
``+ Ours'' denotes DR-TFM (GR) applied to the model above, with parentheses indicating the percentage of model parameters updated.
Bold parentheses report the change from the corresponding original model, in percentage points.}
\label{tab:tfm-extension}
\centering
\fontsize{7}{8.5}\selectfont
\renewcommand{\arraystretch}{1.05}
\setlength{\tabcolsep}{1.5pt}
\newcommand{\gain}[1]{{\bfseries (#1)}}
\begin{tabularx}{\textwidth}{@{}l*{2}{>{\centering\arraybackslash}X}|*{2}{>{\centering\arraybackslash}X}|*{2}{>{\centering\arraybackslash}X}@{}}
\toprule
\multirow{2}{*}{Method} & \multicolumn{2}{c|}{Five tabular benchmarks (Avg.)} & \multicolumn{2}{c|}{ACS within-state (Avg.)} & \multicolumn{2}{c}{ACS transfer (Avg.)} \\
\cmidrule(lr){2-3}\cmidrule(lr){4-5}\cmidrule(lr){6-7}
 & Mean & Worst & Mean & Worst & Mean & Worst \\
\midrule
TabPFN & 73.75 & 47.94 & 79.64 & 66.80 & 78.15 & 64.17\\
\hspace{0.5em}+ Ours (0.016\%) & 79.31\,\gain{+5.56} & 72.57\,\gain{+24.63} & 79.21\,\gain{-0.43} & 71.06\,\gain{+4.26} & 78.54\,\gain{+0.39} & 68.32\,\gain{+4.15}\\
\midrule
TabPFN-3.5 & 74.39 & 49.26 & 79.79 & 67.10 & 78.25 & 64.34\\
\hspace{0.5em}+ Ours (0.004\%) & 79.42\,\gain{+5.03} & 71.47\,\gain{+22.21} & 79.88\,\gain{+0.09} & 72.48\,\gain{+5.38} & 79.19\,\gain{+0.94} & 71.51\,\gain{+7.17}\\
\midrule
EXAONE & 73.72 & 47.68 & 79.52 & 66.16 & 78.27 & 64.19\\
\hspace{0.5em}+ Ours (0.020\%) & 79.44\,\gain{+5.72} & 69.45\,\gain{+21.77} & 80.28\,\gain{+0.76} & 71.39\,\gain{+5.23} & 79.63\,\gain{+1.36} & 70.37\,\gain{+6.18}\\
\midrule
TabFM & 74.15 & 48.46 & 79.63 & 67.11 & 78.32 & 64.37\\
\hspace{0.5em}+ Ours (0.002\%) & 79.43\,\gain{+5.28} & 66.47\,\gain{+18.01} & 79.65\,\gain{+0.02} & 69.41\,\gain{+2.30} & 78.98\,\gain{+0.66} & 66.60\,\gain{+2.23}\\
\midrule
Causilo & 73.95 & 48.57 & 79.47 & 66.67 & 78.23 & 64.29\\
\hspace{0.5em}+ Ours (0.046\%) & 78.47\,\gain{+4.52} & 63.56\,\gain{+14.99} & 80.22\,\gain{+0.75} & 72.15\,\gain{+5.48} & 78.37\,\gain{+0.14} & 68.21\,\gain{+3.92}\\
\bottomrule
\end{tabularx}
\end{table}

\paragraph{Extension to other tabular foundation models.}
Table~\ref{tab:tfm-extension} evaluates our robust adaptation on
TabPFN-3.5~\citep{priorlabs2026tabpfn35} and three other TFMs:
EXAONE~\citep{eo2026exaonetabular}, TabFM~\citep{google2026tabfm}, and
Causilo~\citep{nums2026causilo}.
We fine-tune the pretrained query-scaling network when present, and otherwise
add and train a new one.
All four models consistently improve the average of worst-group accuracies
on the five tabular benchmarks, with an average gain of 19.25 pp across models.
These results support the applicability of our adaptation beyond TabPFN.
Detailed results and experimental settings are provided in
Appendix~\ref{appendix:tfm-extension-results}.

\section{Conclusion}
We presented DR-TFM, a parameter-efficient framework for distributionally robust
TFM adaptation under subpopulation shift without true group labels.
Its two instantiations construct ambiguity sets from estimated groups or source
conditional distributions.
Query scaling preserves pretrained representations while offering computational
and memory efficiency and scalability.
Experiments demonstrate improved average worst-group accuracy with competitive
mean group accuracy across multiple TFMs.
Future work will extend the framework to a broader range of distribution shifts.

\clearpage
\endgroup

\bibliography{ref}
\bibliographystyle{plainnat}

\clearpage

\appendix
\section{Appendix}
\addtocontents{toc}{\protect\setcounter{tocdepth}{3}}
\begingroup
\setlength{\parskip}{0pt}
\linespread{1.25}\selectfont
\tableofcontents
\endgroup
\clearpage
\subsection{Theoretical analysis of subpopulation shift}
\label{appendix:rationale}
We characterize how changes in subpopulation proportions affect predictive risk,
then examine how group imbalance can produce differences in group risks.

\begin{itemize}[left=1pt]
    \item \textbf{Proposition \ref{prop:risk_variation}} shows when subpopulation shift can have a large effect on predictive risk. The effect depends on how much the group proportions change, how different the group risks are, and the direction of the shift.

    \item \textbf{Proposition \ref{prop:imbalance-spurious-risk}} explains how group imbalance can create differences in risk across subpopulations. Under average-risk minimization, stronger imbalance-induced spurious correlation can lead to a larger group risk gap.
\end{itemize}
\bigskip

\begin{proposition}[Risk variation under subpopulation mixture shift]
\label{prop:risk_variation}
Suppose that the training and test distributions are mixtures of the same
$K$ subpopulation distributions,
\[
P^{\rm tr}
=
\sum_{k=1}^K \pi_k^{\rm tr} P_k,
\qquad
P^{\rm te}
=
\sum_{k=1}^K \pi_k^{\rm te} P_k,
\]
where $\pi^{\rm tr},\pi^{\rm te}\in\Delta_{K-1}$.

For a fixed model parameter $\theta$, define the group risks by
\[
R_k(\theta)
:=
\mathbb{E}_{ P_k}
\bigl[\ell(f_\theta(X),Y)\bigr].
\]
Assume these risks are finite, and define the training and test risks by
$R^{\rm tr}(\theta)=\sum_k\pi_k^{\rm tr}R_k(\theta)$ and
$R^{\rm te}(\theta)=\sum_k\pi_k^{\rm te}R_k(\theta)$.
Let
\[
R_{\max}(\theta)
:=
\max_{k=1,\ldots,K} R_k(\theta),
\qquad
R_{\min}(\theta)
:=
\min_{k=1,\ldots,K} R_k(\theta).
\]

\afterdisplay Then, there exists
$A(\theta;\pi^{\rm tr},\pi^{\rm te})\in[-1,1]$ such that
\begin{align*}
R^{\rm te}(\theta)-R^{\rm tr}(\theta)
&=
d_{\rm TV}(\pi^{\rm te},\pi^{\rm tr})
\bigl(R_{\max}(\theta)-R_{\min}(\theta)\bigr)
A(\theta;\pi^{\rm tr},\pi^{\rm te}),
\end{align*}
where
\[
d_{\rm TV}(\pi^{\rm te},\pi^{\rm tr})
:=
\frac{1}{2}
\sum_{k=1}^K
\left|
\pi_k^{\rm te}-\pi_k^{\rm tr}
\right|
\]
denotes the total variation distance.

Consequently,
\begin{align*}
\left|
R^{\rm te}(\theta)-R^{\rm tr}(\theta)
\right|
\leq
d_{\rm TV}(\pi^{\rm te},\pi^{\rm tr})
\bigl(R_{\max}(\theta)-R_{\min}(\theta)\bigr).
\end{align*}
\end{proposition}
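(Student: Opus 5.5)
The plan is to write the risk difference as a signed combination of the group risks. Then I bound it using the fact that the coefficients sum to zero, and finally define $A$ as the resulting normalized ratio. Set $\delta_k := \pi_k^{\rm te}-\pi_k^{\rm tr}$. Linearity of the mixture risks gives
\[
R^{\rm te}(\theta)-R^{\rm tr}(\theta)=\sum_{k=1}^K \delta_k R_k(\theta).
\]
This is well defined because each $R_k(\theta)$ is finite. Since both weight vectors lie in $\Delta_{K-1}$, we have $\sum_k\delta_k=0$. Consequently the positive and negative parts of $\delta$ have equal mass:
\[
\sum_k \delta_k^+ = \sum_k \delta_k^- = \tfrac12\sum_k|\delta_k| = d_{\rm TV}(\pi^{\rm te},\pi^{\rm tr}) =: d.
\]

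The key step is the two-sided bound $|\sum_k\delta_k R_k|\le d\,(R_{\max}-R_{\min})$. Because $\sum_k\delta_k=0$, I may subtract any constant $c$ from the risks: $\sum_k\delta_k R_k=\sum_k\delta_k(R_k-c)$. Taking $c=R_{\min}$ makes every $R_k-R_{\min}$ lie in $[0,R_{\max}-R_{\min}]$. Then
\[
\sum_k\delta_k(R_k-R_{\min})\le \sum_k\delta_k^+(R_k-R_{\min})\le d\,(R_{\max}-R_{\min}).
\]
Symmetrically, $\sum_k\delta_k(R_k-R_{\min})\ge -\sum_k\delta_k^-(R_k-R_{\min})\ge -d\,(R_{\max}-R_{\min})$. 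This establishes the "Consequently" inequality directly.

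Next I define $A$ by cases. If $d>0$ and $R_{\max}>R_{\min}$, set $A(\theta;\pi^{\rm tr},\pi^{\rm te}) := (R^{\rm te}-R^{\rm tr})/\bigl(d\,(R_{\max}-R_{\min})\bigr)$; the bound above places it in $[-1,1]$, and the identity holds by construction. If $d=0$, then $\pi^{\rm te}=\pi^{\rm tr}$. If $R_{\max}=R_{\min}$, all group risks are equal and the difference is $\sum_k\delta_k R_{\min}=0$. In either degenerate case the left side vanishes, so the choice $A:=0$ satisfies the identity.

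The only delicate point is this degenerate case, where the normalization is undefined; it is handled by the zero-sum observation just described. The rest is routine. Finally, I will remark on the interpretation: $A$ equals $\pm1$ exactly when the added test mass $\delta^+$ concentrates on argmax-risk groups and the removed mass $\delta^-$ on argmin-risk groups (or vice versa). This is the "direction of the shift" mentioned in the statement.
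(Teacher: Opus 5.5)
Your proof is correct and follows essentially the same route as the paper: both split $\delta$ into positive and negative parts of equal mass $d_{\rm TV}$, bound the signed sum by $d_{\rm TV}(R_{\max}-R_{\min})$, define $A$ as the normalized ratio, and set $A=0$ when $d_{\rm TV}=0$ or $R_{\max}=R_{\min}$. The only cosmetic difference is that the paper normalizes $\delta^{\pm}$ into probability vectors $q^{\pm}$ and uses that convex combinations of the $R_k$ lie in $[R_{\min},R_{\max}]$, whereas you shift the risks by $R_{\min}$ and bound the two partial sums directly.
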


\bigskip
The bound depends on the magnitude of the mixture shift and the range of group
risks. The alignment factor $A$ determines the direction and size of the actual risk change.
\bigskip

\begin{proof}[Proof of Proposition~\ref{prop:risk_variation}]
For a fixed $\theta$,
\begin{align*}
R^{\rm te}(\theta)-R^{\rm tr}(\theta)
=
\sum_{k=1}^K
\left(\pi_k^{\rm te}-\pi_k^{\rm tr}\right)
R_k(\theta).
\end{align*}

\afterdisplay Let
\[
\Delta_k
:=
\pi_k^{\rm te}-\pi_k^{\rm tr}.
\]
Since both $\pi^{\rm tr}$ and $\pi^{\rm te}$ are probability vectors,
\[
\sum_{k=1}^K \Delta_k = 0.
\]

\afterdisplay By the definition of total variation distance,
\begin{align}
s
:=
d_{\rm TV}(\pi^{\rm te},\pi^{\rm tr})
&=
\frac{1}{2}\sum_{k=1}^K |\Delta_k|
\nonumber\\
&=
\sum_{k=1}^K [\Delta_k]_+
=
\sum_{k=1}^K [-\Delta_k]_+,
\label{eq:proof-tv}
\end{align}
where $[a]_+ := \max\{a,0\}$ denotes the positive part of $a$.
If $s=0$, then $\pi^{\rm te}=\pi^{\rm tr}$ and hence
$R^{\rm te}(\theta)=R^{\rm tr}(\theta)$, so the result holds by taking
$A(\theta;\pi^{\rm tr},\pi^{\rm te})=0$.

Suppose now that $s>0$, and define
\[
q_k^+
:=
\frac{[\Delta_k]_+}{s},
\qquad
q_k^-
:=
\frac{[-\Delta_k]_+}{s},
\qquad k=1,\ldots,K.
\]
By \eqref{eq:proof-tv}, both $q^+$ and $q^-$ belong to
$\Delta_{K-1}$. Moreover, by definition,
\[
\Delta_k = s\left(q_k^+ - q_k^-\right).
\]
Therefore,
\begin{align}
R^{\rm te}(\theta)-R^{\rm tr}(\theta)
=
s
\left(
\sum_{k=1}^K q_k^+R_k(\theta)
-
\sum_{k=1}^K q_k^-R_k(\theta)
\right).
\label{eq:proof-direction}
\end{align}
Here, $q^+$ represents the normalized probability mass added to
subpopulations at test time, whereas $q^-$ represents the normalized
probability mass removed from subpopulations.

Since both $q^+$ and $q^-$ are probability vectors,
\[
R_{\min}(\theta)
\leq \sum_{k=1}^K q_k^{\sigma}R_k(\theta)
\leq R_{\max}(\theta),
\qquad \sigma\in\{+,-\}.
\]
Therefore,
\begin{align}
-
\bigl(R_{\max}(\theta)-R_{\min}(\theta)\bigr)
\leq
\sum_{k=1}^K q_k^+R_k(\theta)
-
\sum_{k=1}^K q_k^-R_k(\theta)
\leq
R_{\max}(\theta)-R_{\min}(\theta).
\label{eq:proof-risk-range}
\end{align}

\afterdisplay If $R_{\max}(\theta)=R_{\min}(\theta)$, all group risks are identical,
and \eqref{eq:proof-direction} is zero. Thus, the result holds by taking
$A(\theta;\pi^{\rm tr},\pi^{\rm te})=0$.

Otherwise, define
\begin{align}
A(\theta;\pi^{\rm tr},\pi^{\rm te})
:=
\frac{
\sum_{k=1}^K q_k^+R_k(\theta)
-
\sum_{k=1}^K q_k^-R_k(\theta)
}{
R_{\max}(\theta)-R_{\min}(\theta)
}.
\label{eq:shift-alignment}
\end{align}
By \eqref{eq:proof-risk-range},
\[
A(\theta;\pi^{\rm tr},\pi^{\rm te})\in[-1,1].
\]
Hence, $A$ measures the alignment between the direction of the mixture
shift and the group risks: $A>0$ indicates a shift toward higher-risk
subpopulations, whereas $A<0$ indicates a shift toward lower-risk
subpopulations.

Substituting \eqref{eq:shift-alignment} into
\eqref{eq:proof-direction} gives
\[
R^{\rm te}(\theta)-R^{\rm tr}(\theta)
=
d_{\rm TV}(\pi^{\rm te},\pi^{\rm tr})
\bigl(R_{\max}(\theta)-R_{\min}(\theta)\bigr)
A(\theta;\pi^{\rm tr},\pi^{\rm te}).
\]
Finally, since $|A(\theta;\pi^{\rm tr},\pi^{\rm te})|\leq 1$,
\[
\left|
R^{\rm te}(\theta)-R^{\rm tr}(\theta)
\right|
\leq
d_{\rm TV}(\pi^{\rm te},\pi^{\rm tr})
\bigl(R_{\max}(\theta)-R_{\min}(\theta)\bigr),
\]
which proves the result.
\end{proof}

\bigskip
The following corollary illustrates this shift-risk alignment more explicitly
in the two-subpopulation case.
\bigskip
\begin{corollary}[Two-subpopulation case]
\label{cor:two-subpopulation}

Suppose $K=2$ and $R_2(\theta)\geq R_1(\theta)$. Let
\[
\Delta
:=
\pi_2^{\rm te}-\pi_2^{\rm tr}.
\]
Then,
\[
R^{\rm te}(\theta)-R^{\rm tr}(\theta)
=
\Delta\bigl(R_2(\theta)-R_1(\theta)\bigr).
\]
Moreover,
\[
d_{\rm TV}(\pi^{\rm te},\pi^{\rm tr})
=
|\Delta|,
\]
and hence
\[
\left|
R^{\rm te}(\theta)-R^{\rm tr}(\theta)
\right|
=
d_{\rm TV}(\pi^{\rm te},\pi^{\rm tr})
\bigl(R_{\max}(\theta)-R_{\min}(\theta)\bigr).
\]
For $\Delta\neq0$ and $R_2(\theta)>R_1(\theta)$, the alignment factor in Proposition~\ref{prop:risk_variation}
satisfies
\[
A(\theta;\pi^{\rm tr},\pi^{\rm te})
=
\operatorname{sign}(\Delta).
\]
\end{corollary}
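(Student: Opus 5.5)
The plan is to specialize the identity from the proof of Proposition~\ref{prop:risk_variation} to $K=2$ and compute every quantity explicitly.

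\textbf{Risk difference.} Start from
\[
R^{\rm te}(\theta)-R^{\rm tr}(\theta)=\sum_{k=1}^2(\pi_k^{\rm te}-\pi_k^{\rm tr})R_k(\theta).
\]
Since both weight vectors lie in $\Delta_1$, we have $\pi_1^{\rm te}-\pi_1^{\rm tr}=-(\pi_2^{\rm te}-\pi_2^{\rm tr})=-\Delta$. The sum therefore collapses to $\Delta\bigl(R_2(\theta)-R_1(\theta)\bigr)$.

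\textbf{Total variation.} The same relation gives
\[
d_{\rm TV}(\pi^{\rm te},\pi^{\rm tr})=\tfrac12\bigl(|{-\Delta}|+|\Delta|\bigr)=|\Delta|.
\]
Because $R_2(\theta)\ge R_1(\theta)$, we have $R_{\max}(\theta)-R_{\min}(\theta)=R_2(\theta)-R_1(\theta)\ge 0$. Taking absolute values in the identity above then gives the equality
\[
\bigl|R^{\rm te}(\theta)-R^{\rm tr}(\theta)\bigr|=|\Delta|\bigl(R_2(\theta)-R_1(\theta)\bigr)=d_{\rm TV}(\pi^{\rm te},\pi^{\rm tr})\bigl(R_{\max}(\theta)-R_{\min}(\theta)\bigr).
\]
This shows the bound in Proposition~\ref{prop:risk_variation} is attained when $K=2$.

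\textbf{Alignment factor.} Assume $\Delta\neq 0$ and $R_2(\theta)>R_1(\theta)$, so $s=|\Delta|>0$ and the range of group risks is nonzero. In this regime $A$ is given by \eqref{eq:shift-alignment}, and we compute $q^\pm$ directly.

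\begin{itemize}
\item If $\Delta>0$: the only positive coordinate of $\pi^{\rm te}-\pi^{\rm tr}$ is the second, so $q^+=(0,1)$ and $q^-=(1,0)$. The numerator of \eqref{eq:shift-alignment} is $R_2-R_1$, giving $A=1$.
\item If $\Delta<0$: the roles swap, $q^+=(1,0)$ and $q^-=(0,1)$, and $A=-1$.
\end{itemize}

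Hence $A=\operatorname{sign}(\Delta)$. As a cross-check, whenever $A$ is uniquely determined it must equal $(R^{\rm te}-R^{\rm tr})/\bigl(s(R_{\max}-R_{\min})\bigr)=\Delta/|\Delta|$.

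\textbf{Main subtlety.} The $A$ in Proposition~\ref{prop:risk_variation} is only asserted to exist, so for $\operatorname{sign}(\Delta)$ to be meaningful we must identify it with the specific $A$ defined in \eqref{eq:shift-alignment}. The cross-check settles this: since the range of group risks and $s$ are both nonzero, $A$ is uniquely forced by the identity.
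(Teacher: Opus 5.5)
Your proposal is correct and takes the same route as the paper, which says only that the corollary follows by setting $K=2$ in Proposition~\ref{prop:risk_variation}; you carry out that specialization explicitly, computing $d_{\rm TV}$ and $q^{\pm}$ directly. Your observation that $A$ is uniquely forced by the identity when $s>0$ and $R_{\max}(\theta)>R_{\min}(\theta)$ is a detail the paper leaves implicit, and it is what makes ``the alignment factor'' well defined.
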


The result follows directly from Proposition~\ref{prop:risk_variation}
by setting $K=2$.

\bigskip
The preceding results concern a fixed predictor. We next examine how fitting
a predictor to an imbalanced distribution can lead to unequal group risks.
\bigskip

\begin{proposition}[Risk heterogeneity under imbalance-induced spurious correlation]
\label{prop:imbalance-spurious-risk}

Let $Y,S\in\{-1,+1\}$ and let $X=(Z,S)$, where $S$ denotes a binary
spurious feature and $Z$ collects the remaining features.
Let $P_1$ and $P_2$ be two subpopulation distributions over $(X,Y)$
such that
\[
S=Y \quad P_1\text{-a.s.},
\qquad
S=-Y \quad P_2\text{-a.s.},
\]
and assume that $Y$ is balanced under both $P_1$ and $P_2$.

For $\rho\in[1/2,1)$, define
\[
P_\rho
=
\rho P_1+(1-\rho)P_2.
\]
Then
\[
\operatorname{Corr}_{P_\rho}(S,Y)=2\rho-1.
\]

\afterdisplay For $g\in\{1,2\}$, define
\[
R_g(f)
=
\mathbb E_{ P_g}
\left[\ell(f(X),Y)\right],
\]
and let
\[
f_\rho
\in
\arg\min_{f\in\mathcal F}
\left\{
\rho R_1(f)+(1-\rho)R_2(f)
\right\}.
\]
Here $\mathcal F$ is a fixed hypothesis class; the group risks are assumed finite
and the minimum is assumed to be attained.

\afterdisplay Then, for any $1/2\leq\rho_1<\rho_2<1$,
\[
R_2(f_{\rho_1})-R_1(f_{\rho_1})
\leq
R_2(f_{\rho_2})-R_1(f_{\rho_2}).
\]

\afterdisplay Moreover, if
\[
R_1(f_{1/2})=R_2(f_{1/2}),
\]
then, defining
\[
R_{\max}(f)=\max_{g\in\{1,2\}}R_g(f),
\qquad
R_{\min}(f)=\min_{g\in\{1,2\}}R_g(f),
\]
the group risk gap
\[
R_{\max}(f_\rho)-R_{\min}(f_\rho)
\]
is non-decreasing in $\rho$.
\end{proposition}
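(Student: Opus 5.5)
The proof has three parts: a direct covariance computation for the correlation, a standard "exchange" argument between two optimizers for the monotonicity of the gap, and a short sign argument for the final claim.

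\textbf{Correlation.} Under $P_\rho$, $Y$ is balanced, since it is balanced under both components. Hence $\mathbb E[Y]=0$ and $\operatorname{Var}(Y)=1$.

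For $S$, note that $\mathbb E_{P_1}[S]=\mathbb E_{P_1}[Y]=0$ and $\mathbb E_{P_2}[S]=-\mathbb E_{P_2}[Y]=0$. So $\mathbb E[S]=0$, and $\operatorname{Var}(S)=1$ because $S^2=1$.

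Therefore $\operatorname{Corr}(S,Y)=\mathbb E_{P_\rho}[SY]$. Since $SY=1$ $P_1$-a.s. and $SY=-1$ $P_2$-a.s., this equals $\rho-(1-\rho)=2\rho-1$.

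\textbf{Monotonicity of $D(f):=R_2(f)-R_1(f)$.} Write the objective as
\[
L_\rho(f)=\rho R_1(f)+(1-\rho)R_2(f)=R_2(f)-\rho D(f).
\]
Optimality of $f_{\rho_1}$ for $L_{\rho_1}$ and of $f_{\rho_2}$ for $L_{\rho_2}$ gives
\begin{align*}
R_2(f_{\rho_1})-\rho_1 D(f_{\rho_1}) &\le R_2(f_{\rho_2})-\rho_1 D(f_{\rho_2}),\\
R_2(f_{\rho_2})-\rho_2 D(f_{\rho_2}) &\le R_2(f_{\rho_1})-\rho_2 D(f_{\rho_1}).
\end{align*}
Adding the two inequalities cancels the $R_2$ terms and leaves $(\rho_2-\rho_1)\bigl(D(f_{\rho_2})-D(f_{\rho_1})\bigr)\ge0$. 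Since $\rho_2>\rho_1$, this gives $D(f_{\rho_1})\le D(f_{\rho_2})$.

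Finiteness of the group risks is needed so that the additions and cancellations are legitimate. The inequality holds for any selection of minimizers, because only their optimality is used.

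\textbf{Non-decreasing gap.} By the previous step, for every $\rho\ge1/2$ we have $D(f_\rho)\ge D(f_{1/2})=0$. Hence
\[
R_{\max}(f_\rho)-R_{\min}(f_\rho)=|D(f_\rho)|=D(f_\rho),
\]
which is non-decreasing in $\rho$.

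The main delicate point is the case $\rho_1=1/2$: it must be covered by the same exchange argument with the specific $f_{1/2}$ used in the hypothesis. This is fine, since the argument applies to whichever minimizers are chosen. Beyond that the proof is routine.
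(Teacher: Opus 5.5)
Your proof is correct and follows the paper's argument. The correlation step is the same mean-zero, unit-variance computation. The monotonicity step uses the same exchange of the two optimality inequalities: you add them after writing $L_\rho = R_2 - \rho D$, while the paper compares $L_{\rho_2}-L_{\rho_1}$ at the two minimizers, which is the same algebra. The final claim uses the same sign argument anchored at $D(f_{1/2})=0$.
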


\bigskip

\begin{proof}
We first compute the correlation between $S$ and $Y$, then compare the group
risks at two mixture proportions.
Since $Y$ is balanced under both subpopulations and $S=Y$ under $P_1$ while
$S=-Y$ under $P_2$, both variables are balanced under $P_\rho$.
Thus, their means are zero and their variances are one, giving
\[
\operatorname{Corr}_{P_\rho}(S,Y)
=\mathbb E_{P_\rho}[SY]
=\rho\,\mathbb E_{P_1}[SY]+(1-\rho)\,\mathbb E_{P_2}[SY]
=2\rho-1.
\]

\afterdisplay Define
$L_\rho(f)=\rho R_1(f)+(1-\rho)R_2(f)$, and let
$f_1=f_{\rho_1}$ and $f_2=f_{\rho_2}$ for
$1/2\leq\rho_1<\rho_2<1$.
By optimality,
\[
L_{\rho_1}(f_1)\leq L_{\rho_1}(f_2),
\qquad
L_{\rho_2}(f_2)\leq L_{\rho_2}(f_1).
\]
Combining these inequalities yields
\[
L_{\rho_2}(f_2)-L_{\rho_1}(f_2)
\leq L_{\rho_2}(f_1)-L_{\rho_1}(f_1).
\]
For any $f\in\mathcal F$,
\[
L_{\rho_2}(f)-L_{\rho_1}(f)
=(\rho_2-\rho_1)\bigl(R_1(f)-R_2(f)\bigr).
\]
Since $\rho_2-\rho_1>0$, we obtain
\[
R_2(f_{\rho_1})-R_1(f_{\rho_1})
\leq R_2(f_{\rho_2})-R_1(f_{\rho_2}).
\]

\afterdisplay Finally, if $R_1(f_{1/2})=R_2(f_{1/2})$, this monotonicity implies
$R_2(f_\rho)-R_1(f_\rho)\geq0$ for every $\rho\in[1/2,1)$.
Consequently,
\[
R_{\max}(f_\rho)-R_{\min}(f_\rho)
=R_2(f_\rho)-R_1(f_\rho),
\]
which is non-decreasing in $\rho$ by the preceding inequality.
\end{proof}

\subsection{Method details}
\label{appendix:method}
We first describe the inputs, group estimation, and data preparation used for
classification. We then detail the optimization procedures for DR-TFM (GR)
and DR-TFM (MS), followed by the query scaling architecture and an extension
to regression.

\subsubsection{Inputs and data preparation}
\label{appendix:method-preparation}
The data preparation below applies to the real-data variants without true
group labels. Appendix~\ref{appendix:ms-sources} specifies the variants with
true groups, and Appendix~\ref{appendix:simulation} describes the simulations.

\paragraph{Inputs.}
Both methods take a training dataset
$\mathcal D=\{(x_i,y_i)\}_{i=1}^{n}$ and a pretrained TabPFN
$f_{\theta_0}$. Training response labels $y_i$ are available, but true group
labels are not required. A separate validation set is used for selecting
hyperparameters and the adapted model; its samples are not included in the
context or the adaptation objective.

\paragraph{Embeddings for group estimation.}
Using the full training dataset $\mathcal D$ as context, we extract a frozen
pretrained TabPFN embedding for each training input, averaging across ensemble members.
We standardize each coordinate using its training mean and standard deviation
to obtain the features $\widetilde z_i$ used for clustering.
The same standardization is applied to validation embeddings.

\paragraph{Class-wise group estimation.}
For each response class $c\in\mathcal Y$, we fit a separate clustering rule
$C_c$ with $G_c$ clusters to
$\{\widetilde z_i:y_i=c\}$.
Each training sample is assigned to the pair
$(y_i,C_{y_i}(\widetilde z_i))$.
We reindex these pairs as $\hat g_i\in[G]$, where
$G=\sum_{c\in\mathcal Y}G_c$.
The main tables use DR-TFM (GR; $k$-means), which assigns each training
sample to the nearest fitted centroid within its class.
For DR-TFM (GR), the number of clusters per class is selected following GEORGE
\citep{sohoni2020no} (Appendix~\ref{appendix:hyperparams}).
The assignments remain fixed throughout adaptation.

The additional GMM variant fits a diagonal-covariance GMM within each class
and assigns each training sample to the component with the largest posterior
probability. It uses the same criterion to select the number of clusters.

\paragraph{Context and loss-evaluation sets.}
We split $\mathcal D$ into a context $\mathcal D^{(c)}$ and a loss-evaluation
set $\mathcal D^{(e)}$ in a $70/30$ ratio, stratifying by $\hat g_i$.
The two sets are disjoint and both retain response labels.
After this split, we recompute embeddings for adaptation using
$\mathcal D^{(c)}$ as context, writing $z(x)$ for the frozen pretrained
embedding of input $x$.
The context samples and their labels remain fixed during adaptation and prediction.
DR-TFM (GR) uses the response labels in $\mathcal D^{(e)}$ to compute group
losses. DR-TFM (MS) uses the inputs in $\mathcal D^{(e)}$ together with soft
labels obtained from source-conditioned pretrained predictions, as described below.

\paragraph{Adapted parameters.}
In both classification methods, $\theta_L$ collects the pretrained query scaling
network parameters selected for fine-tuning. The feature map and all remaining
parameters constitute $\theta_F$ and are kept fixed; the architecture is specified in
Appendix~\ref{appendix:scaling-implementation}.
The algorithms below describe $T$ optimization iterations.
We write $\theta^{(t)}=(\theta_F,\theta_L^{(t)})$ for the full model parameters
after $t$ updates of $\theta_L$.
We select the adapted model with the highest worst-group validation accuracy
over estimated groups. Training schedules and model selection details are
provided in Appendices~\ref{appendix:protocol} and~\ref{appendix:hyperparams}.
For either method, prediction for a new input $x$ uses
$f_\theta(x;\mathcal D^{(c)})$ with the selected parameters and the retained
context. It requires neither a group assignment for $x$ nor adversarial updates.

\subsubsection{Optimization for DR-TFM (GR)}
\label{appendix:alg}
\paragraph{Group losses.}
Using the fixed assignments from Appendix~\ref{appendix:method-preparation},
define the loss-evaluation indices for each estimated group $k$ as
$\widehat{\mathcal I}_k=\{i:(x_i,y_i)\in\mathcal D^{(e)},\hat g_i=k\}$.
Let $n_k=|\widehat{\mathcal I}_k|>0$ denote its size.
Its empirical joint distribution is
$\widehat P_k^{(e)}=n_k^{-1}\sum_{i\in\widehat{\mathcal I}_k}\delta_{(x_i,y_i)}$.
Substituting the ambiguity set in~\eqref{eq:gr_ambiguity} into
\eqref{eq:general_dr_tabpfn} yields
\begin{equation}
\min_{\theta_L}
\max_{q\in\Delta_{G-1}}
\sum_{k=1}^{G} q_k
\left[
\frac{1}{|\widehat{\mathcal I}_k|}
\sum_{i\in\widehat{\mathcal I}_k}
\ell\!\left(f_\theta(x_i;\mathcal D^{(c)}),y_i\right)
\right].
\label{eq:weighted_dr_tabpfn}
\end{equation}
Here, $q_k$ is the weight assigned to estimated group $k$, and the bracketed
quantity is its mean loss, denoted by $L_k(\theta)$.
For classification, $\ell$ is cross-entropy with the observed response label.
Maximizing over $q\in\Delta_{G-1}$ selects the largest empirical group loss.

\paragraph{Alternating updates.}
We initialize $q$ uniformly and $\theta_L$ from the pretrained query scaling network.
Each iteration first computes the mean loss of each estimated group, then
increases the weights of groups with larger losses by exponentiated gradient
ascent. Holding these updated weights fixed, we update $\theta_L$ to reduce the
weighted group losses.
Algorithm~\ref{alg:dr_tabpfn} summarizes the basic updates for
\eqref{eq:weighted_dr_tabpfn}.
In our experiments, we use the group-adjusted DRO variant of
\citet{sagawa2019distributionally}, with group-size adjustments
$C/\sqrt{n_k}$, where we set $C=5$.
Further optimization settings are provided in Appendix~\ref{appendix:hyperparams}.

\begin{algorithm}[t]
\caption{DR-TFM (GR) for~\eqref{eq:weighted_dr_tabpfn}}
\label{alg:dr_tabpfn}
\begin{algorithmic}[1]
\State \textbf{Input:} training dataset $\mathcal D$; validation set; pretrained TabPFN $f_{\theta_0}$;
    group estimation rule; learning rates
    $\eta_{\theta_L},\eta_q$; update steps $T$
\State Estimate groups $\hat g_i\in[G]$ from frozen pretrained embeddings
\State Prepare the fixed context $\mathcal D^{(c)}$ and loss-evaluation set
    $\mathcal D^{(e)}$ as in Appendix~\ref{appendix:method-preparation}
\State Obtain frozen embeddings using $\mathcal D^{(c)}$ as context
\State Define $\widehat{\mathcal I}_k
    \gets
    \{i:(x_i,y_i)\in\mathcal D^{(e)},\hat g_i=k\}$ and
    $n_k\gets|\widehat{\mathcal I}_k|$ for $k\in[G]$
\State Initialize $q^{(0)}\gets (1/G,\ldots,1/G)$ and $\theta_L^{(0)}$ from the
    pretrained query scaling network; freeze all other parameters

\For{$t=0,\ldots,T-1$}
    \State Compute the group-wise losses
    \[
    L_k(\theta^{(t)})
    \gets
    \frac{1}{n_k}
    \sum_{i\in\widehat{\mathcal I}_k}
    \ell\!\left(
    f_{\theta^{(t)}}(x_i;\mathcal D^{(c)}),
    y_i
    \right)
    \]
    \State Update group weights by exponentiated gradient ascent:
    \[
    q_k^{(t+1)}
    \propto
    q_k^{(t)}
    \exp\!\left(\eta_q L_k(\theta^{(t)})\right),
    \qquad k\in[G]
    \]
    \State Normalize $q^{(t+1)}$ such that $\sum_{k=1}^{G}q_k^{(t+1)}=1$
    \State Update the fine-tuning parameters:
    \[
    \theta_L^{(t+1)}
    \gets
    \theta_L^{(t)}
    -
    \eta_{\theta_L}
    \nabla_{\theta_L}
    \sum_{k=1}^{G}
    q_k^{(t+1)}L_k(\theta)
    \bigg|_{\theta_L=\theta_L^{(t)}}
    \]
\EndFor
\State Select an iterate using validation, as in Appendix~\ref{appendix:method-preparation}
\State \textbf{Output:} selected adapted TabPFN and fixed context $\mathcal D^{(c)}$
\end{algorithmic}
\end{algorithm}

\subsubsection{Optimization for DR-TFM (MS)}
\label{appendix:alg-ms}
\paragraph{Source construction.}
We first prepare $\mathcal D^{(c)}$ and $\mathcal D^{(e)}$ as in
Appendix~\ref{appendix:method-preparation}.
We then form labeled source subsets $\mathcal S_1,\ldots,\mathcal S_M$
from $\mathcal D^{(c)}$.
Here, $M$ denotes the number of retained sources.
For MS-C, we apply $k$-means to the fixed context embeddings jointly across
response classes. Each retained cluster forms a source; clusters containing
only one response class are excluded.
For MS-R, we construct each source independently by sampling without replacement
from the full context. All context samples are available again when constructing
the next source, so different sources may overlap.
The initial source count and subset sizes are specified in
Appendix~\ref{appendix:ms-sources}.

\paragraph{Source conditional predictions.}
For each source $\mathcal S_k$, we condition the frozen pretrained TabPFN on
its inputs and response labels and evaluate every input $x_i$ in
$\mathcal D^{(e)}$:
$\widehat P^{(k)}_{Y\mid X}(\cdot\mid x_i)=f_{\theta_0}(x_i;\mathcal S_k)$.
Thus, each source provides a class probability vector for every loss-evaluation
input. We compute these probabilities at the original inputs and keep them
and the source subsets fixed throughout adaptation.
Given mixture weights $\beta$, the soft label is
$y^\circ(\beta,x_i)=\sum_{k=1}^{M}\beta_k
\widehat P^{(k)}_{Y\mid X}(\cdot\mid x_i)$.
The mixture weights represent uncertainty over the relative contributions
of these source-conditioned predictions.
For a predicted probability vector $p$ and a soft label $v$, we use
$\ell(p,v)=-\sum_{y\in\mathcal Y}v_y\log p_y$.
The adapted model always uses the full context $\mathcal D^{(c)}$.

\paragraph{Surrogate objective.}
The Wasserstein distance in~\eqref{eq:ms_ambiguity} uses transport cost
$c(x,x')=\|z(x)-z(x')\|_2$, using the adaptation embeddings defined in
Appendix~\ref{appendix:method-preparation}.
Let $f_{\theta,Z}(z')$ denote the decoder prediction at the input embedding $z'$,
with $\mathcal D^{(c)}$ fixed as context, so that
$f_{\theta,Z}(z(x))=f_\theta(x;\mathcal D^{(c)})$.
We adopt the surrogate objective proposed by \citet{kim2026distributionally}
and optimize it over the query scaling parameters:
\begin{equation*}
\min_{\theta_L}
\sup_{\substack{\beta\in\Delta_{M-1}\\\|\beta-\bar\beta\|_2\leq\epsilon_2}}
\mathbb E_{X\sim\widehat P_X^{(e)}}
\left[
\sup_{\|z'-z(X)\|_2\leq\epsilon_1}
\ell\!\left(f_{\theta,Z}(z'),y^\circ(\beta,X)\right)
\right].
\end{equation*}
Here, the expectation is over the empirical input distribution of
$\mathcal D^{(e)}$. The mixture weights $\beta$ are shared across samples,
while each sample has its own feature perturbation.

\paragraph{Alternating updates.}
We initialize $\theta_L$ from the pretrained query scaling network and
$\beta$ at the uniform nominal vector $\bar\beta$.
For each minibatch, we initialize each perturbed embedding at $z(x_i)$.
Holding $\theta_L$ and $\beta$ fixed, we take a normalized gradient ascent step
on each embedding and project it onto its Euclidean ball of radius $\epsilon_1$.
We then increase the shared weights of sources with larger average losses
and enforce the simplex and radius constraints on $\beta$.
Finally, holding the perturbed embeddings and the updated $\beta$ fixed,
we update $\theta_L$ using their soft-label loss.
The updates of $\theta_L$ and $\beta$ continue across minibatches.

Algorithm~\ref{alg:dr_tabpfn_ms} approximates the inner maximization with one
feature step and one mixture step per model update, as used in our experiments.
The operator $\Pi_{\mathbb B_i}$ projects onto
$\mathbb B_i=\{u:\|u-z(x_i)\|_2\leq\epsilon_1\}$.
For the mixture update, $\mathcal P_\beta$ first projects onto
$\Delta_{M-1}$ and then contracts the result toward $\bar\beta$ if its distance
exceeds $\epsilon_2$. This preserves the simplex constraint while satisfying
the radius constraint.
The normalized gradient $\widehat\nabla$ has unit Euclidean norm for each
sample; a zero gradient leaves its embedding unchanged.

\begin{algorithm}[t]
\caption{DR-TFM (MS)}
\label{alg:dr_tabpfn_ms}
\begin{algorithmic}[1]
\State \textbf{Input:} training dataset $\mathcal D$; validation set; pretrained TabPFN $f_{\theta_0}$;
    source construction rule;
    radii $\epsilon_1,\epsilon_2$; learning rates
    $\eta_{\theta_L},\eta_\beta,\eta_z$; update steps $T$
\State Prepare the fixed context $\mathcal D^{(c)}$ and loss-evaluation set
    $\mathcal D^{(e)}$ as in Appendix~\ref{appendix:method-preparation}
\State Obtain frozen embeddings using $\mathcal D^{(c)}$ as context
\State Construct source subsets $\mathcal S_1,\ldots,\mathcal S_M\subseteq\mathcal D^{(c)}$
\State Compute fixed source predictions
    $\widehat P^{(k)}_{Y\mid X}(\cdot\mid x_i)\gets f_{\theta_0}(x_i;\mathcal S_k)$
    for every $(x_i,y_i)\in\mathcal D^{(e)}$ and $k\in[M]$
\State Initialize $\beta^{(0)}\gets\bar\beta=(1/M,\ldots,1/M)$ and $\theta_L^{(0)}$ from
    the pretrained query scaling network; freeze all other parameters

\For{$t=0,\ldots,T-1$}
    \State Draw minibatch indices $B$ from $\mathcal D^{(e)}$ and set
    $z'_i\gets z(x_i)$ for each $i\in B$
    \State Form the soft labels
    $y^{\circ}(\beta^{(t)},x_i)\gets\sum_{k=1}^{M}\beta_k^{(t)}
    \widehat P^{(k)}_{Y\mid X}(\cdot\mid x_i)$
    \State Ascend on each embedding and enforce its radius constraint, for $i\in B$:
    \[
    z'_i\gets\Pi_{\mathbb B_i}
    \Bigl(z'_i+\eta_z\widehat\nabla_{z'_i}\,
    \ell\bigl(f_{\theta^{(t)},Z}(z'_i),y^{\circ}(\beta^{(t)},x_i)\bigr)\Bigr)
    \]
    \State Ascend on the shared mixture weights and enforce their constraints:
    \[
    \begin{aligned}
    \beta^{(t+1)}&\gets
    \mathcal P_\beta
    \bigl(\beta^{(t)}+\eta_\beta g\bigr),\\
    g_k&=\frac{1}{|B|}\sum_{i\in B}
    \ell\bigl(f_{\theta^{(t)},Z}(z'_i),
    \widehat P^{(k)}_{Y\mid X}(\cdot\mid x_i)\bigr),\quad k\in[M].
    \end{aligned}
    \]
    \State Update $\theta_L$, holding $z'_i$ and $\beta^{(t+1)}$ fixed:
    \[
    \theta_L^{(t+1)}\gets\theta_L^{(t)}-\eta_{\theta_L}\nabla_{\theta_L}
    \frac{1}{|B|}\sum_{i\in B}
    \ell\bigl(f_{\theta,Z}(z'_i),y^{\circ}(\beta^{(t+1)},x_i)\bigr)
    \bigg|_{\theta_L=\theta_L^{(t)}}
    \]
\EndFor
\State Select an iterate using validation, as in Appendix~\ref{appendix:method-preparation}
\State \textbf{Output:} selected adapted TabPFN and fixed context $\mathcal D^{(c)}$
\end{algorithmic}
\end{algorithm}

\subsubsection{Query scaling architecture}
\label{appendix:scaling-implementation}
We describe the architecture of the scaling function in \eqref{eq:query-scaling}.
Let $g_{\theta_L}$ denote the small neural network used to construct $s_{\theta_L}$.
It takes a projected query
$u\in\mathbb R^{r}$ as input and returns a vector of the same dimension.
The network consists of two linear layers, each including a bias, with a GELU activation
between them:
\begin{equation*}
\mathbb R^{r}
\xrightarrow{\text{Linear}}\mathbb R^{64}
\xrightarrow{\text{GELU}}\mathbb R^{64}
\xrightarrow{\text{Linear}}\mathbb R^{r}.
\end{equation*}
Equivalently,
\begin{equation*}
g_{\theta_L}(u)=W_2\operatorname{GELU}(W_1u+a_1)+a_2,
\end{equation*}
where $W_1\in\mathbb R^{64\times r}$, $a_1\in\mathbb R^{64}$,
$W_2\in\mathbb R^{r\times64}$, and $a_2\in\mathbb R^{r}$.
In the TabPFN classification model used in our main experiments, $r=64$ and the decoder has
six attention heads. The same network is applied separately to each head's query vector.

The network output is converted into a coordinate-wise modulation through
$\mathbf 1+\tanh(g_{\theta_L}(u))$. This modulation multiplies the decoder's existing
base scaling. As in the main text, we omit the head index and write
$b(n_c)\in\mathbb R^{r}$ for the base scaling of one head, computed from
$\log n_c$ by a separate frozen pretrained network.
The combined scaling function is
\begin{equation*}
s_{\theta_L}(u)
=\frac{1}{\sqrt{r}}\,b(n_c)\odot\bigl[\mathbf 1+\tanh(g_{\theta_L}(u))\bigr],
\end{equation*}
where $\mathbf 1$ is the vector of ones and $\tanh$ is applied element-wise.
The parameters of the base scaling network are kept fixed. Thus, for a fixed context size $n_c$,
fine-tuning changes only the query-dependent modulation, while $b(n_c)$ remains
unchanged. We suppress the dependence of $s_{\theta_L}$ on $n_c$ in the main text.
We include the standard attention normalization factor $1/\sqrt{r}$ in
$s_{\theta_L}$ for notational convenience.

In both simulated and real-data classification experiments with TabPFN,
we update both linear layers of the query scaling network,
with $\theta_L=(W_1,a_1,W_2,a_2)$.
This gives $8{,}320$ trainable parameters, approximately $0.016\%$ of the
pretrained model's parameters.
The pretrained TabPFN feature map, the key and query projections, the base scaling network, and all other
decoder parameters are kept fixed.

\subsubsection{Illustration of query scaling}
\label{appendix:query-scaling}
Figure~\ref{fig:query-scaling} illustrates the effect of a uniform query scale.
For fixed query--key scores, a smaller positive uniform scale produces a more diffuse
attention distribution, whereas a larger scale concentrates attention on the context
samples with the highest scores.

The learned scaling function in DR-TFM is coordinate-wise,
so it can also change the relative query--key scores.
Figure~\ref{fig:simulation} examines the learned attention distributions empirically.

\begin{figure}[H]
    \centering
    \includegraphics[width=\textwidth, trim=0bp 0bp 0bp 610bp, clip]{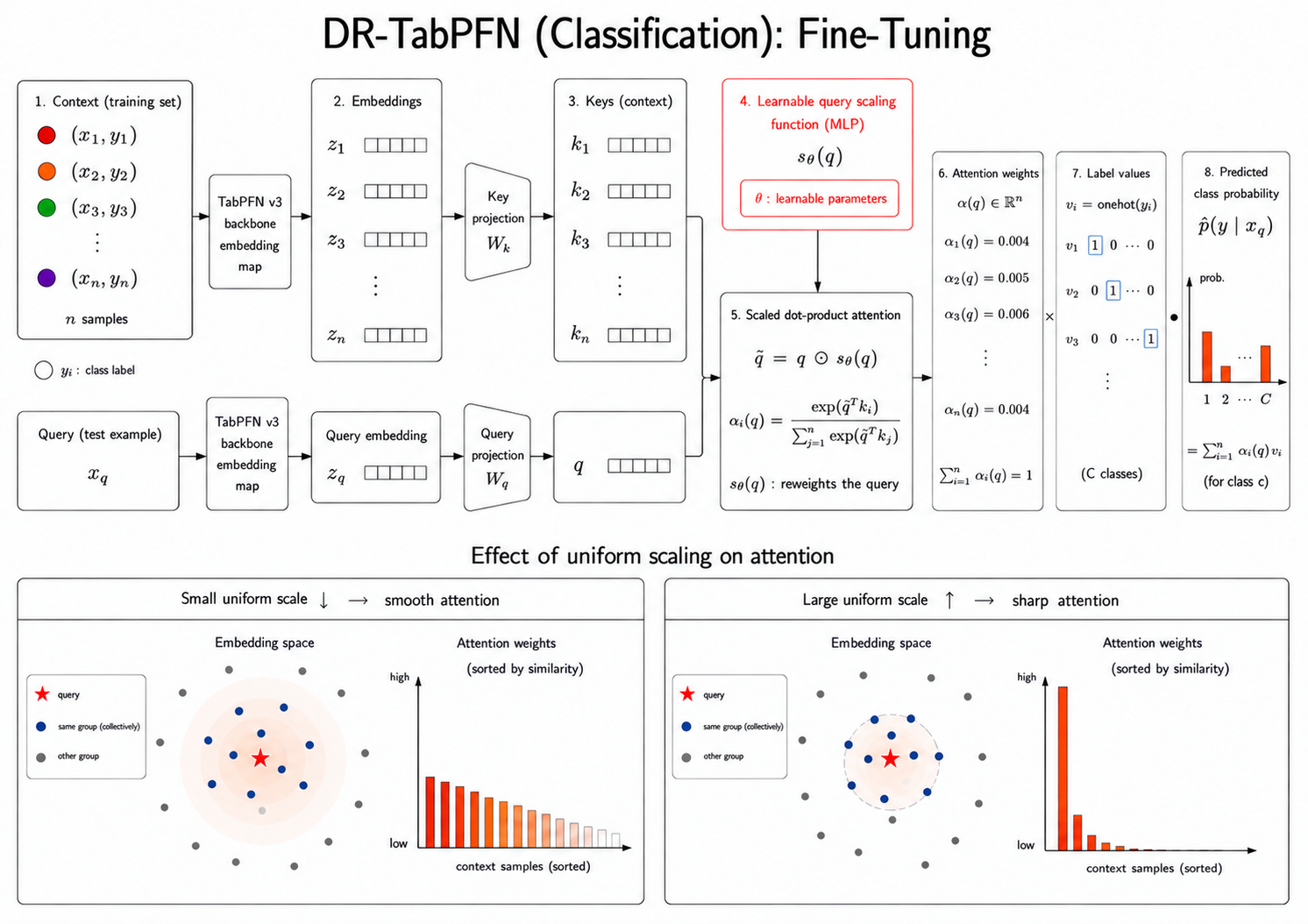}
    \caption{Schematic attention distributions under smaller and larger positive
    uniform query scales.}
    \label{fig:query-scaling}
\end{figure}

\subsubsection{Regression}
\label{appendix:regression}
DR-TFM also applies to regression. The pretrained TabPFN
regression model used here does not include a query-scaling network.
We therefore fine-tune the first linear layer of its output network, keeping
the pretrained TabPFN feature map and the second linear layer fixed.
The objective uses squared-error loss in place of the classification loss.

Given a query embedding $z\in\mathbb R^{512}$, the regression output network produces
logits for $B=5{,}000$ response buckets. It consists of two linear layers, each
including a bias, with a GELU activation between them:
\begin{equation*}
\mathbb R^{512}
\xrightarrow{\text{Linear (updated)}}\mathbb R^{1024}
\xrightarrow{\text{GELU}}\mathbb R^{1024}
\xrightarrow{\text{Linear (fixed)}}\mathbb R^{B}.
\end{equation*}
Equivalently,
\begin{equation*}
t_\theta=W_2\operatorname{GELU}(W_1z+a_1)+a_2,
\qquad p_\theta=\operatorname{softmax}(t_\theta)\in\Delta_{B-1}.
\end{equation*}
The trainable parameters are $\theta_L=(W_1,a_1)$, with $W_1\in\mathbb R^{1024\times512}$ and
$a_1\in\mathbb R^{1024}$, while $W_2\in\mathbb R^{B\times1024}$ and
$a_2\in\mathbb R^{B}$ are fixed.
This updates $525{,}312$ parameters, approximately $0.90\%$ of the
pretrained regression model's $58.3$ million parameters.
The prediction is $\hat y_\theta=\sum_{b=1}^{B}p_{\theta,b}\mu_b$,
where $\mu_b$ is the conditional mean of response bucket $b$.
Figure~\ref{fig:regression-architecture} summarizes the architecture.

\begin{figure}[H]
    \centering
    \includegraphics[width=\textwidth]{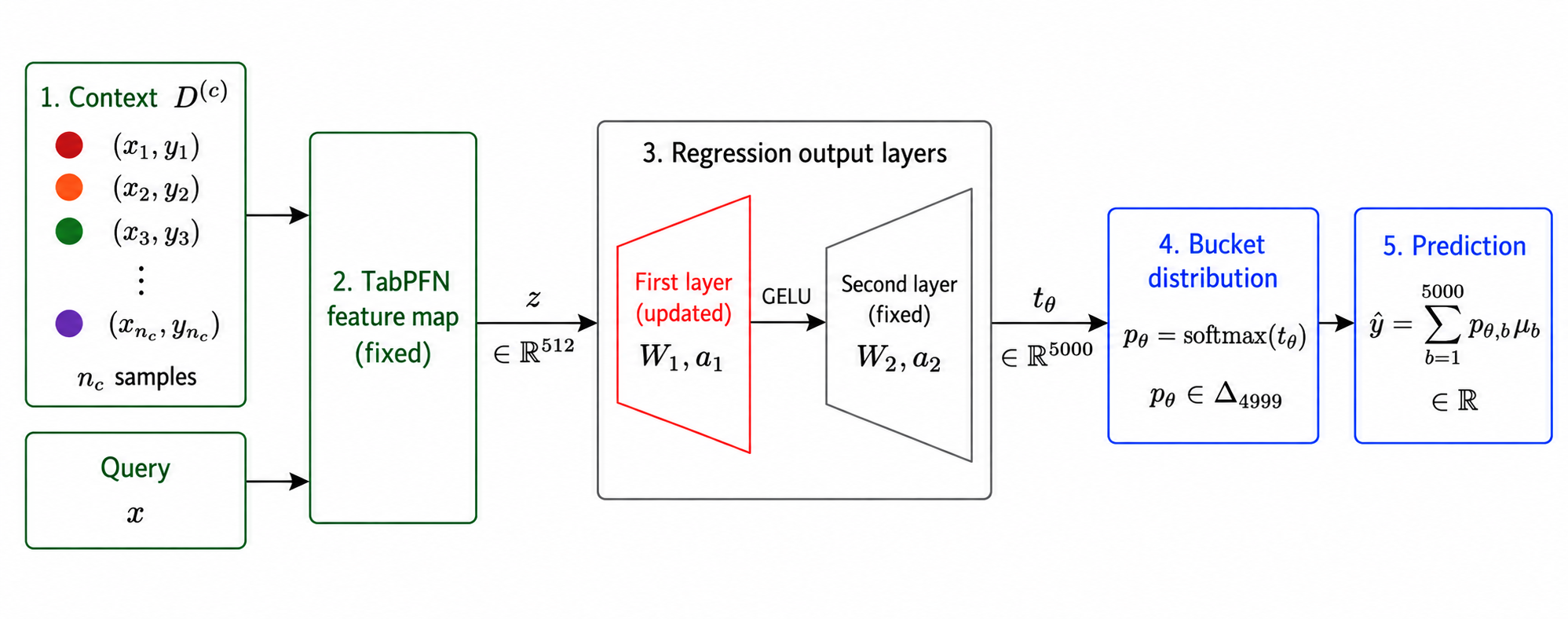}
    \caption{Regression architecture. Only the first linear layer of the output network
    is updated; the pretrained TabPFN feature map and the second linear layer remain fixed.
    Here $\mu_b$ denotes the conditional mean of response bucket $b$.}
    \label{fig:regression-architecture}
\end{figure}

\paragraph{Regression simulation.}
We draw $A\sim\operatorname{Bernoulli}(1/2)$ and generate
\[
\begin{gathered}
Y\mid A\sim\mathcal N\!\left(m_p(2A-1),0.6^2\right),
\quad
X_1\mid A\sim\mathcal N\!\left(0.8(2A-1),0.5^2\right),
\\
X_2=Y+\varepsilon,\qquad \varepsilon\sim\mathcal N(0,0.8^2),
\end{gathered}
\]
where $Y$ and $X_1$ are conditionally independent given $A$, and $\varepsilon$
is independent of $(A,Y,X_1)$. We set $m_p=0.6\Phi^{-1}(p)$, where $\Phi$ is the
standard normal cumulative distribution function. The four evaluation groups
are defined by $(A,\mathbf 1\{Y\geq0\})$, with probabilities
$p/2$, $(1-p)/2$, $(1-p)/2$, and $p/2$.
We vary the training and validation value of $p$ over $\{0.5,0.6,0.7,0.8,0.9\}$
and fix $p=0.5$ at test time. In this regression design, changing $p$ changes
both the group proportions and the distributions within groups.

Each run uses $2{,}000$ training, $1{,}000$ validation, and $2{,}000$ test samples.
We estimate four groups by fitting a two-component GMM to the input features
separately within each response-sign stratum, without using $A$.
Figure~\ref{fig:sim-regression} reports mean squared error (MSE) and mean
absolute error (MAE), averaged over $50$ runs. Mean denotes the average of the
four group errors, and Worst group denotes their maximum.
Consistent with the classification results, increasing training imbalance leads
to a larger deterioration in TabPFN's worst-group performance than in its mean
performance. DR-TFM (GR; GMM) mitigates this deterioration in both MSE and MAE.

\begin{figure}[H]
    \centering
    \includegraphics[width=0.9\textwidth]{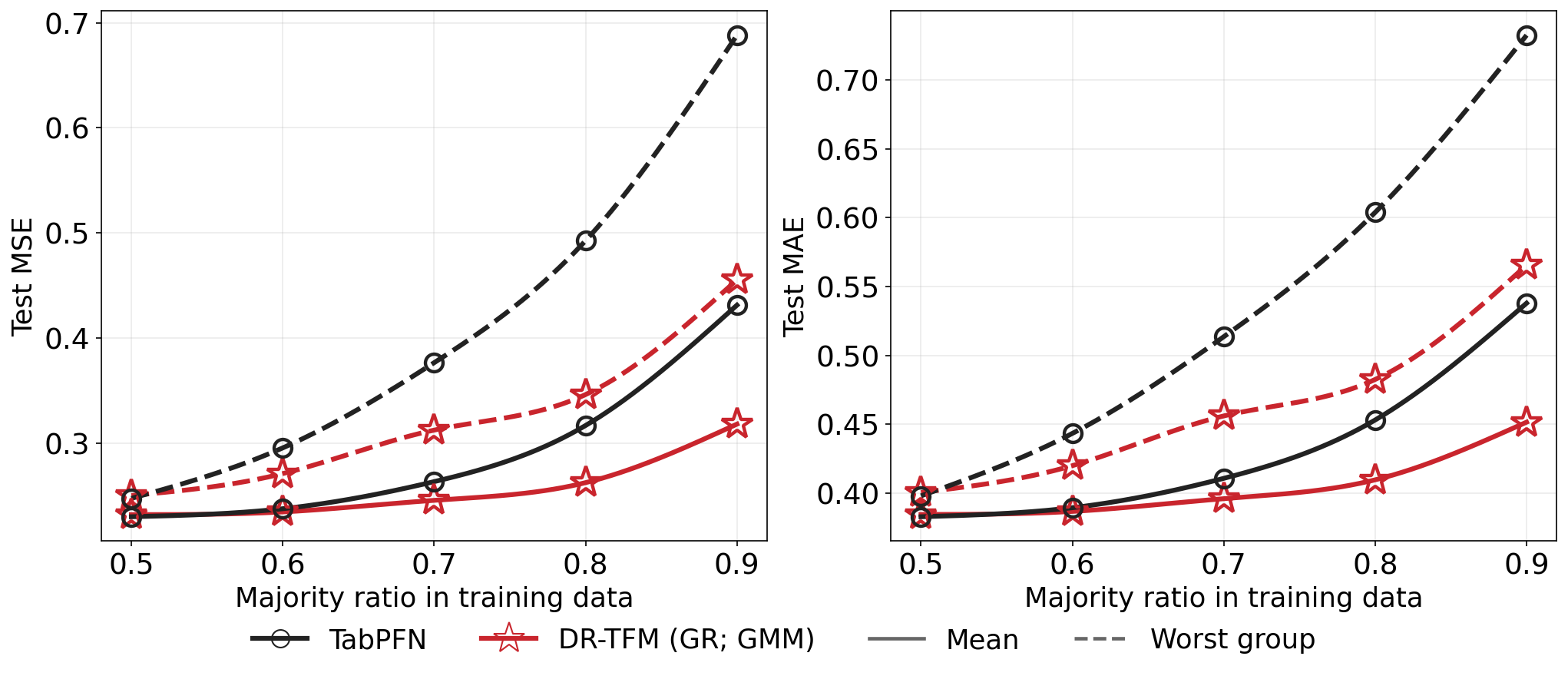}
    \caption{Regression results as the proportion of majority-group samples in training increases.
    Panels show test MSE (left) and MAE (right). Solid and dashed curves denote
    the mean and maximum group errors, respectively.}
    \label{fig:sim-regression}
\end{figure}

\subsection{Experimental setup}
\label{appendix:setup}
We describe the simulation and real-data settings, evaluation metrics, baseline
methods, hyperparameter selection, and an ablation on the number of clusters.
The real-data benchmark and evaluation protocol follow \citet{tong2025latent}.

\subsubsection{Simulation details}
\label{appendix:simulation}
\paragraph{Data generation and groups.}
We generate each sample by first drawing a binary attribute
$A\sim\operatorname{Bernoulli}(1/2)$, then setting $Y=A$ with probability
$p$ and $Y=1-A$ otherwise. Conditional on $(A,Y)$, we independently draw
the two features as
\[
X_1\mid A\sim\mathcal N\!\left(\delta_a(2A-1),\sigma_a^2\right),
\qquad
X_2\mid Y\sim\mathcal N\!\left(\delta_y(2Y-1),\sigma_y^2\right).
\]
For the attention analysis in Figure~\ref{fig:simulation}, we use
$\delta_a=1.0$, $\sigma_a=0.5$, $\delta_y=0.5$, and $\sigma_y=0.6$.
The four groups are $G_1=(A=0,Y=0)$, $G_2=(A=1,Y=0)$,
$G_3=(A=0,Y=1)$, and $G_4=(A=1,Y=1)$, with probabilities
$p/2$, $(1-p)/2$, $(1-p)/2$, and $p/2$, respectively.
The class probabilities remain balanced for every $p$; the imbalance
is across groups rather than labels.

\begin{figure}[!htbp]
    \centering
    \includegraphics[width=\textwidth]{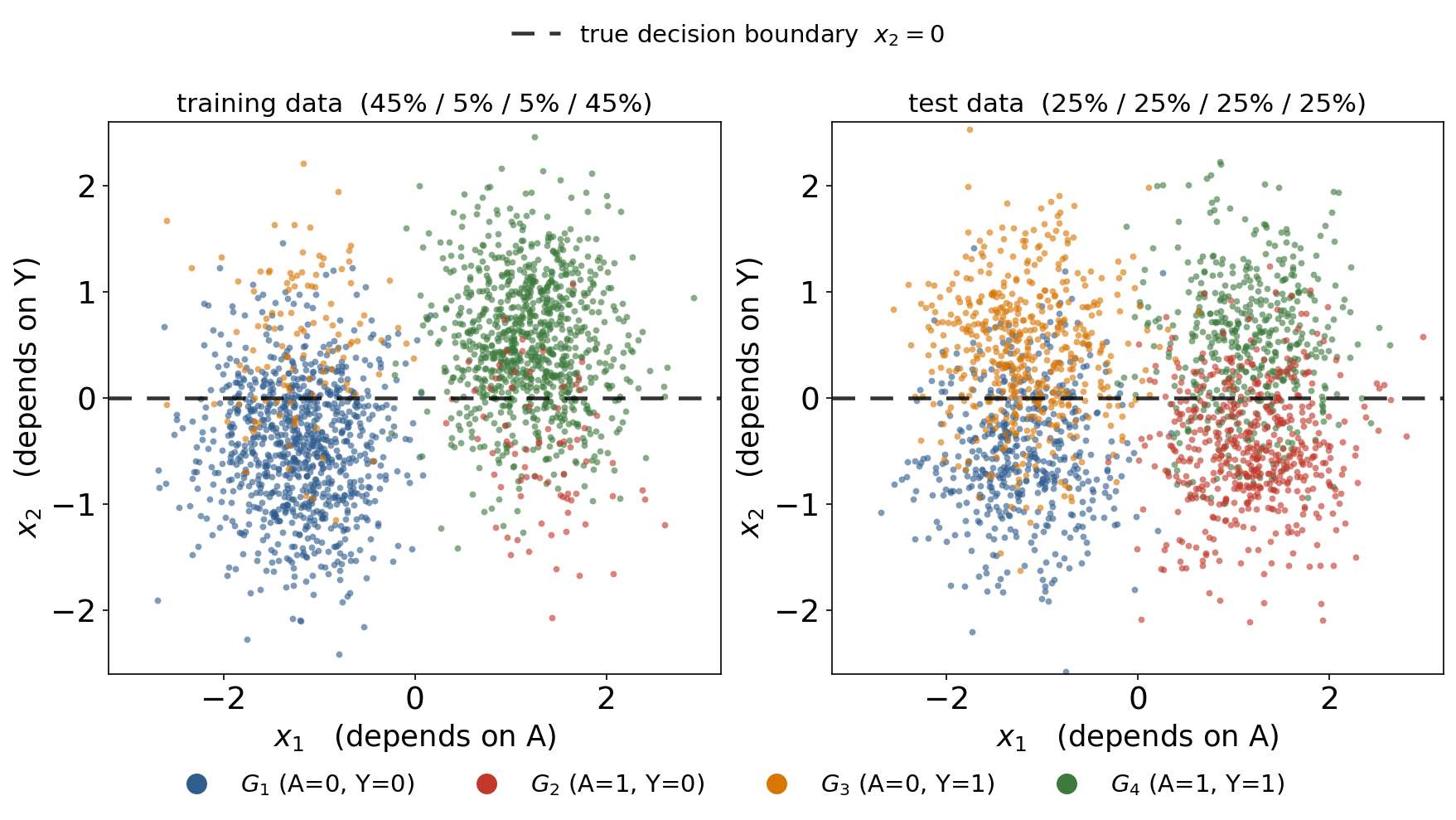}
    \caption{Illustration of the four-group simulation design. Colors identify
    the four $(A,Y)$ groups. Their probabilities are $45\%$, $5\%$, $5\%$,
    and $45\%$ in training (left), and $25\%$ each at test time (right).
    The dashed line shows the Bayes-optimal boundary $x_2=0$ for the balanced
    test distribution, displayed in both panels for reference.}
    \label{fig:simulation-design}
\end{figure}

\paragraph{Sample sizes and subpopulation shift.}
We independently generate $2{,}000$ training samples and $1{,}000$ validation
samples with $p=0.9$, and $2{,}000$ test samples with $p=0.5$.
Training therefore contains an expected $900$ samples from each majority
group ($G_1,G_4$) and $100$ from each minority group ($G_2,G_3$).
The corresponding expected counts are $450$ and $50$ in validation,
and $500$ per group at test time. Realized counts vary across random draws.
Only $p$ changes between training and test; the feature distribution
within each group remains fixed.
Consequently, $X_1$ is associated with $Y$ through $A$ in training
but is independent of $Y$ under the balanced test distribution.
Figure~\ref{fig:simulation-design} illustrates this construction.

For Figure~\ref{fig:intro_failure}, we use the same data-generating parameters
and training, validation, and test set sizes.
We vary the training and validation value of $p$ over
$\{0.5,0.6,0.7,0.8,0.9\}$, keeping the test value fixed at $0.5$.
The reported curves average results over $50$ runs.

\paragraph{Group estimation and adaptation.}
We fit a two-component GMM to the input features within each class,
fixing the number of estimated groups at $G=4$.
We update both linear layers of the query scaling network,
totaling $8{,}320$ trainable parameters
(Appendix~\ref{appendix:scaling-implementation}).
We use AdamW with learning rate $10^{-3}$, weight decay $10^{-4}$, and at most
$500$ updates. We evaluate worst-group validation accuracy over the estimated
groups every $10$ updates and use it for model selection and early stopping,
with patience $10$. Test samples are used only for final evaluation.

For the attention analysis in Figure~\ref{fig:simulation}, the training set
serves as the fixed labeled context $\mathcal D^{(c)}$. We independently draw
an additional $2{,}000$ labeled samples from the same training distribution
to form $\mathcal D^{(e)}$. The context and loss-evaluation sets are disjoint.
The GMMs are fitted only on the context and assign estimated groups to
loss-evaluation and validation samples within each response class.

\paragraph{Labels and the decision boundary.}
The labels are generated by the stochastic rule above, not by deterministically
thresholding the observed features. Under the balanced test distribution,
$X_1$ carries no label information, and the two classes have equal prior
probabilities and equal variances along $X_2$.
The Bayes-optimal classifier for test classification error is therefore
\[
f^{\star}_{\rm te}(x)=\mathbf 1\{x_2>0\},
\]
with decision boundary $x_2=0$.
The Gaussian noise produces overlap between classes, so a sample's label
need not agree with the side of this boundary on which it falls.
This is the test-optimal boundary; the optimal training classifier can also
use $X_1$ because of the training association between $A$ and $Y$.

\paragraph{Attention computation.}
For Figure~\ref{fig:simulation}, a test sample's own group is determined by its
true $(A,Y)$ group defined above.
For each attention head and ensemble member, we sum the attention weights assigned
to context samples in that same group.
We then average these sums across heads and ensemble members to obtain one value
per test sample. The left histogram pools these values for test samples in
$G_1$ and $G_4$, while the right histogram pools them for test samples in
$G_2$ and $G_3$.

\subsubsection{Datasets}
\label{appendix:datasets}
\paragraph{Benchmarks.}
\begin{itemize}[left=0pt]

    \item
    Adult \citep{becker1996adult}.
    Census records. The task is to predict whether a person's annual income exceeds \$50K.

    \item
    Bank \citep{moro2012bank}.
    Marketing calls made by a Portuguese bank. The task is to predict whether the client
    subscribes to a term deposit.

    \item
    Default \citep{yeh2016default}.
    Credit card clients in Taiwan. The task is to predict whether the client defaults in the
    following month.

    \item
    Shoppers \citep{sakar2018online}.
    Online browsing sessions. The task is to predict whether the session ends in a purchase.
    About $85\%$ of sessions do not, so the label itself is heavily imbalanced.

    \item
    Taxi \citep{navas2018taxi}.
    Taxi rides in Mexico City. The task is to predict whether the ride lasts more than thirty
    minutes.

    \item
    ACS Income \citep{ding2021retiring}.
    US Census records, with one table per state. The task is to predict whether income
    exceeds \$50K.

\end{itemize}
All six tasks are binary classification.

\paragraph{Transfer across states.}
ACS Income additionally evaluates transfer across states. We use Arizona (AZ),
Massachusetts (MA), and Michigan (MI) in six settings. The three within-state settings
train and test within one state, following the benchmark splits. The three
transfer settings, AZ$\rightarrow$MA, MA$\rightarrow$MI and MI$\rightarrow$AZ, train on one
state and test on another. In a transfer setting the entire source table is split $80/20$
into training and validation, and the entire target table is used as the test set, so the
target state contributes no training or validation data. Sizes for all six settings are given
in Table~\ref{tab:datasets-acs}.

\paragraph{Groups.}
Evaluation groups are defined by pairing a binary attribute with the label.
During training and model selection, true group labels are used only by the
methods explicitly marked as using them.
The evaluated attributes are the ones selected in Appendix B.2 of
\citet{tong2025latent}: marital status, race, and sex for Adult;
age, housing status, marital status, and last contact duration for Bank;
age, sex, and the amount of the given credit for Default;
traffic type, visitor type, and a weekend indicator for Shoppers;
pickup month, a weekday indicator, and direction for Taxi;
and race and sex for ACS Income.
We use their dataset-specific binary encodings and average the metrics over
these attributes, matching the reported baselines. Each attribute defines four
$(A,Y)$ groups; Appendix~\ref{appendix:metric} gives the metric formulas.

\paragraph{Data preparation.}
For the five tabular benchmarks and ACS Income, we follow the data preparation
procedure of \citet{tong2025latent}, with the evaluated attributes specified above. Taxi
and ACS Income use the \textsc{WhyShift} preprocessing procedure
\citep{liu2024need} adopted in that benchmark. The
resulting sizes are listed in Tables~\ref{tab:datasets-t1} and~\ref{tab:datasets-acs}.
Categorical columns are one-hot encoded for the MLP-based baselines.

\begin{table}[H]
\centering
\caption{The five tabular benchmarks and their splits.}
\label{tab:datasets-t1}
\small
\begin{tabular}{lrrrr}
\toprule
Dataset & Train & Validation & Test & Total \\
\midrule
Adult    & 26,048 & 6,513 & 16,281 & 48,842 \\
Bank     & 32,551 & 8,138 & 4,522  & 45,211 \\
Default  & 21,600 & 5,400 & 3,000  & 30,000 \\
Shoppers & 8,877  & 2,220 & 1,233  & 12,330 \\
Taxi     & 9,139  & 2,285 & 1,270  & 12,694 \\
\bottomrule
\end{tabular}
\end{table}

\begin{table}[H]
\centering
\caption{ACS Income settings. The first three train and test on the same state; the last three
transfer to a held-out state.}
\label{tab:datasets-acs}
\small
\begin{tabular}{lrrrr}
\toprule
Setting & Train & Validation & Test & Total \\
\midrule
AZ                 & 23,959 & 5,990  & 3,328  & 33,277 \\
MA                 & 28,881 & 7,221  & 4,012  & 40,114 \\
MI                 & 36,005 & 9,002  & 5,001  & 50,008 \\
AZ$\rightarrow$MA  & 26,621 & 6,656  & 40,114 & 73,391 \\
MA$\rightarrow$MI  & 32,091 & 8,023  & 50,008 & 90,122 \\
MI$\rightarrow$AZ  & 40,006 & 10,002 & 33,277 & 83,285 \\
\bottomrule
\end{tabular}
\end{table}

\subsubsection{Baselines}
\label{appendix:baselines}
We compare standard classifiers, TFMs and their adaptations, robust methods
without true group labels, and methods that use true group labels.
TabPFN, TabPFN-3.5, TabPFN (fine-tuned), and TabPFN (class-balanced) are counted as separate baselines.
For this count, TabDPT with retrieved and full contexts is treated as one method,
GEORGE with $k$-means and Gaussian mixture clustering as one method, and
Fair-TabICL with group-balanced and uncertainty-based context selection as one method.
Tables~\ref{tab:protocol} and~\ref{tab:hyperparams} summarize the training
schedules and hyperparameter settings.

\paragraph{Standard methods.}
\begin{itemize}[left=0pt]
    \item ERM-MLP. The classifier of \citet{tong2025latent} with the reweighting
    removed, trained by empirical risk minimization.
    \item XGBoost \citep{chen2016xgboost}. Gradient-boosted trees.
    \item CatBoost \citep{prokhorenkova2018catboost}. Gradient-boosted trees with
    ordered boosting and native handling of categorical columns.
\end{itemize}

\paragraph{TFMs.}
\begin{itemize}[left=0pt]
    \item TabPFN \citep{grinsztajn2026tabpfn}. Predicts a query from a labeled
    context in a single forward pass, with no gradient update. This is the pretrained model we adapt.
    \item TabPFN-3.5 \citep{priorlabs2026tabpfn35}. A TabPFN model with
    a shared pretrained checkpoint for classification and regression.
    \item TabICL \citep{qu2025tabicl}. Scales in-context learning to larger tables
    through a two-stage encoder.
    \item TabFM \citep{google2026tabfm}. Combines row and column attention with
    row compression and a separate transformer for in-context learning.
    \item EXAONE \citep{eo2026exaonetabular}. Interleaves feature-axis and
    sample-axis attention through summary tokens for in-context learning.
    \item Causilo \citep{nums2026causilo}. A compact in-context model that
    summarizes the context per feature, mixes features within each row, and
    predicts through a stack of attention layers in which queries read the
    labeled context.
    \item TabDPT \citep{ma2025tabdpt}. Pretrained on real rather than synthetic
    tables. We evaluate it with nearest-neighbor retrieval and with the full training set as
    context.
    \item TuneTables \citep{feuer2024tunetables}. Replaces the context with a
    learned prompt, so the training set is compressed into a small number of tokens.
    Its prompt is tied to the input representation of TabPFN v1, which we use
    for this method. Unless otherwise specified, the other TabPFN-based methods use v3. Comparisons with
    TuneTables therefore reflect differences in both the pretrained model and
    the adaptation method.
    \item TabPFN (fine-tuned) \citep{grinsztajn2026tabpfn}. Adapts the pretrained
    transformer using the fine-tuning procedure, with context and loss-evaluation
    batches drawn from the training set. We use the standard fine-tuning settings,
    including an $80/20$ split into context and loss-evaluation sets.
    \item TabPFN (class-balanced). The same fine-tuning procedure and settings as
    TabPFN (fine-tuned), with the cross-entropy weighted by the inverse class
    frequency of the training labels:
    \begin{equation*}
    \min_{\theta}\;
    \mathbb E_{\widehat P^{(e)}}\!\left[
    w_{Y}\,\ell\!\left(f_\theta(X;\mathcal D^{(c)}),Y\right)\right],
    \qquad w_c=\frac{1}{|\mathcal Y|\,\widehat p_c},
    \end{equation*}
    where $\widehat p_c=n^{-1}\sum_{i=1}^{n}\mathbf 1\{y_i=c\}$ is the empirical
    frequency of class $c$ in the full training dataset $\mathcal D$, and $w_Y$
    is the weight for the observed class $Y$.
    \item MixturePFN \citep{xu2025mixture}. Partitions the training set with $k$-means,
    fine-tunes one expert per cluster on that cluster, and routes a test point to the
    expert of its nearest centroid. We adapt this procedure to the pretrained
    TabPFN model used in our experiments, following \citet{LiuBeta2025} with
    a context budget of $3000$ samples and $\lceil n/3000 \rceil$ experts.
    \item LoCalPFN \citep{thomas2024retrieval}. Replaces the full training set as context
    with the query's $k$ nearest neighbors. We evaluate the retrieval variant
    (TabPFN-kNN) with $k=1000$ and no parameter updates, using the same pretrained
    TabPFN model as our method.
    \item BETA \citep{LiuBeta2025}. Trains a lightweight input adapter placed before
    the frozen pretrained TabPFN transformer. We use a two-layer MLP with $16$
    batch-ensemble members. Each member receives a bootstrapped context, and their
    predictions are averaged. We use the authors' settings with a
    context size of $1000$ and the same pretrained TabPFN model as our method.
    \item DistPFN \citep{lee2026distpfn}. Adjusts the pretrained TabPFN's
    class probabilities at test time using the ratio of their test-set average
    to the class prior of the context, without updating model parameters.
\end{itemize}

\paragraph{Robust methods without true group labels.}
For CVaR-DRO, $\chi^2$-DRO, KL-DRO, EIIL, JTT, FAM, SRDO, and LSR, we use
the dataset-level accuracies and standard deviations across seeds reported by
\citet{tong2025latent}, marked $^{\dagger}$ in the accuracy tables, and compute
the Avg.\ columns across the corresponding datasets or settings.
\begin{itemize}[left=0pt]
    \item CVaR-DRO and $\chi^2$-DRO \citep{levy2020large}, and
    KL-DRO \citep{duchi2021learning}. Optimize the worst risk over an
    $f$-divergence ball around the training distribution, without constructing groups.
    \item JTT \citep{liu2021just}. Trains once, then upweights the samples the first
    model misclassifies and trains again.
    \item EIIL \citep{creager2021environment}. Infers environments that maximally
    violate an invariance criterion, then optimizes over them.
    \item FAM \citep{petzka2021relative,zou2024towards}. Regularizes the flatness of
    the minimum reached by training.
    \item SRDO \citep{shen2020stable}. Reweights samples so that the covariates
    become decorrelated.
    \item LSR \citep{tong2025latent}. Reweights samples using a score-based density
    estimate on a learned latent space. This is the method whose benchmark we adopt.
    \item GEORGE \citep{sohoni2020no}. Clusters the penultimate representation of a
    trained model within each class, then minimizes the GroupDRO objective over
    the resulting clusters. We evaluate both $k$-means and Gaussian mixture clustering.
    \item BPA \citep{seo2022unsupervised}. Clusters a trained representation and
    reweights each cluster by its inverse frequency and its running loss.
    \item SPARE \citep{yang2024identifying}. Clusters the model output early in
    training, when the spurious feature dominates, then resamples by inverse cluster size.
\end{itemize}

\paragraph{Methods using true group labels.}
Results for methods that use true group labels are reported separately
in Tables~\ref{appendix:tab1-known} and~\ref{appendix:tab2-known}.
A superscript $^{*}$ denotes access to true group labels.
\begin{itemize}[left=0pt]
    \item GroupDRO \citep{sagawa2019distributionally}. Minimizes the worst risk over
    the true groups.
    \item Fair-TabICL \citep{kenfack2026towards}. Selects in-context examples using
    the group label, either to balance the groups or by uncertainty in predicting the
    attribute.
\end{itemize}

\subsubsection{Group and source construction for DR-TFM}
\label{appendix:ms-sources}
\paragraph{MS source construction.}
We use the MS-C and MS-R constructions described in
Appendix~\ref{appendix:alg-ms}. Following \citet{kim2026distributionally},
we start with ten candidate sources on each dataset.
Each MS-R source contains approximately $n_c/5$ context samples.
The main tables report DR-TFM (MS-C) as DR-TFM (MS).

\paragraph{True groups for GR.}
On the five tabular benchmarks, DR-TFM (GR)$^{*}$ and GroupDRO$^{*}$ use
the four groups formed by each binary attribute $A\in\mathcal A$ and the
class label $Y$. This yields $4|\mathcal A|$ groups: twelve on Adult, Default,
Shoppers, and Taxi, and sixteen on Bank. Each sample belongs to one group per
attribute, so groups associated with different attributes may overlap.
This per-attribute construction avoids sparse intersections across three or
four attributes.

On ACS Income, both methods use the eight disjoint $(Y,\mathrm{sex},\mathrm{race})$
groups. Each evaluation group is a union of two of these groups.
These true-group variants require no cluster selection.

\paragraph{Sampling with true groups.}
To stratify the context and loss-evaluation split on the five tabular benchmarks,
we use the four cells of $Y$ paired with sex (Adult and Default), marital status (Bank),
weekend status (Shoppers), or pickup month (Taxi).
On ACS Income, we use the eight joint cells above.
GR$^{*}$ and MS$^{*}$ split the training set $70/30$ into context and
loss-evaluation sets, stratified by these cells. These sampling partitions are
distinct from the overlapping groups used in the GR objective on the five
tabular benchmarks.

\paragraph{MS sources with true groups.}
DR-TFM (MS)$^{*}$ forms sources from the joint attribute values, omitting $Y$:
eight combinations on Adult, Default, Shoppers, and Taxi, sixteen on Bank,
and four sex--race combinations on ACS Income. We retain a source only if
its context contains both response classes.
Model selection for MS$^{*}$ maximizes worst-group validation accuracy over the same
groups used for GR$^{*}$: the $4|\mathcal A|$ overlapping groups on the five tabular
benchmarks and the eight joint groups on ACS Income.

\subsubsection{Evaluation metrics}
\label{appendix:metric}
We report the evaluation metrics of \citet{tong2025latent}. Let $\mathcal{A}$ denote the evaluated
attributes of a dataset. For an attribute $A \in \mathcal{A}$, the test set splits into the
four groups
\[
\mathcal{C}_A(v,c)
=
\bigl\{\, i \;:\; A_i = v,\; Y_i = c \,\bigr\},
\qquad
(v,c) \in \{0,1\}^2 ,
\]
each with accuracy
\[
\mathrm{acc}_A(v,c)
=
\frac{1}{\lvert \mathcal{C}_A(v,c) \rvert}
\sum_{i \in \mathcal{C}_A(v,c)}
\mathbf{1}\bigl\{ \widehat{Y}_i = Y_i \bigr\} .
\]
The two reported quantities are
\[
\mathrm{Acc}_{\mathrm{worst}}
=
\frac{1}{\lvert \mathcal{A} \rvert}
\sum_{A \in \mathcal{A}}
\min_{(v,c) \in \{0,1\}^2} \mathrm{acc}_A(v,c),
\qquad
\mathrm{Acc}_{\mathrm{mean}}
=
\frac{1}{\lvert \mathcal{A} \rvert}
\sum_{A \in \mathcal{A}}
\frac{1}{4}
\sum_{(v,c) \in \{0,1\}^2} \mathrm{acc}_A(v,c) .
\]
Here, $\mathrm{Acc}_{\mathrm{mean}}$ assigns equal weight to the four groups,
while $\mathrm{Acc}_{\mathrm{worst}}$ measures the accuracy
of the least accurate group for each attribute. Both metrics are averaged over
attributes and reported as percentages.
For summaries across datasets or settings, we report the average of mean group
accuracies and the average of worst-group accuracies, giving each dataset or
setting equal weight.

\subsubsection{Training protocol}
\label{appendix:protocol}
\paragraph{Group estimation and data splitting.}
\label{appendix:clustering}
The TabPFN-based real-data variants without true group labels use
$512$-dimensional pretrained embeddings and the preparation procedure in
Appendix~\ref{appendix:method-preparation}. The true-group variants follow
Appendix~\ref{appendix:ms-sources}.

\paragraph{Classifier architecture.}
The MLP-based baselines use the classifier architecture of \citet{tong2025latent}: a four-layer neural network
$d \rightarrow 1024 \rightarrow 1024 \rightarrow 512 \rightarrow 2$ with LeakyReLU
activations. These baselines use the one-hot-encoded original features, except
for LSR, which retains its VAE-based latent representation.

\paragraph{Optimization.}
DR-TFM (GR) uses Adam and DR-TFM (MS) uses AdamW.
ERM-MLP and GroupDRO use Adam, following \citet{tong2025latent}.
Learning rates, batch sizes, and schedules are listed in
Tables~\ref{tab:protocol} and~\ref{tab:hyperparams}.
Unless otherwise noted, results are averaged over seeds $0$, $1$, and $2$.

\paragraph{Schedule.}
Table~\ref{tab:protocol} lists the training budgets and model selection criteria.
The DR-TFM variants without true group labels use early stopping based on worst-group
validation accuracy over estimated groups.
For MS-C and MS-R, we evaluate this metric every five parameter updates and
stop after five consecutive validation checks without improvement.
GR$^{*}$ and MS$^{*}$ use the true
groups specified in Appendix~\ref{appendix:ms-sources}.
Table~\ref{tab:hyperparams} specifies the patience.

\begin{table}[H]
\centering
\caption{Training budgets and model selection criteria for DR-TFM and the MLP-based baselines.}
\label{tab:protocol}
\resizebox{\textwidth}{!}{%
\begin{tabular}{l >{\raggedright\arraybackslash}p{4.0cm} c >{\raggedright\arraybackslash}p{4.4cm} >{\raggedright\arraybackslash}p{3.8cm}}
\toprule
Method & Training & Epochs / steps & Selection criterion & Learning rate schedule \\
\midrule
ERM-MLP & ERM, from scratch & 1000 & validation accuracy &
  multiply by $0.9$ after $20$ epochs without improvement in mean training loss \\
GroupDRO & GroupDRO over the true groups, from scratch & 1000 &
  validation worst-group accuracy (true groups) & same as above \\
GEORGE stage 1 & ERM, from scratch (feature extractor) & 1000 & validation accuracy & none \\
GEORGE stage 2 & GroupDRO over the estimated clusters, newly initialized model & 300 &
  worst-group accuracy over the estimated validation clusters & none \\
BPA & base ERM, then retraining with cluster reweighting & 1000 / 300 &
  worst-group accuracy over the estimated validation clusters & none \\
SPARE & base ERM, then retraining with importance sampling & 1000 / 300 &
  worst-group accuracy over the estimated validation clusters & none \\
LSR & original LSR training procedure & 4000 / 4000 / 1000 &
  lowest training loss & multiply by $0.9$ after $20$ epochs without improvement \\
DR-TFM & query scaling with a fixed TabPFN feature map &
  at most $1000$ steps &
  validation worst-group accuracy over the estimated clusters & --- \\
\bottomrule
\end{tabular}}
\end{table}

\subsubsection{Model selection and hyperparameters}
\label{appendix:hyperparams}
\paragraph{Selection rule.}
The following settings apply to the real-data experiments.
We follow the original methods' hyperparameter settings and selection procedures
where applicable. Group-based robust baselines select remaining hyperparameters
using worst-group validation accuracy, evaluated on true groups when available
and estimated groups otherwise. XGBoost and CatBoost use validation log-loss.
Test data are used only for final evaluation.

\paragraph{Selected hyperparameters.}
Table~\ref{tab:hyperparams} lists the search spaces. For SPARE, we select $T_{\mathrm{init}}$ using worst-group validation accuracy
over estimated groups; the numbers of clusters of SPARE and BPA follow GEORGE. ERM-MLP uses the fixed hyperparameters
of \citet{tong2025latent}, including zero weight decay.

\paragraph{Cluster selection for DR-TFM (GR).}
Following GEORGE \citep{sohoni2020no}, we select the number of clusters
separately for each class over $G_c\in\{2,4,6,8\}$ by the silhouette score.
This selection uses neither true group labels nor validation data.
After selection, validation samples are assigned to the nearest cluster
center within their class.

For both clustering methods, this criterion selects two clusters per class
($G=4$) across all datasets and three seeds reported in
Tables~\ref{appendix:tab1-free} and~\ref{appendix:tab2-free}.

\paragraph{Hyperparameters for DR-TFM (MS).}
For both source constructions in Appendix~\ref{appendix:ms-sources}, we fix
$\epsilon_2=1$, following \citet{kim2026distributionally}.
With the uniform nominal vector, this radius allows all convex mixtures of
the source conditionals.

\citet{kim2026distributionally} develop their method for unsupervised domain
adaptation and also evaluate it under subpopulation shift.
In their formulation, $\epsilon_1$ controls a Wasserstein neighborhood around
the empirical target input distribution, accounting for uncertainty when
target data are limited.
Our setting focuses on subpopulation shift and does not use target-domain
data for adaptation or model selection.
Here, $\epsilon_1$ controls the size of perturbations to the loss-evaluation
embeddings.
Table~\ref{tab:ms-radius-selection} shows that worst-group validation accuracy
on Adult varies little across the tested values of $\epsilon_1$.
Given this limited sensitivity, we select $\epsilon_1=0.2$ once using these
validation results and keep it fixed across all datasets.
Each model update uses one feature ascent step and one mixture-weight ascent
step, with $\eta_z=\epsilon_1$ and $\eta_\beta=0.05$.

\begin{table}[H]
\centering
\caption{Worst-group validation accuracy of DR-TFM (MS) on Adult for selecting $\epsilon_1$,
with $\epsilon_2=1$ and ten initial candidate sources.
Accuracy is evaluated over estimated groups.
Results are averaged over three seeds; parentheses report standard deviations.}
\label{tab:ms-radius-selection}
\small
\setlength{\tabcolsep}{6pt}
\begin{tabular}{l cccccc}
\toprule
$\epsilon_1$ & $0$ & $0.2$ & $0.4$ & $0.6$ & $0.8$ & $1$ \\
\midrule
Validation & 70.43 \sd{1.21} & 70.44 \sd{1.18} & 70.44 \sd{1.13} & 70.38 \sd{1.15} & 70.37 \sd{1.15} & 70.39 \sd{1.12} \\
\bottomrule
\end{tabular}
\end{table}

\paragraph{GroupDRO updates and regularization.}
The GroupDRO updates use the group-size adjustment described in
Appendix~\ref{appendix:alg}, with an adversarial learning rate of $0.1$
and $C=5$.
DR-TFM (GR) fixes weight decay at $5\times10^{-3}$.
For GEORGE, BPA, SPARE, and GroupDRO, weight decay is selected for the robust
training stage. The initial ERM stages of GEORGE, BPA, and SPARE retain zero
weight decay, so their estimated partitions remain fixed across the search.

\begin{table}[H]
\centering
\caption{Hyperparameters. The final column lists selected quantities;
all other settings are fixed. LR denotes the model learning rate.}
\label{tab:hyperparams}
\resizebox{\textwidth}{!}{%
\begin{tabular}{l cccccc c >{\raggedright\arraybackslash}p{3.6cm}}
\toprule
Method & $\eta$ & $C$ & LR & Batch size & Budget & Patience & Weight decay & Selected quantities \\
\midrule
DR-TFM (GR; $k$-means)      & 0.1 & 5 & 0.01 & full & 1000 & 5 & 0.005 & $G_c$ per class by silhouette over $\{2,4,6,8\}$ \\
DR-TFM (GR; GMM)          & 0.1 & 5 & 0.01 & full & 1000 & 5 & 0.005 & $G_c$ per class by silhouette over $\{2,4,6,8\}$ \\
DR-TFM (GR)$^{*}$          & 0.1 & 5 & 0.01 & full & 1000 & 5 & 0.005 & none (groups given) \\
DR-TFM (MS)           & --- & --- & 0.001 & 1024 & 300 & 5 & $10^{-4}$ & $\epsilon_1$ once on Adult; fixed thereafter \\
DR-TFM (MS)$^{*}$         & --- & --- & 0.001 & 1024 & 300 & 5 & $10^{-4}$ & none (groups given) \\
ERM-MLP              & --- & --- & 0.001 & 4096 & 1000 & 20 & 0 & none \\
GroupDRO             & 0.1 & 5 & 0.001 & 4096 & 1000 & 20 & $\{10^{-4}, 5\!\times\!10^{-3}, 10^{-2}\}$ & weight decay \\
GEORGE ($k$-means / GMM)   & 0.1 & 5 & 0.001 & 4096 & 1000 + 300 epochs & 20 & $\{10^{-4}, 5\!\times\!10^{-3}, 10^{-2}\}$ & $G_{\mathrm{cl}}$ by silhouette over $\{2,4,6,8\}$; weight decay \\
SPARE                    & 0.1 & 5 & 0.001 & 4096 & 1000 + 300 epochs & 20 & $\{10^{-4}, 5\!\times\!10^{-3}, 10^{-2}\}$ & $T_{\mathrm{init}}$; weight decay \\
BPA                      & 0.1 & 5 & 0.001 & 4096 & 1000 + 300 epochs & 20 & $\{10^{-4}, 5\!\times\!10^{-3}, 10^{-2}\}$ & weight decay; clusters as GEORGE \\
XGBoost                  & --- & --- & $\{0.03, 0.05, 0.1\}$ & --- & $\le 600$ & --- & --- & depth, learning rate, tree count \\
CatBoost                 & --- & --- & $\{0.03, 0.05, 0.1\}$ & --- & $\le 600$ & --- & --- & depth, learning rate, tree count \\
TabICL                   & --- & --- & --- & --- & --- & --- & --- & none \\
TabDPT                   & --- & --- & --- & 256 & --- & --- & --- & none \\
\bottomrule
\end{tabular}}
\end{table}

\begin{table}[H]
\centering
\caption{Additional baseline settings.}
\label{tab:hyperparams-adapt}
\footnotesize
\setlength{\tabcolsep}{4pt}
\begin{tabularx}{\textwidth}{@{}ll>{\raggedright\arraybackslash}X@{}}
\toprule
Method & Trained component & Settings \\
\midrule
TabPFN (fine-tuned) & Full backbone & Learning rate $10^{-5}$, weight decay $0.02$, 30 epochs, patience 8 \\
TabPFN (class-balanced) & Full backbone & As TabPFN (fine-tuned), with inverse-class-frequency weights in the cross-entropy \\
BETA & Input adapter & 16 ensemble members, learning rate $0.003$, weight decay $0.02$, batch size 1024, 30 epochs, patience 5 \\
MixturePFN & Expert models & Context size 3000; experts use the TabPFN fine-tuning settings above \\
LoCalPFN & None & Retrieval context of 1000 neighbors per query batch \\
TuneTables & Soft prompt & 10 tokens, $\le 31$ epochs, patience 5, TabPFN v1 \\
Fair-TabICL & None & Context balanced by group or selected by predictive uncertainty \\
TabICL / TabDPT & None & TabDPT inference batch size 256 \\
TabFM & None & Full training context; 32 ensemble members \\
EXAONE & None & Full training context; 8 ensemble members \\
XGBoost / CatBoost & Trees & Tree counts $\{25,50,100,200,300,400,600\}$, selected by minimum validation log loss \\
\bottomrule
\end{tabularx}
\end{table}

\subsubsection{Ablation on the number of clusters}
\label{appendix:kablation}
We vary the number of clusters per class, denoted by $G_{\mathrm{cl}}$, over $\{2,4,6,8\}$ and compare
DR-TFM (GR) with GEORGE using class-wise $k$-means and GMM on the five tabular
benchmarks and three within-state ACS Income settings. The ablation isolates
$G_{\mathrm{cl}}$, so every remaining hyperparameter is held fixed and both methods use a weight decay of
$5 \times 10^{-3}$.
For this ablation, each class has $G_c=G_{\mathrm{cl}}$ clusters,
giving $G=2G_{\mathrm{cl}}$ estimated groups.
Tables~\ref{tab:k-sensitivity} and~\ref{tab:k-sensitivity-acs} report both
clustering methods. Their standard deviations are computed across seeds after
averaging accuracies over datasets or within-state settings.

\paragraph{Results.}
DR-TFM (GR) outperforms GEORGE in both metrics at every tested value of
$G_{\mathrm{cl}}$ for both clustering methods.
With $k$-means, its gains in the average of worst-group accuracies range from
$8.77$ to $24.41$ pp on the five tabular benchmarks and from $4.91$ to $25.66$ pp
on ACS within-state settings.
Compared with GEORGE, DR-TFM (GR) shows substantially more stable performance
as the number of clusters varies, with smaller ranges in both metrics for
both clustering methods.

\begin{table}[H]
\centering
\caption{Ablation on $G_{\mathrm{cl}}$ for the five tabular benchmarks, averaged over
datasets and three seeds. Parentheses report standard deviations across seeds
of the dataset-averaged accuracies. $\Delta$ is DR-TFM (GR) minus GEORGE.
Range is the maximum minus minimum within each row across the four values of $G_{\mathrm{cl}}$.
Ranges are computed before rounding.}
\label{tab:k-sensitivity}
\setlength{\tabcolsep}{4pt}
\resizebox{\textwidth}{!}{%
\begin{tabular}{l cc cc cc cc | cc}
\toprule
\multirow{2}{*}{Method} & \multicolumn{2}{c}{$G_{\mathrm{cl}}=2$} & \multicolumn{2}{c}{$G_{\mathrm{cl}}=4$} & \multicolumn{2}{c}{$G_{\mathrm{cl}}=6$} & \multicolumn{2}{c}{$G_{\mathrm{cl}}=8$} & \multicolumn{2}{c}{Range (pp)} \\
\cmidrule(lr){2-3}\cmidrule(lr){4-5}\cmidrule(lr){6-7}\cmidrule(lr){8-9}\cmidrule(lr){10-11}
 & Mean & Worst & Mean & Worst & Mean & Worst & Mean & Worst & Mean & Worst \\
\midrule
DR-TFM (GR; $k$-means) & 79.31 & 72.57 & 77.42 & 68.37 & 75.97 & 63.90 & 75.09 & 61.39 & 4.22 & 11.18 \\
 & \sd{0.17} & \sd{0.64} & \sd{0.33} & \sd{1.29} & \sd{0.63} & \sd{3.00} & \sd{0.77} & \sd{1.12} & & \\
GEORGE ($k$-means) & 73.79 & 63.80 & 61.80 & 43.96 & 64.22 & 48.25 & 58.64 & 44.45 & 15.15 & 19.84 \\
 & \sd{0.48} & \sd{1.21} & \sd{2.28} & \sd{4.56} & \sd{1.04} & \sd{2.78} & \sd{5.89} & \sd{9.45} & & \\
\cmidrule[0.3pt](lr){1-11}
\quad $\Delta$ (pp) & \textbf{+5.52} & \textbf{+8.77} & \textbf{+15.62} & \textbf{+24.41} & \textbf{+11.75} & \textbf{+15.65} & \textbf{+16.45} & \textbf{+16.94} & 10.93 & 15.64 \\
\addlinespace[2pt]
\midrule
DR-TFM (GR; GMM) & 79.19 & 71.81 & 77.12 & 67.71 & 75.99 & 64.09 & 74.57 & 59.67 & 4.62 & 12.13 \\
 & \sd{0.15} & \sd{0.38} & \sd{1.08} & \sd{1.29} & \sd{0.90} & \sd{2.26} & \sd{0.38} & \sd{3.08} & & \\
GEORGE (GMM) & 73.75 & 65.26 & 68.68 & 59.69 & 63.40 & 49.30 & 62.26 & 46.10 & 11.49 & 19.16 \\
 & \sd{1.18} & \sd{1.66} & \sd{1.37} & \sd{0.56} & \sd{1.66} & \sd{4.99} & \sd{2.29} & \sd{8.85} & & \\
\cmidrule[0.3pt](lr){1-11}
\quad $\Delta$ (pp) & \textbf{+5.44} & \textbf{+6.55} & \textbf{+8.44} & \textbf{+8.02} & \textbf{+12.59} & \textbf{+14.79} & \textbf{+12.31} & \textbf{+13.57} & 7.15 & 8.24 \\
\addlinespace[2pt]
\bottomrule
\end{tabular}}
\end{table}

\begin{table}[H]
\centering
\caption{Ablation on $G_{\mathrm{cl}}$ for ACS Income, averaged over the three
within-state settings and three seeds. Parentheses report standard deviations
across seeds of the setting-averaged accuracies. $\Delta$ is DR-TFM (GR) minus GEORGE. Range is the maximum minus minimum within each row across the four values of $G_{\mathrm{cl}}$.
Ranges are computed before rounding.}
\label{tab:k-sensitivity-acs}
\setlength{\tabcolsep}{4pt}
\resizebox{\textwidth}{!}{%
\begin{tabular}{l cc cc cc cc | cc}
\toprule
\multirow{2}{*}{Method} & \multicolumn{2}{c}{$G_{\mathrm{cl}}=2$} & \multicolumn{2}{c}{$G_{\mathrm{cl}}=4$} & \multicolumn{2}{c}{$G_{\mathrm{cl}}=6$} & \multicolumn{2}{c}{$G_{\mathrm{cl}}=8$} & \multicolumn{2}{c}{Range (pp)} \\
\cmidrule(lr){2-3}\cmidrule(lr){4-5}\cmidrule(lr){6-7}\cmidrule(lr){8-9}\cmidrule(lr){10-11}
 & Mean & Worst & Mean & Worst & Mean & Worst & Mean & Worst & Mean & Worst \\
\midrule
DR-TFM (GR; $k$-means) & 79.21 & 71.06 & 80.00 & 72.99 & 79.53 & 72.22 & 79.70 & 72.59 & 0.78 & 1.93 \\
 & \sd{0.14} & \sd{0.12} & \sd{0.12} & \sd{0.40} & \sd{0.38} & \sd{1.11} & \sd{0.04} & \sd{0.39} & & \\
GEORGE ($k$-means) & 74.36 & 66.15 & 64.70 & 54.67 & 65.41 & 56.41 & 66.75 & 46.94 & 9.66 & 19.22 \\
 & \sd{3.03} & \sd{2.31} & \sd{1.35} & \sd{4.89} & \sd{2.71} & \sd{1.27} & \sd{6.77} & \sd{9.64} & & \\
\cmidrule[0.3pt](lr){1-11}
\quad $\Delta$ (pp) & \textbf{+4.85} & \textbf{+4.91} & \textbf{+15.29} & \textbf{+18.32} & \textbf{+14.11} & \textbf{+15.81} & \textbf{+12.95} & \textbf{+25.66} & 10.44 & 20.75 \\
\addlinespace[2pt]
\midrule
DR-TFM (GR; GMM) & 79.82 & 72.17 & 79.96 & 72.86 & 79.42 & 71.85 & 79.69 & 72.32 & 0.54 & 1.01 \\
 & \sd{0.16} & \sd{0.82} & \sd{0.24} & \sd{0.38} & \sd{0.47} & \sd{1.18} & \sd{0.43} & \sd{0.47} & & \\
GEORGE (GMM) & 64.98 & 59.52 & 63.69 & 56.20 & 64.28 & 55.68 & 65.08 & 55.01 & 1.40 & 4.51 \\
 & \sd{2.71} & \sd{2.97} & \sd{1.40} & \sd{2.65} & \sd{2.39} & \sd{3.18} & \sd{3.25} & \sd{1.81} & & \\
\cmidrule[0.3pt](lr){1-11}
\quad $\Delta$ (pp) & \textbf{+14.84} & \textbf{+12.65} & \textbf{+16.27} & \textbf{+16.66} & \textbf{+15.13} & \textbf{+16.18} & \textbf{+14.61} & \textbf{+17.31} & 1.66 & 4.66 \\
\addlinespace[2pt]
\bottomrule
\end{tabular}}
\end{table}

\subsection{Full result tables}
\label{appendix:full-tables}
Method names follow Table~\ref{tab:main-table1}; MS-C and MS-R denote
clustered and randomly sampled sources, respectively.
Tables~\ref{appendix:tab1-free}--\ref{appendix:tab2-known} report the full results,
grouped by method family and access to true group labels.
Detailed results for the additional TFMs are reported separately in
Appendix~\ref{appendix:tfm-extension-results}.
Results are averaged over three seeds.
Values in parentheses are standard deviations over seeds where available.
The Avg.\ columns report the average of mean group accuracies and the average of
worst-group accuracies across the constituent datasets or settings.
Parenthesized values in these columns are the averages of the per-dataset or per-setting standard deviations across seeds.

\paragraph{Comparisons with true group labels.}
The true-group protocols for Tables~\ref{appendix:tab1-known}
and~\ref{appendix:tab2-known} are described in
Appendices~\ref{appendix:baselines} and~\ref{appendix:ms-sources}.
On the five tabular benchmarks, DR-TFM (GR) with estimated groups achieves
an average of worst-group accuracies of $72.57\%$, exceeding GroupDRO with true
groups ($71.79\%$) and coming within $1.09$ pp of DR-TFM (GR) with true
groups ($73.66\%$).

The recovery with true groups under the same query-scaling architecture suggests
that the degradation on MA$\rightarrow$MI may reflect a mismatch between the
estimated groups used for adaptation and model selection and the target evaluation
groups (Tables~\ref{appendix:tab2-free} and~\ref{appendix:tab2-known}).

\begingroup
\scriptsize
\setlength{\tabcolsep}{3.4pt}
\setlength{\LTleft}{\fill}
\setlength{\LTright}{\fill}
\setlength{\LTcapwidth}{\textwidth}
\begin{longtable}{@{\extracolsep{\fill}}lcccccccccc|cc}
\caption{Results on the five tabular benchmarks for methods without true group labels.
Avg.\ denotes the average of each metric across datasets.
Values in parentheses are standard deviations over three seeds.
Accuracy results marked $^{\dagger}$, including standard deviations, are taken from \citet{tong2025latent}.
Bold method names indicate our variants.}\label{appendix:tab1-free}\\
\toprule
\multirow{2}{*}{Methods} & \multicolumn{2}{c}{Adult} & \multicolumn{2}{c}{Bank} & \multicolumn{2}{c}{Default} & \multicolumn{2}{c}{Shoppers} & \multicolumn{2}{c|}{Taxi} & \multicolumn{2}{c}{Avg.} \\
\cmidrule(lr){2-3}\cmidrule(lr){4-5}\cmidrule(lr){6-7}\cmidrule(lr){8-9}\cmidrule(lr){10-11}\cmidrule(lr){12-13}
 & Mean & Worst & Mean & Worst & Mean & Worst & Mean & Worst & Mean & Worst & Mean & Worst \\
\midrule
\endfirsthead
\caption[]{Five tabular benchmarks, methods without true group labels (continued).}\\
\toprule
\multirow{2}{*}{Methods} & \multicolumn{2}{c}{Adult} & \multicolumn{2}{c}{Bank} & \multicolumn{2}{c}{Default} & \multicolumn{2}{c}{Shoppers} & \multicolumn{2}{c|}{Taxi} & \multicolumn{2}{c}{Avg.} \\
\cmidrule(lr){2-3}\cmidrule(lr){4-5}\cmidrule(lr){6-7}\cmidrule(lr){8-9}\cmidrule(lr){10-11}\cmidrule(lr){12-13}
 & Mean & Worst & Mean & Worst & Mean & Worst & Mean & Worst & Mean & Worst & Mean & Worst \\
\midrule
\endhead
\midrule
\multicolumn{13}{r}{\footnotesize Continued on next page}\\
\endfoot
\bottomrule
\endlastfoot
\multicolumn{13}{@{}l}{\emph{Standard methods}}\\*
ERM-MLP & 73.45 & 45.29 & 71.86 & 42.03 & 63.70 & 28.52 & 76.44 & 50.11 & 75.27 & 57.37 & 72.14 & 44.67\\*[-1.2pt]
 & \sd{0.16} & \sd{0.90} & \sd{1.80} & \sd{4.18} & \sd{0.48} & \sd{0.21} & \sd{0.47} & \sd{1.85} & \sd{0.60} & \sd{1.44} & \sd{0.70} & \sd{1.72}\\[2pt]
XGBoost & 77.09 & 52.58 & 68.69 & 33.87 & 64.78 & 30.72 & 78.03 & 55.42 & 75.81 & 57.57 & 72.88 & 46.03\\*[-1.2pt]
 & \sd{0.17} & \sd{0.48} & \sd{0.31} & \sd{0.92} & \sd{0.11} & \sd{0.13} & \sd{0.19} & \sd{0.24} & \sd{0.21} & \sd{0.16} & \sd{0.20} & \sd{0.39}\\[2pt]
CatBoost & 76.50 & 51.33 & 68.94 & 34.43 & 64.75 & 31.03 & 77.50 & 54.24 & 76.04 & 58.71 & 72.75 & 45.95\\*[-1.2pt]
 & \sd{0.22} & \sd{0.73} & \sd{0.49} & \sd{0.65} & \sd{0.05} & \sd{0.00} & \sd{0.62} & \sd{2.30} & \sd{0.07} & \sd{0.54} & \sd{0.29} & \sd{0.84}\\[2pt]
\addlinespace[1pt]
\midrule
\addlinespace[1pt]
\multicolumn{13}{@{}l}{\emph{TFMs}}\\*
TabICL & 76.40 & 51.17 & 71.90 & 40.38 & 64.70 & 31.48 & 78.71 & 56.70 & 76.07 & 57.86 & 73.56 & 47.52\\*[-1.2pt]
 & \sd{0.66} & \sd{1.26} & \sd{0.18} & \sd{0.54} & \sd{0.09} & \sd{0.19} & \sd{0.05} & \sd{0.00} & \sd{0.18} & \sd{0.35} & \sd{0.23} & \sd{0.47}\\[2pt]
TabDPT (retrieval) & 74.13 & 46.80 & 70.06 & 37.47 & 64.25 & 30.18 & 78.70 & 58.11 & 75.90 & 58.44 & 72.61 & 46.20\\*[-1.2pt]
 & \sd{0.22} & \sd{0.66} & \sd{0.25} & \sd{1.12} & \sd{0.18} & \sd{0.30} & \sd{0.35} & \sd{1.07} & \sd{0.12} & \sd{0.20} & \sd{0.23} & \sd{0.67}\\[2pt]
TabDPT (full context) & 72.58 & 41.77 & 62.28 & 23.68 & 63.54 & 28.96 & 77.36 & 52.98 & 75.39 & 57.44 & 70.23 & 40.97\\*[-1.2pt]
 & \sd{1.46} & \sd{4.91} & \sd{2.03} & \sd{4.62} & \sd{2.63} & \sd{7.04} & \sd{1.09} & \sd{5.21} & \sd{0.28} & \sd{1.06} & \sd{1.50} & \sd{4.57}\\[2pt]
TuneTables & 68.98 & 34.03 & 64.61 & 27.84 & 60.84 & 21.53 & 77.64 & 53.82 & 71.51 & 46.10 & 68.72 & 36.66\\*[-1.2pt]
 & \sd{1.31} & \sd{2.61} & \sd{2.28} & \sd{4.18} & \sd{2.42} & \sd{6.60} & \sd{1.28} & \sd{2.99} & \sd{2.07} & \sd{5.42} & \sd{1.87} & \sd{4.36}\\[2pt]
MixturePFN & 74.17 & 46.00 & 70.32 & 38.24 & 64.16 & 30.55 & 78.60 & 55.82 & 75.54 & 57.51 & 72.56 & 45.62\\*[-1.2pt]
 & \sd{0.59} & \sd{1.67} & \sd{0.84} & \sd{2.03} & \sd{0.17} & \sd{0.37} & \sd{1.62} & \sd{4.95} & \sd{0.28} & \sd{0.63} & \sd{0.70} & \sd{1.93}\\[2pt]
LoCalPFN (retrieval) & 75.61 & 49.34 & 71.75 & 40.86 & 64.45 & 31.13 & 79.24 & 58.11 & 75.72 & 57.57 & 73.35 & 47.40\\*[-1.2pt]
 & \sd{0.14} & \sd{0.39} & \sd{0.17} & \sd{0.42} & \sd{0.15} & \sd{0.14} & \sd{0.38} & \sd{1.11} & \sd{0.10} & \sd{0.29} & \sd{0.19} & \sd{0.47}\\[2pt]
BETA & 74.70 & 47.36 & 66.26 & 30.51 & 64.46 & 30.93 & 78.44 & 56.84 & 75.03 & 57.14 & 71.78 & 44.56\\*[-1.2pt]
 & \sd{0.35} & \sd{0.87} & \sd{1.28} & \sd{2.49} & \sd{0.12} & \sd{0.72} & \sd{0.54} & \sd{2.08} & \sd{0.05} & \sd{0.25} & \sd{0.47} & \sd{1.28}\\[2pt]
DistPFN & 75.02 & 48.37 & 72.63 & 42.04 & 64.61 & 30.71 & 78.00 & 54.63 & 75.76 & 57.26 & 73.20 & 46.60\\*[-1.2pt]
 & \sd{0.17} & \sd{0.59} & \sd{0.10} & \sd{0.19} & \sd{0.05} & \sd{0.05} & \sd{0.13} & \sd{0.60} & \sd{0.13} & \sd{0.44} & \sd{0.12} & \sd{0.37}\\[2pt]
\addlinespace[1pt]
\midrule
\addlinespace[1pt]
\multicolumn{13}{@{}l}{\emph{Robust methods}}\\*
CVaR-DRO$^{\dagger}$ & 71.95 & 49.02 & 68.93 & 38.71 & 62.43 & 34.60 & 76.65 & 51.26 & 65.55 & 54.51 & 69.10 & 45.62\\*[-1.2pt]
 & \sd{0.68} & \sd{2.63} & \sd{1.52} & \sd{3.59} & \sd{0.33} & \sd{2.44} & \sd{0.31} & \sd{2.01} & \sd{2.00} & \sd{5.93} & \sd{0.97} & \sd{3.32} \\[2pt]
$\chi^2$-DRO$^{\dagger}$ & 71.85 & 50.09 & 70.88 & 40.04 & 62.03 & 34.16 & 76.95 & 52.40 & 68.27 & 62.43 & 70.00 & 47.82\\*[-1.2pt]
 & \sd{0.73} & \sd{1.78} & \sd{0.81} & \sd{3.56} & \sd{0.19} & \sd{2.12} & \sd{1.72} & \sd{3.60} & \sd{0.99} & \sd{1.08} & \sd{0.89} & \sd{2.43} \\[2pt]
KL-DRO$^{\dagger}$ & 74.33 & 49.17 & 69.95 & 39.89 & 59.03 & 19.39 & 79.53 & 59.74 & 62.30 & 47.93 & 69.03 & 43.22\\*[-1.2pt]
 & \sd{0.14} & \sd{0.90} & \sd{0.60} & \sd{1.75} & \sd{3.25} & \sd{5.10} & \sd{1.08} & \sd{3.98} & \sd{8.96} & \sd{23.68} & \sd{2.81} & \sd{7.08} \\[2pt]
EIIL$^{\dagger}$ & 69.37 & 38.97 & 61.85 & 21.69 & 65.05 & 28.91 & 74.82 & 46.18 & 69.52 & 58.00 & 68.12 & 38.75\\*[-1.2pt]
 & \sd{2.50} & \sd{8.15} & \sd{2.49} & \sd{6.91} & \sd{1.06} & \sd{7.55} & \sd{7.04} & \sd{15.02} & \sd{0.07} & \sd{4.43} & \sd{2.63} & \sd{8.41} \\[2pt]
JTT$^{\dagger}$ & 71.46 & 49.93 & 68.78 & 37.77 & 62.47 & 35.51 & 78.10 & 52.59 & 67.47 & 60.01 & 69.66 & 47.16\\*[-1.2pt]
 & \sd{1.89} & \sd{3.65} & \sd{0.67} & \sd{1.50} & \sd{0.90} & \sd{1.58} & \sd{0.75} & \sd{3.35} & \sd{0.42} & \sd{2.60} & \sd{0.93} & \sd{2.54} \\[2pt]
FAM$^{\dagger}$ & 72.85 & 49.59 & 71.03 & 41.09 & 62.50 & 37.12 & 76.60 & 53.51 & 67.90 & 62.24 & 70.18 & 48.71\\*[-1.2pt]
 & \sd{1.65} & \sd{1.95} & \sd{3.89} & \sd{5.25} & \sd{0.19} & \sd{2.07} & \sd{0.09} & \sd{2.05} & \sd{0.75} & \sd{0.63} & \sd{1.31} & \sd{2.39} \\[2pt]
SRDO$^{\dagger}$ & 71.27 & 46.44 & 66.34 & 32.08 & 62.38 & 33.34 & 76.24 & 53.11 & 64.57 & 58.18 & 68.16 & 44.63\\*[-1.2pt]
 & \sd{2.43} & \sd{6.82} & \sd{2.55} & \sd{5.44} & \sd{8.06} & \sd{10.94} & \sd{0.35} & \sd{2.31} & \sd{2.07} & \sd{2.50} & \sd{3.09} & \sd{5.60} \\[2pt]
LSR$^{\dagger}$ & 74.33 & 54.79 & 69.50 & 41.28 & 62.62 & 38.78 & 79.68 & 60.73 & 67.85 & 63.14 & 70.80 & 51.74\\*[-1.2pt]
 & \sd{0.28} & \sd{2.01} & \sd{0.39} & \sd{1.56} & \sd{0.12} & \sd{1.22} & \sd{1.44} & \sd{3.39} & \sd{0.16} & \sd{2.02} & \sd{0.48} & \sd{2.04} \\[2pt]
GEORGE ($k$-means) & 77.48 & 60.18 & 81.72 & 73.32 & 62.73 & 56.60 & 77.47 & 71.56 & 64.03 & 52.82 & 72.69 & 62.89\\*[-1.2pt]
 & \sd{0.42} & \sd{2.37} & \sd{0.54} & \sd{2.36} & \sd{2.89} & \sd{3.62} & \sd{10.40} & \sd{14.47} & \sd{2.31} & \sd{6.26} & \sd{3.31} & \sd{5.82}\\[2pt]
GEORGE (GMM) & 76.67 & 57.68 & 70.44 & 58.98 & 64.95 & 59.42 & 75.14 & 69.90 & 75.30 & \underline{71.95} & 72.50 & 63.59\\*[-1.2pt]
 & \sd{0.37} & \sd{3.46} & \sd{3.73} & \sd{4.11} & \sd{1.24} & \sd{1.80} & \sd{6.57} & \sd{6.98} & \sd{0.34} & \sd{0.63} & \sd{2.45} & \sd{3.40}\\[2pt]
BPA & 64.21 & 54.34 & 65.52 & 54.04 & 57.45 & 51.97 & 62.88 & 50.18 & 59.96 & 50.19 & 62.00 & 52.14\\*[-1.2pt]
 & \sd{2.69} & \sd{4.11} & \sd{6.83} & \sd{6.41} & \sd{1.05} & \sd{1.95} & \sd{1.96} & \sd{3.19} & \sd{1.32} & \sd{3.25} & \sd{2.77} & \sd{3.78}\\[2pt]
SPARE & 69.81 & 46.64 & 76.34 & 57.52 & 57.68 & 34.70 & 79.93 & 69.67 & 66.08 & 51.24 & 69.97 & 51.95\\*[-1.2pt]
 & \sd{0.84} & \sd{2.84} & \sd{1.41} & \sd{3.99} & \sd{5.48} & \sd{15.51} & \sd{0.87} & \sd{5.36} & \sd{8.24} & \sd{9.91} & \sd{3.37} & \sd{7.52}\\[2pt]
\addlinespace[1pt]
\midrule
\addlinespace[1pt]
\multicolumn{13}{@{}l}{\emph{TFM adaptation}}\\*
TabPFN & 75.33 & 49.17 & 73.07 & 42.96 & 64.77 & 31.09 & 79.59 & 58.29 & 76.01 & 58.19 & 73.75 & 47.94\\*[-1.2pt]
 & \sd{0.05} & \sd{0.28} & \sd{0.31} & \sd{0.76} & \sd{0.04} & \sd{0.04} & \sd{0.32} & \sd{0.52} & \sd{0.25} & \sd{0.53} & \sd{0.19} & \sd{0.43}\\[2pt]
TabPFN (fine-tuned) & 76.33 & 50.80 & 73.10 & 42.99 & 65.03 & 31.71 & 78.88 & 56.25 & 76.11 & 58.23 & 73.89 & 48.00\\*[-1.2pt]
 & \sd{0.35} & \sd{0.47} & \sd{0.02} & \sd{0.21} & \sd{0.40} & \sd{2.04} & \sd{0.76} & \sd{1.98} & \sd{0.27} & \sd{0.66} & \sd{0.36} & \sd{1.07}\\[2pt]
TabPFN (class-balanced) & 75.92 & 50.98 & 73.22 & 43.42 & 65.27 & 32.76 & 77.79 & 53.24 & 76.48 & 61.37 & 73.74 & 48.35\\*[-1.2pt]
 & \sd{0.40} & \sd{1.14} & \sd{0.51} & \sd{1.42} & \sd{0.60} & \sd{1.73} & \sd{2.89} & \sd{8.24} & \sd{0.96} & \sd{5.68} & \sd{1.07} & \sd{3.64}\\[2pt]
\textbf{DR-TFM (GR; $k$-means)} & 79.64 & \textbf{67.90} & \underline{85.01} & \textbf{80.14} & \underline{69.42} & \underline{60.37} & \underline{85.79} & \underline{81.66} & 76.69 & \textbf{72.79} & \textbf{79.31} & \textbf{72.57}\\*[-1.2pt]
 & \sd{0.20} & \sd{1.61} & \sd{0.36} & \sd{0.53} & \sd{0.62} & \sd{1.70} & \sd{0.32} & \sd{0.50} & \sd{0.19} & \sd{0.62} & \sd{0.34} & \sd{0.99}\\[2pt]
\textbf{DR-TFM (GR; GMM)} & 79.10 & 65.58 & 84.74 & \underline{79.12} & \textbf{69.80} & \textbf{60.45} & \textbf{85.86} & \textbf{82.40} & 76.45 & 71.48 & \underline{79.19} & \underline{71.81}\\*[-1.2pt]
 & \sd{0.16} & \sd{1.47} & \sd{0.40} & \sd{1.03} & \sd{0.37} & \sd{1.66} & \sd{0.32} & \sd{1.77} & \sd{0.28} & \sd{1.06} & \sd{0.31} & \sd{1.40}\\[2pt]
\textbf{DR-TFM (MS-C)} & \textbf{79.78} & \underline{67.41} & \textbf{85.02} & 78.41 & 69.18 & 57.86 & 84.48 & 77.09 & \underline{77.13} & 65.80 & 79.12 & 69.31\\*[-1.2pt]
 & \sd{0.24} & \sd{0.64} & \sd{1.33} & \sd{1.49} & \sd{0.52} & \sd{4.76} & \sd{0.35} & \sd{0.66} & \sd{0.13} & \sd{0.70} & \sd{0.51} & \sd{1.65}\\[2pt]
\textbf{DR-TFM (MS-R)} & \underline{79.70} & 66.87 & 82.99 & 75.71 & 69.05 & 57.72 & 84.97 & 78.35 & \textbf{77.37} & 67.91 & 78.82 & 69.31\\*[-1.2pt]
 & \sd{0.05} & \sd{0.32} & \sd{1.52} & \sd{3.24} & \sd{0.49} & \sd{5.56} & \sd{0.73} & \sd{1.21} & \sd{0.52} & \sd{1.79} & \sd{0.66} & \sd{2.42}\\[2pt]
\end{longtable}
\endgroup

\begin{table}[H]
\caption{Results on the five tabular benchmarks for methods using true group labels.
Avg.\ denotes the average of each metric across datasets.
Values in parentheses are standard deviations over three seeds.
Bold method names indicate our variants.}
\label{appendix:tab1-known}
\centering
\scriptsize
\setlength{\tabcolsep}{3.4pt}
\begin{tabular*}{\textwidth}{@{\extracolsep{\fill}}lcccccccccc|cc}
\toprule
\multirow{2}{*}{Methods} & \multicolumn{2}{c}{Adult} & \multicolumn{2}{c}{Bank} & \multicolumn{2}{c}{Default} & \multicolumn{2}{c}{Shoppers} & \multicolumn{2}{c|}{Taxi} & \multicolumn{2}{c}{Avg.} \\
\cmidrule(lr){2-3}\cmidrule(lr){4-5}\cmidrule(lr){6-7}\cmidrule(lr){8-9}\cmidrule(lr){10-11}\cmidrule(lr){12-13}
 & Mean & Worst & Mean & Worst & Mean & Worst & Mean & Worst & Mean & Worst & Mean & Worst \\
\midrule
\multicolumn{13}{@{}l}{\emph{Robust methods}}\\
Fair-TabICL (group-balanced) & \textbf{80.11} & \underline{72.34} & \textbf{86.58} & \underline{78.54} & \textbf{69.86} & 57.73 & \textbf{86.24} & \textbf{83.38} & \textbf{77.45} & 68.03 & \textbf{80.05} & \underline{72.00}\\[-1.2pt]
 & \sd{0.23} & \sd{0.91} & \sd{0.09} & \sd{0.10} & \sd{0.23} & \sd{0.71} & \sd{0.85} & \sd{0.91} & \sd{0.32} & \sd{0.49} & \sd{0.34} & \sd{0.62}\\[2pt]
Fair-TabICL (uncertainty) & 61.80 & 21.35 & 71.52 & 40.51 & 64.93 & 31.18 & 77.80 & 52.81 & 75.94 & 57.61 & 70.40 & 40.69\\[-1.2pt]
 & \sd{0.74} & \sd{1.65} & \sd{0.14} & \sd{0.32} & \sd{0.12} & \sd{0.13} & \sd{0.43} & \sd{1.57} & \sd{0.14} & \sd{0.52} & \sd{0.31} & \sd{0.84}\\[2pt]
GroupDRO & 78.16 & \textbf{73.90} & 81.56 & 75.47 & 67.16 & \underline{61.82} & 80.30 & 76.28 & 75.21 & \textbf{71.47} & 76.48 & 71.79\\[-1.2pt]
 & \sd{0.22} & \sd{0.69} & \sd{0.57} & \sd{1.25} & \sd{0.45} & \sd{1.09} & \sd{1.26} & \sd{2.80} & \sd{0.04} & \sd{1.26} & \sd{0.51} & \sd{1.42}\\[2pt]
\addlinespace[1pt]
\midrule
\addlinespace[1pt]
\multicolumn{13}{@{}l}{\emph{TFM adaptation}}\\
\textbf{DR-TFM (GR)} & 78.36 & 72.26 & 86.12 & \textbf{81.94} & 69.58 & \textbf{65.30} & \underline{85.21} & \underline{80.52} & \underline{77.03} & 68.28 & 79.26 & \textbf{73.66}\\[-1.2pt]
 & \sd{0.52} & \sd{0.79} & \sd{0.93} & \sd{0.34} & \sd{0.26} & \sd{1.19} & \sd{0.39} & \sd{2.17} & \sd{0.39} & \sd{2.32} & \sd{0.50} & \sd{1.36}\\[2pt]
\textbf{DR-TFM (MS)} & \underline{78.86} & 72.32 & \underline{86.56} & 77.72 & \underline{69.85} & 54.71 & 85.09 & 79.71 & \textbf{77.45} & \underline{68.48} & \underline{79.56} & 70.59\\[-1.2pt]
 & \sd{1.55} & \sd{1.08} & \sd{0.39} & \sd{0.63} & \sd{0.39} & \sd{1.70} & \sd{0.99} & \sd{3.23} & \sd{0.27} & \sd{0.96} & \sd{0.72} & \sd{1.52}\\[2pt]
\bottomrule
\end{tabular*}
\end{table}

\begingroup
\scriptsize
\setlength{\tabcolsep}{1.4pt}
\setlength{\LTleft}{\fill}
\setlength{\LTright}{\fill}
\setlength{\LTcapwidth}{\textwidth}
\begin{longtable}{@{\extracolsep{\fill}}lcccccc|cc|cccccc|cc}
\caption{ACS Income results for methods without true group labels.
The left block reports within-state results and the right block reports transfers between states.
Avg.\ denotes the average of each metric across the three settings in its block.
Values in parentheses are standard deviations over three seeds.
Accuracy results marked $^{\dagger}$, including standard deviations, are taken from \citet{tong2025latent}.
Bold method names indicate our variants.}\label{appendix:tab2-free}\\
\toprule
\multirow{2}{*}{Methods} & \multicolumn{2}{c}{AZ} & \multicolumn{2}{c}{MA} & \multicolumn{2}{c|}{MI} & \multicolumn{2}{c|}{Avg.} & \multicolumn{2}{c}{AZ $\rightarrow$ MA} & \multicolumn{2}{c}{MA $\rightarrow$ MI} & \multicolumn{2}{c|}{MI $\rightarrow$ AZ} & \multicolumn{2}{c}{Avg.} \\
\cmidrule(lr){2-3}\cmidrule(lr){4-5}\cmidrule(lr){6-7}\cmidrule(lr){8-9}\cmidrule(lr){10-11}\cmidrule(lr){12-13}\cmidrule(lr){14-15}\cmidrule(lr){16-17}
 & Mean & Worst & Mean & Worst & Mean & Worst & Mean & Worst & Mean & Worst & Mean & Worst & Mean & Worst & Mean & Worst \\
\midrule
\endfirsthead
\caption[]{ACS Income, methods without true group labels (continued).}\\
\toprule
\multirow{2}{*}{Methods} & \multicolumn{2}{c}{AZ} & \multicolumn{2}{c}{MA} & \multicolumn{2}{c|}{MI} & \multicolumn{2}{c|}{Avg.} & \multicolumn{2}{c}{AZ $\rightarrow$ MA} & \multicolumn{2}{c}{MA $\rightarrow$ MI} & \multicolumn{2}{c|}{MI $\rightarrow$ AZ} & \multicolumn{2}{c}{Avg.} \\
\cmidrule(lr){2-3}\cmidrule(lr){4-5}\cmidrule(lr){6-7}\cmidrule(lr){8-9}\cmidrule(lr){10-11}\cmidrule(lr){12-13}\cmidrule(lr){14-15}\cmidrule(lr){16-17}
 & Mean & Worst & Mean & Worst & Mean & Worst & Mean & Worst & Mean & Worst & Mean & Worst & Mean & Worst & Mean & Worst \\
\midrule
\endhead
\midrule
\multicolumn{17}{r}{\footnotesize Continued on next page}\\
\endfoot
\bottomrule
\endlastfoot
\multicolumn{17}{@{}l}{\emph{Standard methods}}\\*
ERM-MLP & 77.20 & 59.42 & 80.82 & 73.82 & 75.69 & 57.99 & 77.90 & 63.74 & 76.82 & 60.23 & 79.13 & 71.34 & 75.56 & 56.31 & 77.17 & 62.63\\*[-1.2pt]
 & \sd{0.54} & \sd{1.59} & \sd{0.07} & \sd{0.61} & \sd{0.32} & \sd{1.88} & \sd{0.31} & \sd{1.36} & \sd{0.62} & \sd{2.91} & \sd{0.09} & \sd{0.66} & \sd{0.30} & \sd{2.40} & \sd{0.34} & \sd{1.99} \\[2pt]
XGBoost & 77.64 & 58.99 & 82.26 & 76.38 & 77.50 & 62.77 & 79.13 & 66.05 & 77.96 & 61.99 & 79.67 & 72.01 & 76.82 & 58.99 & 78.15 & 64.33\\*[-1.2pt]
 & \sd{0.27} & \sd{0.76} & \sd{0.25} & \sd{0.12} & \sd{0.11} & \sd{0.40} & \sd{0.21} & \sd{0.43} & \sd{0.14} & \sd{0.36} & \sd{0.06} & \sd{0.13} & \sd{0.10} & \sd{0.30} & \sd{0.10} & \sd{0.27} \\[2pt]
CatBoost & 77.35 & 58.26 & 81.97 & 75.59 & 77.27 & 62.43 & 78.86 & 65.43 & 77.66 & 61.15 & 79.60 & 72.31 & 76.68 & 58.71 & 77.98 & 64.06\\*[-1.2pt]
 & \sd{0.20} & \sd{0.60} & \sd{0.20} & \sd{0.19} & \sd{0.12} & \sd{0.13} & \sd{0.17} & \sd{0.30} & \sd{0.14} & \sd{0.25} & \sd{0.05} & \sd{0.42} & \sd{0.09} & \sd{0.04} & \sd{0.09} & \sd{0.23} \\[2pt]
\addlinespace[1pt]
\midrule
\addlinespace[1pt]
\multicolumn{17}{@{}l}{\emph{TFMs}}\\*
TabICL & 78.14 & 59.46 & 82.89 & 76.50 & 77.84 & 64.30 & 79.62 & 66.75 & 78.03 & 62.08 & \textbf{79.73} & 71.78 & 76.98 & 59.69 & 78.25 & 64.52\\*[-1.2pt]
 & \sd{0.09} & \sd{0.21} & \sd{0.03} & \sd{0.33} & \sd{0.23} & \sd{0.38} & \sd{0.12} & \sd{0.31} & \sd{0.07} & \sd{0.19} & \sd{0.06} & \sd{0.17} & \sd{0.05} & \sd{0.22} & \sd{0.06} & \sd{0.19} \\[2pt]
TabDPT (retrieval) & 77.25 & 57.85 & 81.39 & 75.43 & 76.80 & 61.13 & 78.48 & 64.80 & 76.92 & 59.51 & 78.89 & 70.70 & 75.21 & 54.67 & 77.01 & 61.63\\*[-1.2pt]
 & \sd{0.33} & \sd{0.58} & \sd{0.25} & \sd{0.24} & \sd{0.20} & \sd{0.52} & \sd{0.26} & \sd{0.44} & \sd{0.03} & \sd{0.20} & \sd{0.04} & \sd{0.19} & \sd{0.06} & \sd{0.22} & \sd{0.04} & \sd{0.21} \\[2pt]
TabDPT (full context) & 74.13 & 50.99 & 79.51 & 72.34 & 74.95 & 58.01 & 76.20 & 60.44 & 75.89 & 58.09 & 77.70 & 71.51 & 73.51 & 51.16 & 75.70 & 60.25\\*[-1.2pt]
 & \sd{0.41} & \sd{1.71} & \sd{0.85} & \sd{3.16} & \sd{0.58} & \sd{3.56} & \sd{0.61} & \sd{2.81} & \sd{0.73} & \sd{2.07} & \sd{0.33} & \sd{2.02} & \sd{0.47} & \sd{2.25} & \sd{0.51} & \sd{2.11} \\[2pt]
TuneTables & 74.34 & 52.56 & 78.13 & 70.04 & 72.22 & 49.83 & 74.90 & 57.48 & 72.98 & 46.66 & 77.35 & 67.66 & 72.42 & 46.06 & 74.25 & 53.46\\*[-1.2pt]
 & \sd{1.02} & \sd{3.03} & \sd{0.29} & \sd{1.74} & \sd{1.95} & \sd{4.90} & \sd{1.09} & \sd{3.22} & \sd{0.03} & \sd{1.77} & \sd{0.27} & \sd{2.57} & \sd{0.51} & \sd{1.21} & \sd{0.27} & \sd{1.85} \\[2pt]
MixturePFN & 76.62 & 56.62 & 81.89 & 76.01 & 77.77 & 64.52 & 78.76 & 65.72 & 77.18 & 60.52 & 79.32 & 72.02 & 76.07 & 58.36 & 77.52 & 63.63\\*[-1.2pt]
 & \sd{0.61} & \sd{2.21} & \sd{0.19} & \sd{0.68} & \sd{0.18} & \sd{0.85} & \sd{0.33} & \sd{1.25} & \sd{0.22} & \sd{0.62} & \sd{0.03} & \sd{0.42} & \sd{0.07} & \sd{0.23} & \sd{0.11} & \sd{0.42} \\[2pt]
LoCalPFN (retrieval) & 78.05 & 60.44 & 82.58 & \textbf{77.31} & 77.42 & 63.22 & 79.35 & 66.99 & 77.48 & 60.85 & 79.63 & \textbf{72.64} & 76.42 & 58.32 & 77.84 & 63.94\\*[-1.2pt]
 & \sd{0.26} & \sd{0.11} & \sd{0.07} & \sd{0.14} & \sd{0.09} & \sd{0.53} & \sd{0.14} & \sd{0.26} & \sd{0.07} & \sd{0.12} & \sd{0.05} & \sd{0.10} & \sd{0.07} & \sd{0.20} & \sd{0.07} & \sd{0.14} \\[2pt]
BETA & 76.73 & 58.08 & 80.67 & 73.25 & 76.80 & 62.12 & 78.07 & 64.49 & 77.25 & 60.70 & 79.39 & \underline{72.55} & 76.30 & 58.35 & 77.65 & 63.87\\*[-1.2pt]
 & \sd{0.17} & \sd{0.87} & \sd{0.22} & \sd{0.75} & \sd{0.21} & \sd{1.29} & \sd{0.20} & \sd{0.97} & \sd{0.49} & \sd{1.65} & \sd{0.04} & \sd{0.83} & \sd{0.03} & \sd{0.42} & \sd{0.19} & \sd{0.97} \\[2pt]
DistPFN & 78.09 & 59.55 & \underline{82.98} & 76.96 & 77.82 & 63.78 & 79.63 & 66.76 & 78.76 & 65.16 & 79.43 & 71.58 & 76.70 & 58.56 & 78.29 & 65.10\\*[-1.2pt]
 & \sd{0.19} & \sd{0.26} & \sd{0.11} & \sd{0.27} & \sd{0.07} & \sd{0.27} & \sd{0.13} & \sd{0.27} & \sd{0.03} & \sd{0.02} & \sd{0.03} & \sd{0.13} & \sd{0.11} & \sd{0.23} & \sd{0.05} & \sd{0.13}\\[2pt]
\addlinespace[1pt]
\midrule
\addlinespace[1pt]
\multicolumn{17}{@{}l}{\emph{Robust methods}}\\*
CVaR-DRO$^{\dagger}$ & 77.00 & 64.41 & 77.85 & 72.85 & 73.48 & 61.88 & 76.11 & 66.38 & 74.10 & 59.42 & 74.85 & 68.57 & 71.75 & 54.67 & 73.57 & 60.89\\*[-1.2pt]
 & \sd{0.14} & \sd{3.70} & \sd{1.27} & \sd{1.88} & \sd{0.74} & \sd{4.97} & \sd{0.72} & \sd{3.52} & \sd{0.35} & \sd{2.51} & \sd{0.14} & \sd{0.65} & \sd{0.28} & \sd{2.55} & \sd{0.26} & \sd{1.90} \\[2pt]
$\chi^2$-DRO$^{\dagger}$ & 76.95 & 64.07 & 77.55 & 71.82 & 73.13 & 59.03 & 75.88 & 64.97 & 74.53 & 61.18 & 74.23 & 68.53 & 71.33 & 53.05 & 73.36 & 60.92\\*[-1.2pt]
 & \sd{0.14} & \sd{4.60} & \sd{0.92} & \sd{1.58} & \sd{1.38} & \sd{2.06} & \sd{0.81} & \sd{2.75} & \sd{0.25} & \sd{1.15} & \sd{0.18} & \sd{0.67} & \sd{0.60} & \sd{1.76} & \sd{0.34} & \sd{1.19} \\[2pt]
KL-DRO$^{\dagger}$ & 75.68 & 60.98 & 78.23 & 71.22 & 76.10 & 62.88 & 76.67 & 65.03 & 74.20 & 57.78 & 75.35 & 68.72 & 74.73 & 57.20 & 74.76 & 61.23\\*[-1.2pt]
 & \sd{1.23} & \sd{3.00} & \sd{0.32} & \sd{2.41} & \sd{0.07} & \sd{0.37} & \sd{0.54} & \sd{1.93} & \sd{1.20} & \sd{2.91} & \sd{0.78} & \sd{1.71} & \sd{0.25} & \sd{0.97} & \sd{0.74} & \sd{1.86} \\[2pt]
EIIL$^{\dagger}$ & 74.98 & 55.22 & 78.18 & 68.42 & 75.60 & 63.77 & 76.25 & 62.47 & 75.35 & 56.80 & 76.92 & 65.62 & 74.78 & 57.32 & 75.68 & 59.91\\*[-1.2pt]
 & \sd{1.80} & \sd{6.65} & \sd{0.67} & \sd{5.86} & \sd{0.49} & \sd{3.19} & \sd{0.99} & \sd{5.23} & \sd{1.70} & \sd{5.52} & \sd{1.59} & \sd{5.88} & \sd{1.51} & \sd{4.95} & \sd{1.60} & \sd{5.45} \\[2pt]
JTT$^{\dagger}$ & 75.70 & 57.73 & 77.48 & 68.48 & 74.48 & 60.63 & 75.89 & 62.28 & 73.90 & 55.17 & 74.00 & 66.57 & 72.65 & 55.17 & 73.52 & 58.97\\*[-1.2pt]
 & \sd{1.06} & \sd{2.49} & \sd{0.25} & \sd{2.70} & \sd{0.53} & \sd{1.84} & \sd{0.61} & \sd{2.34} & \sd{0.78} & \sd{3.14} & \sd{0.28} & \sd{2.33} & \sd{0.35} & \sd{0.62} & \sd{0.47} & \sd{2.03} \\[2pt]
FAM$^{\dagger}$ & 75.00 & 57.82 & 78.73 & 70.12 & 74.38 & 59.45 & 76.04 & 62.46 & 72.93 & 54.60 & 74.68 & 66.35 & 73.40 & 54.53 & 73.67 & 58.49\\*[-1.2pt]
 & \sd{0.57} & \sd{2.17} & \sd{0.03} & \sd{3.33} & \sd{2.09} & \sd{6.29} & \sd{0.90} & \sd{3.93} & \sd{0.53} & \sd{2.45} & \sd{0.11} & \sd{2.44} & \sd{1.27} & \sd{5.24} & \sd{0.64} & \sd{3.38} \\[2pt]
SRDO$^{\dagger}$ & 75.17 & 60.03 & 77.85 & 69.52 & 73.30 & 54.63 & 75.44 & 61.39 & 74.00 & 58.37 & 74.78 & 61.77 & 73.83 & 60.05 & 74.20 & 60.06\\*[-1.2pt]
 & \sd{2.22} & \sd{10.23} & \sd{1.01} & \sd{2.49} & \sd{1.22} & \sd{5.75} & \sd{1.48} & \sd{6.16} & \sd{1.83} & \sd{8.57} & \sd{0.49} & \sd{4.26} & \sd{0.35} & \sd{0.96} & \sd{0.89} & \sd{4.60} \\[2pt]
LSR$^{\dagger}$ & 76.28 & 66.42 & 78.35 & 73.33 & 75.53 & 67.10 & 76.72 & 68.95 & 75.00 & 62.85 & 74.75 & 68.75 & 74.73 & 64.10 & 74.83 & 65.23\\*[-1.2pt]
 & \sd{0.11} & \sd{0.89} & \sd{0.14} & \sd{1.48} & \sd{1.77} & \sd{1.61} & \sd{0.67} & \sd{1.33} & \sd{0.07} & \sd{0.40} & \sd{0.35} & \sd{0.30} & \sd{0.35} & \sd{0.35} & \sd{0.26} & \sd{0.35} \\[2pt]
GEORGE ($k$-means) & 74.76 & 58.70 & 76.65 & 68.82 & 72.82 & 66.52 & 74.74 & 64.68 & 73.25 & 61.69 & 73.22 & 64.80 & 74.21 & 62.10 & 73.56 & 62.86\\*[-1.2pt]
 & \sd{2.95} & \sd{14.27} & \sd{1.68} & \sd{5.90} & \sd{1.78} & \sd{1.05} & \sd{2.14} & \sd{7.07} & \sd{4.66} & \sd{3.06} & \sd{1.18} & \sd{4.67} & \sd{2.42} & \sd{5.66} & \sd{2.75} & \sd{4.46} \\[2pt]
GEORGE (GMM) & 65.41 & 61.03 & 61.45 & 55.58 & 65.06 & 59.28 & 63.97 & 58.63 & 66.73 & 61.40 & 62.24 & 58.05 & 61.35 & 55.30 & 63.44 & 58.25\\*[-1.2pt]
 & \sd{5.34} & \sd{5.03} & \sd{1.48} & \sd{2.59} & \sd{0.83} & \sd{0.87} & \sd{2.55} & \sd{2.83} & \sd{2.57} & \sd{3.87} & \sd{0.36} & \sd{1.89} & \sd{4.65} & \sd{8.68} & \sd{2.53} & \sd{4.81} \\[2pt]
BPA & 63.26 & 54.44 & 65.78 & 58.90 & 63.42 & 58.18 & 64.15 & 57.17 & 65.65 & 59.36 & 62.72 & 57.64 & 63.57 & 58.96 & 63.98 & 58.65\\*[-1.2pt]
 & \sd{1.32} & \sd{4.12} & \sd{1.46} & \sd{1.97} & \sd{0.68} & \sd{1.31} & \sd{1.15} & \sd{2.47} & \sd{0.57} & \sd{2.86} & \sd{1.20} & \sd{1.08} & \sd{0.16} & \sd{0.71} & \sd{0.65} & \sd{1.55}\\[2pt]
SPARE & 74.35 & 69.94 & 77.51 & 70.17 & 71.05 & 61.67 & 74.31 & 67.26 & 74.23 & 65.44 & 73.23 & 65.67 & 70.27 & 58.96 & 72.58 & 63.36\\*[-1.2pt]
 & \sd{2.61} & \sd{4.73} & \sd{1.49} & \sd{1.32} & \sd{1.82} & \sd{5.26} & \sd{1.97} & \sd{3.77} & \sd{0.89} & \sd{2.30} & \sd{0.97} & \sd{1.74} & \sd{1.20} & \sd{5.30} & \sd{1.02} & \sd{3.11} \\[2pt]
\addlinespace[1pt]
\midrule
\addlinespace[1pt]
\multicolumn{17}{@{}l}{\emph{TFM adaptation}}\\*
TabPFN & 78.19 & 59.87 & \textbf{83.01} & 77.02 & 77.72 & 63.50 & 79.64 & 66.80 & 77.95 & 61.57 & \underline{79.69} & 71.86 & 76.81 & 59.08 & 78.15 & 64.17\\*[-1.2pt]
 & \sd{0.32} & \sd{0.50} & \sd{0.12} & \sd{0.27} & \sd{0.03} & \sd{0.20} & \sd{0.16} & \sd{0.32} & \sd{0.07} & \sd{0.18} & \sd{0.08} & \sd{0.25} & \sd{0.05} & \sd{0.12} & \sd{0.07} & \sd{0.18}\\[2pt]
TabPFN (fine-tuned) & 78.23 & 60.35 & 82.95 & \underline{77.03} & 78.23 & 65.02 & 79.80 & 67.47 & 77.95 & 62.05 & 79.62 & 72.20 & 76.76 & 58.77 & 78.11 & 64.34\\*[-1.2pt]
 & \sd{0.20} & \sd{0.39} & \sd{0.11} & \sd{0.33} & \sd{0.24} & \sd{0.56} & \sd{0.18} & \sd{0.43} & \sd{0.10} & \sd{0.68} & \sd{0.07} & \sd{0.21} & \sd{0.08} & \sd{0.63} & \sd{0.08} & \sd{0.50} \\[2pt]
TabPFN (class-balanced) & 78.52 & 61.44 & \underline{82.98} & 76.72 & 78.12 & 64.74 & 79.87 & 67.63 & 78.55 & 64.46 & 79.67 & 70.23 & 76.84 & 59.15 & 78.35 & 64.61\\*[-1.2pt]
 & \sd{0.35} & \sd{1.76} & \sd{0.09} & \sd{0.20} & \sd{0.55} & \sd{2.05} & \sd{0.33} & \sd{1.34} & \sd{1.13} & \sd{5.11} & \sd{0.04} & \sd{0.17} & \sd{0.10} & \sd{0.23} & \sd{0.42} & \sd{1.84}\\[2pt]
\textbf{DR-TFM (GR; $k$-means)} & 78.81 & \underline{70.36} & 79.96 & 70.93 & 78.87 & \textbf{71.90} & 79.21 & 71.06 & 78.43 & 69.70 & 78.73 & 66.43 & 78.46 & 68.84 & 78.54 & 68.32\\*[-1.2pt]
 & \sd{0.13} & \sd{1.00} & \sd{0.59} & \sd{1.31} & \sd{0.30} & \sd{0.86} & \sd{0.34} & \sd{1.06} & \sd{0.15} & \sd{0.72} & \sd{0.68} & \sd{0.34} & \sd{0.15} & \sd{0.59} & \sd{0.32} & \sd{0.55}\\[2pt]
\textbf{DR-TFM (GR; GMM)} & 79.22 & \textbf{71.29} & 81.51 & 73.97 & 78.74 & 71.25 & 79.82 & \textbf{72.17} & 78.88 & 70.08 & 79.10 & 66.99 & 78.34 & 68.97 & 78.77 & 68.68\\*[-1.2pt]
 & \sd{0.26} & \sd{1.61} & \sd{0.42} & \sd{0.90} & \sd{0.65} & \sd{1.46} & \sd{0.44} & \sd{1.33} & \sd{0.42} & \sd{1.01} & \sd{0.55} & \sd{1.32} & \sd{0.32} & \sd{0.85} & \sd{0.43} & \sd{1.06}\\[2pt]
\textbf{DR-TFM (MS-C)} & \textbf{79.46} & 69.30 & 81.24 & 72.75 & \textbf{79.30} & \underline{71.42} & \underline{80.00} & 71.16 & \underline{80.08} & \underline{73.75} & 78.82 & 70.53 & \textbf{78.55} & \underline{71.38} & \underline{79.15} & \textbf{71.89}\\*[-1.2pt]
 & \sd{0.36} & \sd{2.39} & \sd{1.00} & \sd{2.43} & \sd{0.22} & \sd{1.84} & \sd{0.53} & \sd{2.22} & \sd{0.05} & \sd{1.44} & \sd{0.48} & \sd{1.12} & \sd{0.09} & \sd{0.35} & \sd{0.21} & \sd{0.97}\\[2pt]
\textbf{DR-TFM (MS-R)} & \underline{79.43} & 70.31 & 81.91 & 73.94 & \underline{78.90} & 71.10 & \textbf{80.08} & \underline{71.78} & \textbf{80.15} & \textbf{74.93} & 79.51 & 66.86 & \underline{78.54} & \textbf{71.50} & \textbf{79.40} & \underline{71.10}\\*[-1.2pt]
 & \sd{0.16} & \sd{0.58} & \sd{0.15} & \sd{1.09} & \sd{0.12} & \sd{1.90} & \sd{0.14} & \sd{1.19} & \sd{0.17} & \sd{0.26} & \sd{0.09} & \sd{0.32} & \sd{0.06} & \sd{0.31} & \sd{0.11} & \sd{0.30}\\[2pt]
\end{longtable}
\endgroup

\begin{table}[H]
\caption{ACS Income results for methods using true group labels.
Avg.\ denotes the average of each metric across the three settings in its block.
Values in parentheses are standard deviations over three seeds.
Bold method names indicate our variants.}
\label{appendix:tab2-known}
\centering
\scriptsize
\setlength{\tabcolsep}{1.5pt}
\begin{tabular*}{\textwidth}{@{\extracolsep{\fill}}lcccccc|cc|cccccc|cc}
\toprule
\multirow{2}{*}{Methods} & \multicolumn{2}{c}{AZ} & \multicolumn{2}{c}{MA} & \multicolumn{2}{c|}{MI} & \multicolumn{2}{c|}{Avg.} & \multicolumn{2}{c}{AZ $\rightarrow$ MA} & \multicolumn{2}{c}{MA $\rightarrow$ MI} & \multicolumn{2}{c|}{MI $\rightarrow$ AZ} & \multicolumn{2}{c}{Avg.} \\
\cmidrule(lr){2-3}\cmidrule(lr){4-5}\cmidrule(lr){6-7}\cmidrule(lr){8-9}\cmidrule(lr){10-11}\cmidrule(lr){12-13}\cmidrule(lr){14-15}\cmidrule(lr){16-17}
 & Mean & Worst & Mean & Worst & Mean & Worst & Mean & Worst & Mean & Worst & Mean & Worst & Mean & Worst & Mean & Worst \\
\midrule
\multicolumn{17}{@{}l}{\emph{Robust methods}}\\
Fair-TabICL (group-balanced) & 79.43 & 74.38 & 81.42 & 78.68 & \textbf{78.96} & 75.19 & \textbf{79.94} & 76.09 & \underline{80.10} & \underline{78.18} & \underline{79.56} & 75.01 & \textbf{78.87} & 75.41 & \textbf{79.51} & 76.20 \\[-1.2pt]
 & \sd{0.24} & \sd{0.09} & \sd{0.03} & \sd{0.82} & \sd{0.20} & \sd{1.21} & \sd{0.16} & \sd{0.71} & \sd{0.08} & \sd{0.03} & \sd{0.06} & \sd{0.54} & \sd{0.15} & \sd{1.33} & \sd{0.10} & \sd{0.63} \\[2pt]
Fair-TabICL (uncertainty) & 77.72 & 58.40 & \textbf{82.36} & 77.46 & 76.71 & 62.07 & 78.93 & 65.98 & 77.41 & 60.40 & \textbf{79.59} & 73.40 & 76.24 & 57.21 & 77.75 & 63.67 \\[-1.2pt]
 & \sd{0.17} & \sd{0.57} & \sd{0.05} & \sd{0.21} & \sd{0.31} & \sd{0.61} & \sd{0.18} & \sd{0.47} & \sd{0.08} & \sd{0.39} & \sd{0.12} & \sd{0.25} & \sd{0.13} & \sd{0.13} & \sd{0.11} & \sd{0.26} \\[2pt]
GroupDRO & \textbf{79.74} & \textbf{78.39} & 81.39 & \underline{78.91} & 78.23 & \textbf{76.70} & 79.79 & \textbf{78.00} & 79.19 & 76.22 & 79.26 & 76.24 & 78.40 & \underline{75.50} & 78.95 & 75.99 \\[-1.2pt]
 & \sd{0.09} & \sd{0.20} & \sd{0.43} & \sd{0.62} & \sd{0.34} & \sd{0.38} & \sd{0.29} & \sd{0.40} & \sd{0.10} & \sd{0.26} & \sd{0.09} & \sd{0.30} & \sd{0.17} & \sd{0.53} & \sd{0.12} & \sd{0.37} \\[2pt]
\addlinespace[1pt]
\midrule
\addlinespace[1pt]
\multicolumn{17}{@{}l}{\emph{TFM adaptation}}\\
\textbf{DR-TFM (GR)} & 79.55 & \underline{76.36} & \underline{81.45} & \textbf{79.27} & \underline{78.64} & \underline{76.09} & \underline{79.88} & \underline{77.24} & 79.61 & 78.02 & 79.37 & \underline{76.36} & 78.50 & \textbf{76.25} & 79.16 & \textbf{76.88} \\[-1.2pt]
 & \sd{0.40} & \sd{0.91} & \sd{0.34} & \sd{0.49} & \sd{0.25} & \sd{0.80} & \sd{0.33} & \sd{0.73} & \sd{0.47} & \sd{1.54} & \sd{0.03} & \sd{0.93} & \sd{0.05} & \sd{0.37} & \sd{0.19} & \sd{0.95} \\[2pt]
\textbf{DR-TFM (MS)} & \underline{79.57} & 75.05 & 81.35 & 77.59 & 78.47 & 75.83 & 79.80 & 76.16 & \textbf{80.13} & \textbf{78.60} & 79.31 & \textbf{76.60} & \underline{78.76} & 74.77 & \underline{79.40} & \underline{76.66} \\[-1.2pt]
 & \sd{0.26} & \sd{1.67} & \sd{0.40} & \sd{0.57} & \sd{0.26} & \sd{0.44} & \sd{0.30} & \sd{0.89} & \sd{0.33} & \sd{0.31} & \sd{0.05} & \sd{0.64} & \sd{0.03} & \sd{0.95} & \sd{0.14} & \sd{0.63} \\[2pt]
\bottomrule
\end{tabular*}
\end{table}

\subsection{Computational cost}
\label{appendix:runtime}
All runtimes were measured on a server equipped with two Intel Xeon Gold 6230R
CPUs (2.10 GHz), using a single NVIDIA GeForce RTX 3090 GPU.
Each experiment was timed separately, with no other experiments running concurrently.

\begin{table}[H]
\centering
\caption{Runtime in seconds per reported result, averaged over three seeds.
Times include hyperparameter selection and all training or adaptation stages;
TabPFN requires no parameter updates.}
\label{tab:runtime}
\small
\begin{tabular}{l ccccc | c}
\toprule
Method & Adult & Bank & Default & Shoppers & Taxi & Avg. \\
\midrule
XGBoost             & 5.5 & 6.4 & 7.1 & 4.6 & 4.1 & 5.5 \\
CatBoost            & 22.8 & 25.2 & 24.1 & 12.8 & 12.8 & 19.5 \\
TabPFN           & 7.5 & 7.8 & 5.5 & 3.2 & 3.1 & 5.4 \\
ERM-MLP             & 32.0 & 36.6 & 29.9 & 15.1 & 14.8 & 25.7 \\
DR-TFM (GR; $k$-means) & 25.6 & 50.2 & 24.6 & 9.1 & 9.4 & 23.8 \\
DR-TFM (MS; $k$-means) & 35.9 & 45.0 & 36.6 & 16.9 & 18.8 & 30.6 \\
GEORGE ($k$-means)    & 98.4 & 117.4 & 89.1 & 40.2 & 39.8 & 77.0 \\
SPARE               & 136.6 & 212.5 & 104.4 & 34.4 & 29.7 & 103.5 \\
GroupDRO        & 183.0 & 213.0 & 163.8 & 76.2 & 84.3 & 144.1 \\
TabPFN (fine-tuned) & 452.3 & 347.0 & 123.1 & 53.1 & 94.6 & 214.1 \\
LSR (all stages) & 6,602.0 & 6,171.6 & 5,619.8 & 3,700.6 & 3,688.0 & 5,156.4 \\
MixturePFN          & 327.9 & 322.0 & 290.8 & 91.4 & 115.4 & 229.5 \\
LoCalPFN (retrieval) & 354.3 & 126.1 & 74.1 & 29.1 & 26.1 & 121.9 \\
BETA                & 1,903.5 & 1,399.6 & 887.5 & 657.3 & 255.0 & 1,020.6 \\
\bottomrule
\end{tabular}
\end{table}

Table~\ref{tab:memory} reports peak GPU memory for the methods in
Table~\ref{tab:main-runtime}, measured as the maximum memory allocated by PyTorch
during a run and averaged over three seeds.

\begin{table}[H]
\centering
\caption{Peak GPU memory in GB per reported result, averaged over three seeds.}
\label{tab:memory}
\small
\begin{tabular}{l ccccc | c}
\toprule
Method & Adult & Bank & Default & Shoppers & Taxi & Avg. \\
\midrule
TabPFN           & 0.77 & 0.77 & 0.62 & 0.44 & 0.43 & 0.61 \\
DR-TFM (GR; $k$-means) & 1.37 & 1.62 & 1.15 & 0.60 & 0.61 & 1.07 \\
DR-TFM (MS; $k$-means) & 1.27 & 1.41 & 1.02 & 0.61 & 0.58 & 0.98 \\
GEORGE ($k$-means)    & 1.62 & 1.63 & 1.60 & 0.71 & 0.51 & 1.21 \\
TabPFN (fine-tuned) & 10.98 & 14.45 & 11.81 & 5.25 & 3.75 & 9.25 \\
LSR (all stages) & 3.56 & 5.07 & 4.82 & 1.50 & 0.67 & 3.12 \\
MixturePFN          & 2.14 & 2.17 & 2.46 & 2.25 & 1.83 & 2.17 \\
BETA                & 14.32 & 14.32 & 14.40 & 14.34 & 14.30 & 14.33 \\
\bottomrule
\end{tabular}
\end{table}

\subsection{Training objective and adaptation strategy}
\label{appendix:objswap}
For the comparison in Table~\ref{tab:main-objswap}, DRO-based adaptation uses
the objective and model selection of DR-TFM (GR),
while ERM-based adaptation follows the standard TabPFN fine-tuning objective
in \eqref{eq:standard_finetuning}.
Input-adapter tuning uses a single ensemble member of the BETA adapter architecture
\citep{LiuBeta2025} and the training schedule used for query scaling.
Table~\ref{tab:objswap} reports the per-dataset results for input-adapter,
full-backbone, encoder, decoder, and query-scaling adaptation.
Tables~\ref{tab:objswap-runtime} and~\ref{tab:objswap-memory} report the corresponding
runtime and peak GPU memory per result.

\paragraph{Fine-tuned parameters.}
The pretrained TabPFN contains $53{,}153{,}144$ parameters.
Encoder fine-tuning updates all parameters outside the decoder ($52{,}725{,}752$).
Decoder fine-tuning updates its key and query projections and scaling networks
($427{,}392$), whereas query scaling updates only $8{,}320$ parameters.
The input adapter contains $33{,}481$--$82{,}536$ parameters across the five
benchmarks ($0.06\%$--$0.16\%$ of the combined adapter and pretrained model),
depending on the input dimension and categorical cardinalities.

\begin{table}[H]
\centering
\caption{Mean group accuracy and worst-group accuracy under ERM-based and DRO-based adaptation (GR) for
five adaptation strategies, averaged over three seeds. Avg.\ denotes the average
of each metric across the five tabular benchmarks.
Values in parentheses are standard deviations over three seeds.
Parenthesized values in the Avg.\ columns are the averages of the per-dataset standard deviations.}
\label{tab:objswap}
\scriptsize
\setlength{\tabcolsep}{3.4pt}
\begin{tabular*}{\textwidth}{@{\extracolsep{\fill}}lcccccccccc|cc}
\toprule
\multirow{2}{*}{Adaptation} & \multicolumn{2}{c}{Adult} & \multicolumn{2}{c}{Bank} & \multicolumn{2}{c}{Default} & \multicolumn{2}{c}{Shoppers} & \multicolumn{2}{c|}{Taxi} & \multicolumn{2}{c}{Avg.} \\
\cmidrule(lr){2-3}\cmidrule(lr){4-5}\cmidrule(lr){6-7}\cmidrule(lr){8-9}\cmidrule(lr){10-11}\cmidrule(lr){12-13}
 & Mean & Worst & Mean & Worst & Mean & Worst & Mean & Worst & Mean & Worst & Mean & Worst \\
\midrule
\multicolumn{13}{@{}l}{\emph{Input adapter}}\\
\quad ERM & 74.34 & 46.89 & 69.39 & 36.98 & 64.39 & 30.34 & 78.31 & 55.67 & 75.38 & 57.36 & 72.36 & 45.45 \\[-1.2pt]
 & \sd{0.67} & \sd{1.58} & \sd{1.53} & \sd{2.81} & \sd{0.10} & \sd{0.53} & \sd{0.91} & \sd{2.12} & \sd{0.14} & \sd{0.22} & \sd{0.67} & \sd{1.45} \\[2pt]
\quad \textbf{DRO} & 78.83 & \underline{65.90} & 82.70 & 73.21 & 67.92 & 47.05 & 84.96 & 79.60 & \textbf{77.23} & 67.43 & 78.33 & 66.64 \\[-1.2pt]
 & \sd{0.51} & \sd{1.58} & \sd{1.34} & \sd{1.38} & \sd{0.38} & \sd{1.44} & \sd{0.61} & \sd{0.57} & \sd{0.27} & \sd{0.97} & \sd{0.62} & \sd{1.19} \\[2pt]
\addlinespace[2pt]
\multicolumn{13}{@{}l}{\emph{Full backbone}}\\
\quad ERM & 76.33 & 50.80 & 73.10 & 42.99 & 65.03 & 31.71 & 78.88 & 56.25 & 76.11 & 58.23 & 73.89 & 48.00 \\[-1.2pt]
 & \sd{0.35} & \sd{0.47} & \sd{0.02} & \sd{0.21} & \sd{0.40} & \sd{2.04} & \sd{0.76} & \sd{1.98} & \sd{0.27} & \sd{0.66} & \sd{0.36} & \sd{1.07} \\[2pt]
\quad \textbf{DRO} & 78.01 & 59.56 & \textbf{85.32} & \underline{79.81} & 69.02 & 60.18 & 83.34 & 72.73 & 75.72 & 72.64 & 78.28 & 68.98 \\[-1.2pt]
 & \sd{0.86} & \sd{3.82} & \sd{0.80} & \sd{1.36} & \sd{0.70} & \sd{2.86} & \sd{1.99} & \sd{8.66} & \sd{0.91} & \sd{1.65} & \sd{1.05} & \sd{3.67} \\[2pt]
\addlinespace[2pt]
\multicolumn{13}{@{}l}{\emph{Encoder}}\\
\quad ERM & 77.05 & 52.15 & 73.50 & 44.01 & 65.02 & 31.29 & 78.88 & 56.25 & 76.13 & 58.23 & 74.11 & 48.39 \\[-1.2pt]
 & \sd{0.13} & \sd{0.56} & \sd{0.20} & \sd{1.13} & \sd{0.38} & \sd{1.32} & \sd{0.76} & \sd{1.98} & \sd{0.33} & \sd{0.66} & \sd{0.36} & \sd{1.13} \\[2pt]
\quad \textbf{DRO} & 77.96 & 59.40 & \underline{85.23} & 79.41 & 69.18 & \textbf{60.42} & 83.30 & 72.94 & 75.40 & 72.41 & 78.22 & 68.92 \\[-1.2pt]
 & \sd{0.85} & \sd{3.70} & \sd{0.69} & \sd{0.96} & \sd{0.47} & \sd{2.34} & \sd{1.93} & \sd{9.01} & \sd{0.73} & \sd{1.44} & \sd{0.93} & \sd{3.49} \\[2pt]
\addlinespace[2pt]
\multicolumn{13}{@{}l}{\emph{Decoder}}\\
\quad ERM & 74.84 & 48.17 & 72.78 & 42.91 & 64.89 & 31.66 & 79.89 & 61.31 & 76.30 & 59.32 & 73.74 & 48.68 \\[-1.2pt]
 & \sd{0.48} & \sd{1.27} & \sd{1.02} & \sd{1.70} & \sd{1.03} & \sd{2.44} & \sd{0.19} & \sd{0.80} & \sd{0.08} & \sd{0.26} & \sd{0.56} & \sd{1.29} \\[2pt]
\quad \textbf{DRO} & \underline{79.24} & 65.50 & 84.49 & 78.88 & \underline{69.33} & 59.42 & \textbf{85.86} & \textbf{82.51} & 76.25 & \textbf{73.31} & \underline{79.03} & \underline{71.93} \\[-1.2pt]
 & \sd{0.33} & \sd{0.88} & \sd{0.48} & \sd{1.35} & \sd{0.58} & \sd{1.13} & \sd{0.40} & \sd{0.17} & \sd{0.69} & \sd{0.93} & \sd{0.49} & \sd{0.89} \\[2pt]
\addlinespace[2pt]
\multicolumn{13}{@{}l}{\emph{Query scaling}}\\
\quad ERM & 74.86 & 48.58 & 72.29 & 41.83 & 64.85 & 30.94 & 78.62 & 56.42 & 76.05 & 58.92 & 73.34 & 47.34 \\[-1.2pt]
 & \sd{0.16} & \sd{0.52} & \sd{1.02} & \sd{1.99} & \sd{0.54} & \sd{1.31} & \sd{0.32} & \sd{1.45} & \sd{0.27} & \sd{0.79} & \sd{0.46} & \sd{1.21} \\[2pt]
\quad \textbf{DRO} & \textbf{79.64} & \textbf{67.90} & 85.01 & \textbf{80.14} & \textbf{69.42} & \underline{60.37} & \underline{85.79} & \underline{81.66} & \underline{76.69} & \underline{72.79} & \textbf{79.31} & \textbf{72.57} \\[-1.2pt]
 & \sd{0.20} & \sd{1.61} & \sd{0.36} & \sd{0.53} & \sd{0.62} & \sd{1.70} & \sd{0.32} & \sd{0.50} & \sd{0.19} & \sd{0.62} & \sd{0.34} & \sd{0.99} \\[2pt]
\bottomrule
\end{tabular*}
\end{table}

\begin{table}[H]
\centering
\caption{Runtime in seconds per reported result under DRO-based adaptation for the
adaptation strategies in Table~\ref{tab:main-objswap}, averaged over three seeds.
Times include group estimation, adaptation, and prediction.}
\label{tab:objswap-runtime}
\small
\begin{tabular}{l ccccc | c}
\toprule
Fine-tuned parameters & Adult & Bank & Default & Shoppers & Taxi & Avg. \\
\midrule
Input adapter & 56.5 & 63.0 & 40.8 & 26.8 & 30.3 & 43.5 \\
Encoder & 214.2 & 439.1 & 364.7 & 80.0 & 107.8 & 241.2 \\
Full backbone & 213.8 & 438.3 & 383.8 & 90.3 & 106.8 & 246.6 \\
Decoder & 29.9 & 45.3 & 25.9 & 9.6 & 9.5 & 24.0 \\
Query scaling & 25.6 & 50.2 & 24.6 & 9.1 & 9.4 & 23.8 \\
\bottomrule
\end{tabular}
\end{table}

\begin{table}[H]
\centering
\caption{Peak GPU memory in GB per reported result under DRO-based adaptation for the
adaptation strategies in Table~\ref{tab:main-objswap}, averaged over three seeds.}
\label{tab:objswap-memory}
\small
\begin{tabular}{l ccccc | c}
\toprule
Fine-tuned parameters & Adult & Bank & Default & Shoppers & Taxi & Avg. \\
\midrule
Input adapter & 12.58 & 15.45 & 10.86 & 5.02 & 5.09 & 9.80 \\
Encoder & 12.22 & 16.10 & 13.20 & 5.32 & 4.27 & 10.22 \\
Full backbone & 12.34 & 16.24 & 13.29 & 5.36 & 4.31 & 10.31 \\
Decoder & 1.38 & 1.68 & 1.18 & 0.62 & 0.63 & 1.10 \\
Query scaling & 1.37 & 1.62 & 1.15 & 0.60 & 0.61 & 1.07 \\
\bottomrule
\end{tabular}
\end{table}

\subsection{Discussion of the \texorpdfstring{$\chi^2$}{chi-squared}-DRO formulation}
\label{appendix:fdro}
We examine a variant that reweights individual samples within a $\chi^2$
ambiguity set, without estimating groups.

\paragraph{Formulation.}
The variant retains the fixed pretrained TabPFN feature map and adapts the same
query scaling network. Write
$\ell_\theta(x,y)=\ell(f_\theta(x;\mathcal D^{(c)}),y)$.
We minimize the robust risk
\begin{equation*}
  \sup_{Q\,:\,D_{\chi^2}(Q\,\|\,\widehat P^{(e)})\,\le\,\rho} \mathbb{E}_{Q}[\ell_\theta]
  \;=\;
  \inf_{\eta\in\mathbb{R}}
  \Big\{ \sqrt{2\rho+1}\,\big(\mathbb{E}_{\widehat P^{(e)}}\big[(\ell_\theta-\eta)_{+}^{2}\big]\big)^{1/2} + \eta \Big\}
\end{equation*}
over $\theta_L$. Here $D_{\chi^2}(Q\|\widehat P^{(e)})=\tfrac12\mathbb E_{\widehat P^{(e)}}[(dQ/d\widehat P^{(e)}-1)^2]$ for
$Q\ll\widehat P^{(e)}$, $[a]_+=\max\{a,0\}$, and $\eta$ is a scalar threshold.
The right-hand side is the dual formulation of \citet{duchi2021statistics}.
We parameterize the radius as $\rho_\alpha=(1/\alpha-1)^2$ for $\alpha\in(0,1]$.
Thus, $\alpha=1$ recovers the empirical mean, while smaller values allow more
weight to be assigned to high-loss samples.

\paragraph{Empirical results.}
Table~\ref{tab:fdro} reports results for each tested $\alpha$.
In both evaluation suites, the empirical mean objective ($\alpha=1$) achieves a
higher average of worst-group accuracies than every tested $\alpha<1$.
These results suggest that emphasizing high-loss samples alone may be
insufficient to improve worst-group performance.
This is consistent with the observation of \citet{tong2025latent} that
upweighting misclassified samples does not necessarily address the underlying
data imbalance.

\begin{table}[H]
\centering
\caption{Average of mean group accuracies and average of worst-group accuracies for the $\chi^2$-DRO variant at each tested
$\alpha$, over the five tabular datasets or all six ACS settings, and over three seeds.
$\alpha=1$ uses the empirical mean loss. All tested values are reported.
$^{\dagger}$ marks averages computed from accuracy results reported by \citet{tong2025latent}.}
\label{tab:fdro}
\small
\begin{tabular}{l cc cc}
\toprule
\multirow{2}{*}{Method} & \multicolumn{2}{c}{Five tabular benchmarks} & \multicolumn{2}{c}{ACS Income} \\
\cmidrule(lr){2-3}\cmidrule(lr){4-5}
 & Mean & Worst & Mean & Worst \\
\midrule
ERM-MLP & 72.14 & 44.67 & 77.54 & 63.19 \\
$\chi^2$-DRO$^{\dagger}$ & 70.00 & 47.82 & 74.62 & 62.95 \\
TabPFN-$\chi^2$-DRO ($\alpha=1$) & 73.34 & 47.34 & 78.73 & 65.26 \\
TabPFN-$\chi^2$-DRO ($\alpha=0.6$) & 73.16 & 47.19 & 78.02 & 63.27 \\
TabPFN-$\chi^2$-DRO ($\alpha=0.4$) & 69.65 & 37.75 & 76.93 & 59.40 \\
TabPFN-$\chi^2$-DRO ($\alpha=0.2$) & 69.49 & 37.21 & 76.96 & 59.55 \\
TabPFN-$\chi^2$-DRO ($\alpha=0.1$) & 69.43 & 37.14 & 77.15 & 60.08 \\
\bottomrule
\end{tabular}
\end{table}

\subsection{Extension to other tabular foundation models}
\label{appendix:tfm-extension-results}
Tables~\ref{appendix:tab1-tfm-extension} and~\ref{appendix:tab2-tfm-extension}
report the per-dataset and per-setting results summarized in Table~\ref{tab:tfm-extension}.
For each model, we list the pretrained model, DR-TFM (GR) without true group labels
(``+ Ours''), and DR-TFM (GR) with true group labels (``+ Ours$^{*}$'').
The starred rows follow the true-group protocol of Appendix~\ref{appendix:ms-sources};
the TabPFN + Ours$^{*}$ results are those of
Tables~\ref{appendix:tab1-known} and~\ref{appendix:tab2-known}.
Parenthesized values in the Avg.\ columns are the averages of the per-dataset or per-setting standard deviations across seeds.

\paragraph{Settings.}
\label{appendix:tfm-extension}
We follow the training settings in Appendix~\ref{appendix:protocol} and use
the inference settings specified for each model.
Query scaling is applied to the queries in the final attention layer that
attends to the context. For models without a pretrained scaling network, we
add the network $g_{\theta_L}$ of
Appendix~\ref{appendix:scaling-implementation} with its second layer
initialized at zero; its modulation $\mathbf 1+\tanh(g_{\theta_L}(u))$
multiplies the queries while each model's own attention scaling is kept. The
remaining model parameters are kept fixed. Without true group labels, groups
are estimated from the representations entering the output network, averaged
over ensemble members, as for TabPFN.
\begin{itemize}[left=0pt]
    \item TabPFN-3.5. This model uses the same decoder architecture as TabPFN.
    We use 4 ensemble members and fine-tune its pretrained query scaling
    network, updating $0.004\%$ of the model's parameters.
    \item EXAONE. We add the scaling network to scale the queries in
    the final attention layer operating across samples. We use 8 ensemble
    members and update $0.020\%$ of the adapted model's parameters.
    \item TabFM. We use 32 ensemble members and update $0.002\%$ of the
    adapted model's parameters.
    \item Causilo. We add the scaling network to scale the queries in the
    final attention layer of its prediction network. We use 8 ensemble members
    and update $0.046\%$ of the adapted model's parameters. Groups are instead
    estimated from the representations entering the final linear layer of the
    output network, since context representations form a separate cluster at
    the input to the output network.
\end{itemize}

\begin{table}[H]
\caption{GR adaptation across tabular foundation models on the five tabular benchmarks.
``+ Ours'' denotes DR-TFM (GR) applied to the model above; $^{*}$ indicates use of true group labels.
Avg.\ denotes the average of each metric across datasets.
Non-bold parentheses report standard deviations over three seeds.
Bold parentheses report changes from the corresponding pretrained model, in percentage points.}
\label{appendix:tab1-tfm-extension}
\centering
\scriptsize
\newcommand{\ppgain}[1]{{\fontsize{6}{6}\selectfont\bfseries (#1)}}
\setlength{\tabcolsep}{2pt}
\begin{tabular*}{\textwidth}{@{\extracolsep{\fill}}lcccccccccc|cc}
\toprule
\multirow{2}{*}{Methods} & \multicolumn{2}{c}{Adult} & \multicolumn{2}{c}{Bank} & \multicolumn{2}{c}{Default} & \multicolumn{2}{c}{Shoppers} & \multicolumn{2}{c|}{Taxi} & \multicolumn{2}{c}{Avg.} \\
\cmidrule(lr){2-3}\cmidrule(lr){4-5}\cmidrule(lr){6-7}\cmidrule(lr){8-9}\cmidrule(lr){10-11}\cmidrule(lr){12-13}
 & Mean & Worst & Mean & Worst & Mean & Worst & Mean & Worst & Mean & Worst & Mean & Worst \\
\midrule
TabPFN & 75.33 & 49.17 & 73.07 & 42.96 & 64.77 & 31.09 & 79.59 & 58.29 & 76.01 & 58.19 & 73.75 & 47.94\\[-1.2pt]
 & \sd{0.05} & \sd{0.28} & \sd{0.31} & \sd{0.76} & \sd{0.04} & \sd{0.04} & \sd{0.32} & \sd{0.52} & \sd{0.25} & \sd{0.53} & \sd{0.19} & \sd{0.43}\\[2pt]
\textbf{+ Ours} & 79.64 & 67.90 & 85.01 & 80.14 & 69.42 & 60.37 & 85.79 & 81.66 & 76.69 & 72.79 & 79.31 & 72.57\\[-1.2pt]
 & \sd{0.20} & \sd{1.61} & \sd{0.36} & \sd{0.53} & \sd{0.62} & \sd{1.70} & \sd{0.32} & \sd{0.50} & \sd{0.19} & \sd{0.62} & \sd{0.34} & \sd{0.99}\\[-1.2pt]
 & \ppgain{+4.31} & \ppgain{+18.73} & \ppgain{+11.94} & \ppgain{+37.18} & \ppgain{+4.65} & \ppgain{+29.28} & \ppgain{+6.20} & \ppgain{+23.37} & \ppgain{+0.68} & \ppgain{+14.60} & \ppgain{+5.56} & \ppgain{+24.63}\\[2pt]
\textbf{+ Ours}$^{*}$ & 78.36 & 72.26 & 86.12 & 81.94 & 69.58 & 65.30 & 85.21 & 80.52 & 77.03 & 68.28 & 79.26 & 73.66\\[-1.2pt]
 & \sd{0.52} & \sd{0.79} & \sd{0.93} & \sd{0.34} & \sd{0.26} & \sd{1.19} & \sd{0.39} & \sd{2.17} & \sd{0.39} & \sd{2.32} & \sd{0.50} & \sd{1.36}\\[-1.2pt]
 & \ppgain{+3.03} & \ppgain{+23.09} & \ppgain{+13.05} & \ppgain{+38.98} & \ppgain{+4.81} & \ppgain{+34.21} & \ppgain{+5.62} & \ppgain{+22.23} & \ppgain{+1.02} & \ppgain{+10.09} & \ppgain{+5.51} & \ppgain{+25.72}\\[2pt]
\cmidrule[0.03em](lr){1-13}
TabPFN-3.5 & 77.79 & 54.25 & 73.83 & 45.31 & 64.87 & 31.60 & 78.92 & 56.12 & 76.56 & 59.00 & 74.39 & 49.26\\[-1.2pt]
 & \sd{0.09} & \sd{0.30} & \sd{0.06} & \sd{0.38} & \sd{0.11} & \sd{0.26} & \sd{0.16} & \sd{0.55} & \sd{0.16} & \sd{0.59} & \sd{0.12} & \sd{0.42}\\[2pt]
\textbf{+ Ours} & 81.96 & 71.82 & 86.40 & 79.81 & 68.92 & 58.94 & 85.17 & 79.92 & 74.66 & 66.83 & 79.42 & 71.47\\[-1.2pt]
 & \sd{0.26} & \sd{0.44} & \sd{0.41} & \sd{0.40} & \sd{0.27} & \sd{0.17} & \sd{0.46} & \sd{1.13} & \sd{2.81} & \sd{2.06} & \sd{0.84} & \sd{0.84}\\[-1.2pt]
 & \ppgain{+4.17} & \ppgain{+17.57} & \ppgain{+12.57} & \ppgain{+34.50} & \ppgain{+4.05} & \ppgain{+27.34} & \ppgain{+6.25} & \ppgain{+23.80} & \ppgain{-1.90} & \ppgain{+7.83} & \ppgain{+5.03} & \ppgain{+22.21}\\[2pt]
\textbf{+ Ours}$^{*}$ & 80.20 & 74.69 & 86.90 & 81.81 & 69.00 & 65.94 & 85.88 & 82.58 & 77.47 & 68.68 & 79.89 & 74.74\\[-1.2pt]
 & \sd{0.84} & \sd{1.08} & \sd{0.22} & \sd{0.52} & \sd{0.33} & \sd{0.99} & \sd{0.21} & \sd{0.57} & \sd{0.33} & \sd{0.23} & \sd{0.39} & \sd{0.68}\\[-1.2pt]
 & \ppgain{+2.41} & \ppgain{+20.44} & \ppgain{+13.07} & \ppgain{+36.50} & \ppgain{+4.13} & \ppgain{+34.34} & \ppgain{+6.96} & \ppgain{+26.46} & \ppgain{+0.91} & \ppgain{+9.68} & \ppgain{+5.50} & \ppgain{+25.48}\\[2pt]
\cmidrule[0.03em](lr){1-13}
EXAONE & 77.47 & 53.46 & 72.67 & 42.35 & 64.74 & 31.10 & 77.74 & 53.72 & 75.99 & 57.77 & 73.72 & 47.68\\[-1.2pt]
 & \sd{0.02} & \sd{0.05} & \sd{0.18} & \sd{0.26} & \sd{0.12} & \sd{0.27} & \sd{0.16} & \sd{0.29} & \sd{0.06} & \sd{0.16} & \sd{0.11} & \sd{0.21} \\[2pt]
\textbf{+ Ours} & 81.12 & 67.82 & 84.53 & 77.46 & 68.86 & 56.15 & 85.37 & 80.48 & 77.32 & 65.34 & 79.44 & 69.45\\[-1.2pt]
 & \sd{0.07} & \sd{0.35} & \sd{0.53} & \sd{1.39} & \sd{0.07} & \sd{5.45} & \sd{0.16} & \sd{1.35} & \sd{0.50} & \sd{2.48} & \sd{0.27} & \sd{2.21}\\[-1.2pt]
 & \ppgain{+3.65} & \ppgain{+14.36} & \ppgain{+11.86} & \ppgain{+35.11} & \ppgain{+4.12} & \ppgain{+25.05} & \ppgain{+7.63} & \ppgain{+26.76} & \ppgain{+1.33} & \ppgain{+7.57} & \ppgain{+5.72} & \ppgain{+21.77}\\[2pt]
\textbf{+ Ours}$^{*}$ & 81.24 & 74.34 & 86.73 & 80.45 & 69.56 & 63.70 & 85.61 & 80.85 & 77.37 & 68.49 & 80.10 & 73.57\\[-1.2pt]
 & \sd{0.24} & \sd{1.32} & \sd{0.04} & \sd{0.69} & \sd{0.39} & \sd{3.41} & \sd{0.29} & \sd{2.79} & \sd{0.27} & \sd{0.70} & \sd{0.25} & \sd{1.78}\\[-1.2pt]
 & \ppgain{+3.77} & \ppgain{+20.88} & \ppgain{+14.06} & \ppgain{+38.10} & \ppgain{+4.82} & \ppgain{+32.60} & \ppgain{+7.87} & \ppgain{+27.13} & \ppgain{+1.38} & \ppgain{+10.72} & \ppgain{+6.38} & \ppgain{+25.89}\\[2pt]
\cmidrule[0.03em](lr){1-13}
TabFM & 77.39 & 52.88 & 73.75 & 44.42 & 65.02 & 31.91 & 78.28 & 54.96 & 76.34 & 58.11 & 74.15 & 48.46\\[-1.2pt]
 & \sd{0.02} & \sd{0.10} & \sd{0.06} & \sd{0.10} & \sd{0.03} & \sd{0.30} & \sd{0.25} & \sd{0.63} & \sd{0.13} & \sd{0.28} & \sd{0.10} & \sd{0.28} \\[2pt]
\textbf{+ Ours} & 80.06 & 64.25 & 86.21 & 73.96 & 69.07 & 52.03 & 85.38 & 81.67 & 76.42 & 60.44 & 79.43 & 66.47\\[-1.2pt]
 & \sd{2.09} & \sd{7.86} & \sd{0.66} & \sd{1.28} & \sd{0.75} & \sd{0.94} & \sd{0.56} & \sd{1.97} & \sd{0.65} & \sd{1.65} & \sd{0.94} & \sd{2.74}\\[-1.2pt]
 & \ppgain{+2.67} & \ppgain{+11.37} & \ppgain{+12.46} & \ppgain{+29.54} & \ppgain{+4.05} & \ppgain{+20.12} & \ppgain{+7.10} & \ppgain{+26.71} & \ppgain{+0.08} & \ppgain{+2.33} & \ppgain{+5.28} & \ppgain{+18.01}\\[2pt]
\textbf{+ Ours}$^{*}$ & 81.38 & 72.66 & 87.49 & 78.83 & 69.87 & 59.67 & 85.71 & 81.36 & 77.62 & 68.70 & 80.41 & 72.25\\[-1.2pt]
 & \sd{0.24} & \sd{0.66} & \sd{0.23} & \sd{0.18} & \sd{0.26} & \sd{1.13} & \sd{0.57} & \sd{0.55} & \sd{0.38} & \sd{0.65} & \sd{0.33} & \sd{0.63}\\[-1.2pt]
 & \ppgain{+3.99} & \ppgain{+19.78} & \ppgain{+13.74} & \ppgain{+34.41} & \ppgain{+4.85} & \ppgain{+27.76} & \ppgain{+7.43} & \ppgain{+26.40} & \ppgain{+1.28} & \ppgain{+10.59} & \ppgain{+6.26} & \ppgain{+23.79}\\[2pt]
\cmidrule[0.03em](lr){1-13}
Causilo & 77.25 & 52.89 & 72.95 & 43.27 & 64.70 & 31.27 & 78.60 & 56.44 & 76.28 & 58.98 & 73.95 & 48.57\\[-1.2pt]
 & \sd{0.12} & \sd{0.40} & \sd{0.23} & \sd{0.47} & \sd{0.13} & \sd{0.29} & \sd{0.05} & \sd{0.00} & \sd{0.11} & \sd{0.07} & \sd{0.13} & \sd{0.25} \\[2pt]
\textbf{+ Ours} & 80.53 & 65.96 & 83.25 & 65.25 & 66.82 & 44.44 & 84.96 & 79.77 & 76.77 & 62.41 & 78.47 & 63.56\\[-1.2pt]
 & \sd{0.13} & \sd{0.62} & \sd{0.13} & \sd{0.97} & \sd{0.31} & \sd{0.65} & \sd{0.14} & \sd{1.70} & \sd{0.35} & \sd{1.58} & \sd{0.21} & \sd{1.10}\\[-1.2pt]
 & \ppgain{+3.28} & \ppgain{+13.07} & \ppgain{+10.30} & \ppgain{+21.98} & \ppgain{+2.12} & \ppgain{+13.17} & \ppgain{+6.36} & \ppgain{+23.33} & \ppgain{+0.49} & \ppgain{+3.43} & \ppgain{+4.52} & \ppgain{+14.99}\\[2pt]
\textbf{+ Ours}$^{*}$ & 80.68 & 73.52 & 86.40 & 81.82 & 69.61 & 64.09 & 85.59 & 81.59 & 77.68 & 68.38 & 79.99 & 73.88\\[-1.2pt]
 & \sd{0.49} & \sd{0.61} & \sd{0.36} & \sd{0.50} & \sd{0.51} & \sd{4.86} & \sd{1.01} & \sd{0.95} & \sd{0.40} & \sd{0.79} & \sd{0.55} & \sd{1.54}\\[-1.2pt]
 & \ppgain{+3.43} & \ppgain{+20.63} & \ppgain{+13.45} & \ppgain{+38.55} & \ppgain{+4.91} & \ppgain{+32.82} & \ppgain{+6.99} & \ppgain{+25.15} & \ppgain{+1.40} & \ppgain{+9.40} & \ppgain{+6.04} & \ppgain{+25.31}\\[2pt]
\bottomrule
\end{tabular*}
\end{table}

\begin{table}[H]
\caption{GR adaptation across tabular foundation models on ACS Income.
``+ Ours'' denotes DR-TFM (GR) applied to the model above; $^{*}$ indicates use of true group labels.
The left block reports within-state results and the right block reports transfers between states.
Avg.\ denotes the average of each metric across the three settings in its block.
Non-bold parentheses report standard deviations over three seeds.
Bold parentheses report changes from the corresponding pretrained model, in percentage points.}
\label{appendix:tab2-tfm-extension}
\centering
\scriptsize
\newcommand{\ppgain}[1]{{\fontsize{6}{6}\selectfont\bfseries (#1)}}
\setlength{\tabcolsep}{0.5pt}
\begin{tabular*}{\textwidth}{@{\extracolsep{\fill}}lcccccc|cc|cccccc|cc}
\toprule
\multirow{2}{*}{Methods} & \multicolumn{2}{c}{AZ} & \multicolumn{2}{c}{MA} & \multicolumn{2}{c|}{MI} & \multicolumn{2}{c|}{Avg.} & \multicolumn{2}{c}{AZ $\rightarrow$ MA} & \multicolumn{2}{c}{MA $\rightarrow$ MI} & \multicolumn{2}{c|}{MI $\rightarrow$ AZ} & \multicolumn{2}{c}{Avg.} \\
\cmidrule(lr){2-3}\cmidrule(lr){4-5}\cmidrule(lr){6-7}\cmidrule(lr){8-9}\cmidrule(lr){10-11}\cmidrule(lr){12-13}\cmidrule(lr){14-15}\cmidrule(lr){16-17}
 & Mean & Worst & Mean & Worst & Mean & Worst & Mean & Worst & Mean & Worst & Mean & Worst & Mean & Worst & Mean & Worst \\
\midrule
TabPFN & 78.19 & 59.87 & 83.01 & 77.02 & 77.72 & 63.50 & 79.64 & 66.80 & 77.95 & 61.57 & 79.69 & 71.86 & 76.81 & 59.08 & 78.15 & 64.17\\[-1.2pt]
 & \sd{0.32} & \sd{0.50} & \sd{0.12} & \sd{0.27} & \sd{0.03} & \sd{0.20} & \sd{0.16} & \sd{0.32} & \sd{0.07} & \sd{0.18} & \sd{0.08} & \sd{0.25} & \sd{0.05} & \sd{0.12} & \sd{0.07} & \sd{0.18}\\[2pt]
\textbf{+ Ours} & 78.81 & 70.36 & 79.96 & 70.93 & 78.87 & 71.90 & 79.21 & 71.06 & 78.43 & 69.70 & 78.73 & 66.43 & 78.46 & 68.84 & 78.54 & 68.32\\[-1.2pt]
 & \sd{0.13} & \sd{1.00} & \sd{0.59} & \sd{1.31} & \sd{0.30} & \sd{0.86} & \sd{0.34} & \sd{1.06} & \sd{0.15} & \sd{0.72} & \sd{0.68} & \sd{0.34} & \sd{0.15} & \sd{0.59} & \sd{0.32} & \sd{0.55}\\[-1.2pt]
 & \ppgain{+0.62} & \ppgain{+10.49} & \ppgain{-3.05} & \ppgain{-6.09} & \ppgain{+1.15} & \ppgain{+8.40} & \ppgain{-0.43} & \ppgain{+4.26} & \ppgain{+0.48} & \ppgain{+8.13} & \ppgain{-0.96} & \ppgain{-5.43} & \ppgain{+1.65} & \ppgain{+9.76} & \ppgain{+0.39} & \ppgain{+4.15}\\[2pt]
\textbf{+ Ours}$^{*}$ & 79.55 & 76.36 & 81.45 & 79.27 & 78.64 & 76.09 & 79.88 & 77.24 & 79.61 & 78.02 & 79.37 & 76.36 & 78.50 & 76.25 & 79.16 & 76.88\\[-1.2pt]
 & \sd{0.40} & \sd{0.91} & \sd{0.34} & \sd{0.49} & \sd{0.25} & \sd{0.80} & \sd{0.33} & \sd{0.73} & \sd{0.47} & \sd{1.54} & \sd{0.03} & \sd{0.93} & \sd{0.05} & \sd{0.37} & \sd{0.19} & \sd{0.95}\\[-1.2pt]
 & \ppgain{+1.36} & \ppgain{+16.49} & \ppgain{-1.56} & \ppgain{+2.25} & \ppgain{+0.92} & \ppgain{+12.59} & \ppgain{+0.24} & \ppgain{+10.44} & \ppgain{+1.66} & \ppgain{+16.45} & \ppgain{-0.32} & \ppgain{+4.50} & \ppgain{+1.69} & \ppgain{+17.17} & \ppgain{+1.01} & \ppgain{+12.71}\\[2pt]
\cmidrule[0.03em](lr){1-17}
TabPFN-3.5 & 78.18 & 59.60 & 83.19 & 77.23 & 78.00 & 64.47 & 79.79 & 67.10 & 78.00 & 61.70 & 79.80 & 72.23 & 76.94 & 59.08 & 78.25 & 64.34\\[-1.2pt]
 & \sd{0.29} & \sd{1.01} & \sd{0.25} & \sd{0.44} & \sd{0.03} & \sd{0.20} & \sd{0.19} & \sd{0.55} & \sd{0.02} & \sd{0.06} & \sd{0.08} & \sd{0.23} & \sd{0.04} & \sd{0.16} & \sd{0.05} & \sd{0.15}\\[2pt]
\textbf{+ Ours} & 79.77 & 71.01 & 81.28 & 73.80 & 78.60 & 72.62 & 79.88 & 72.48 & 79.77 & 74.91 & 79.19 & 66.85 & 78.59 & 72.77 & 79.19 & 71.51\\[-1.2pt]
 & \sd{0.40} & \sd{0.78} & \sd{0.11} & \sd{1.25} & \sd{0.46} & \sd{1.73} & \sd{0.32} & \sd{1.25} & \sd{0.25} & \sd{1.06} & \sd{0.20} & \sd{3.39} & \sd{0.01} & \sd{1.60} & \sd{0.15} & \sd{2.02}\\[-1.2pt]
 & \ppgain{+1.59} & \ppgain{+11.41} & \ppgain{-1.91} & \ppgain{-3.43} & \ppgain{+0.60} & \ppgain{+8.15} & \ppgain{+0.09} & \ppgain{+5.38} & \ppgain{+1.77} & \ppgain{+13.21} & \ppgain{-0.61} & \ppgain{-5.38} & \ppgain{+1.65} & \ppgain{+13.69} & \ppgain{+0.94} & \ppgain{+7.17}\\[2pt]
\textbf{+ Ours}$^{*}$ & 79.90 & 76.20 & 81.42 & 78.95 & 78.96 & 75.11 & 80.09 & 76.75 & 79.73 & 77.74 & 79.37 & 74.99 & 78.56 & 76.24 & 79.22 & 76.32\\[-1.2pt]
 & \sd{0.29} & \sd{0.39} & \sd{0.28} & \sd{0.89} & \sd{0.23} & \sd{0.65} & \sd{0.27} & \sd{0.64} & \sd{0.26} & \sd{0.39} & \sd{0.07} & \sd{2.30} & \sd{0.22} & \sd{0.57} & \sd{0.18} & \sd{1.09}\\[-1.2pt]
 & \ppgain{+1.72} & \ppgain{+16.60} & \ppgain{-1.77} & \ppgain{+1.72} & \ppgain{+0.96} & \ppgain{+10.64} & \ppgain{+0.30} & \ppgain{+9.65} & \ppgain{+1.73} & \ppgain{+16.04} & \ppgain{-0.43} & \ppgain{+2.76} & \ppgain{+1.62} & \ppgain{+17.16} & \ppgain{+0.97} & \ppgain{+11.98}\\[2pt]
\cmidrule[0.03em](lr){1-17}
EXAONE & 78.06 & 58.93 & 82.70 & 76.05 & 77.80 & 63.50 & 79.52 & 66.16 & 78.06 & 61.52 & 79.85 & 72.02 & 76.90 & 59.03 & 78.27 & 64.19\\[-1.2pt]
 & \sd{0.02} & \sd{0.19} & \sd{0.05} & \sd{0.16} & \sd{0.07} & \sd{0.05} & \sd{0.05} & \sd{0.13} & \sd{0.02} & \sd{0.11} & \sd{0.02} & \sd{0.06} & \sd{0.08} & \sd{0.21} & \sd{0.04} & \sd{0.13} \\[2pt]
\textbf{+ Ours} & 80.26 & 71.46 & 81.73 & 74.28 & 78.85 & 68.43 & 80.28 & 71.39 & 80.33 & 73.44 & 79.52 & 66.19 & 79.03 & 71.48 & 79.63 & 70.37\\[-1.2pt]
 & \sd{0.17} & \sd{1.12} & \sd{0.22} & \sd{1.42} & \sd{0.45} & \sd{3.68} & \sd{0.28} & \sd{2.07} & \sd{0.41} & \sd{1.22} & \sd{0.43} & \sd{3.40} & \sd{0.13} & \sd{0.91} & \sd{0.32} & \sd{1.84}\\[-1.2pt]
 & \ppgain{+2.20} & \ppgain{+12.53} & \ppgain{-0.97} & \ppgain{-1.77} & \ppgain{+1.05} & \ppgain{+4.93} & \ppgain{+0.76} & \ppgain{+5.23} & \ppgain{+2.27} & \ppgain{+11.92} & \ppgain{-0.33} & \ppgain{-5.83} & \ppgain{+2.13} & \ppgain{+12.45} & \ppgain{+1.36} & \ppgain{+6.18}\\[2pt]
\textbf{+ Ours}$^{*}$ & 79.93 & 76.07 & 81.47 & 78.65 & 79.08 & 75.23 & 80.16 & 76.65 & 80.02 & 78.41 & 79.50 & 73.49 & 78.73 & 76.05 & 79.42 & 75.99\\[-1.2pt]
 & \sd{0.57} & \sd{1.05} & \sd{0.15} & \sd{0.98} & \sd{0.60} & \sd{0.38} & \sd{0.44} & \sd{0.80} & \sd{0.37} & \sd{0.43} & \sd{0.03} & \sd{1.32} & \sd{0.06} & \sd{0.37} & \sd{0.16} & \sd{0.71}\\[-1.2pt]
 & \ppgain{+1.87} & \ppgain{+17.14} & \ppgain{-1.23} & \ppgain{+2.60} & \ppgain{+1.28} & \ppgain{+11.73} & \ppgain{+0.64} & \ppgain{+10.49} & \ppgain{+1.96} & \ppgain{+16.89} & \ppgain{-0.35} & \ppgain{+1.47} & \ppgain{+1.83} & \ppgain{+17.02} & \ppgain{+1.15} & \ppgain{+11.80}\\[2pt]
\cmidrule[0.03em](lr){1-17}
TabFM & 77.90 & 59.64 & 82.95 & 77.10 & 78.05 & 64.60 & 79.63 & 67.11 & 77.97 & 61.11 & 79.90 & 72.14 & 77.09 & 59.86 & 78.32 & 64.37\\[-1.2pt]
 & \sd{0.06} & \sd{0.24} & \sd{0.05} & \sd{0.08} & \sd{0.03} & \sd{0.13} & \sd{0.05} & \sd{0.15} & \sd{0.01} & \sd{0.02} & \sd{0.01} & \sd{0.03} & \sd{0.01} & \sd{0.08} & \sd{0.01} & \sd{0.04} \\[2pt]
\textbf{+ Ours} & 79.38 & 69.88 & 80.25 & 64.83 & 79.32 & 73.51 & 79.65 & 69.41 & 79.46 & 70.70 & 78.50 & 58.87 & 78.96 & 70.22 & 78.98 & 66.60\\[-1.2pt]
 & \sd{0.36} & \sd{1.42} & \sd{0.32} & \sd{0.99} & \sd{0.10} & \sd{0.98} & \sd{0.26} & \sd{1.13} & \sd{0.22} & \sd{3.47} & \sd{0.04} & \sd{0.17} & \sd{0.13} & \sd{0.55} & \sd{0.13} & \sd{1.40}\\[-1.2pt]
 & \ppgain{+1.48} & \ppgain{+10.24} & \ppgain{-2.70} & \ppgain{-12.27} & \ppgain{+1.27} & \ppgain{+8.91} & \ppgain{+0.02} & \ppgain{+2.30} & \ppgain{+1.49} & \ppgain{+9.59} & \ppgain{-1.40} & \ppgain{-13.27} & \ppgain{+1.87} & \ppgain{+10.36} & \ppgain{+0.66} & \ppgain{+2.23}\\[2pt]
\textbf{+ Ours}$^{*}$ & 79.59 & 75.88 & 81.40 & 79.29 & 79.20 & 75.52 & 80.06 & 76.90 & 80.08 & 78.43 & 79.43 & 73.69 & 78.75 & 76.24 & 79.42 & 76.12\\[-1.2pt]
 & \sd{0.26} & \sd{0.83} & \sd{0.09} & \sd{0.89} & \sd{0.51} & \sd{0.34} & \sd{0.29} & \sd{0.69} & \sd{0.30} & \sd{0.45} & \sd{0.19} & \sd{1.93} & \sd{0.09} & \sd{0.25} & \sd{0.19} & \sd{0.88}\\[-1.2pt]
 & \ppgain{+1.69} & \ppgain{+16.24} & \ppgain{-1.55} & \ppgain{+2.19} & \ppgain{+1.15} & \ppgain{+10.92} & \ppgain{+0.43} & \ppgain{+9.79} & \ppgain{+2.11} & \ppgain{+17.32} & \ppgain{-0.47} & \ppgain{+1.55} & \ppgain{+1.66} & \ppgain{+16.38} & \ppgain{+1.10} & \ppgain{+11.75}\\[2pt]
\cmidrule[0.03em](lr){1-17}
Causilo & 77.77 & 59.45 & 82.87 & 76.52 & 77.77 & 64.04 & 79.47 & 66.67 & 77.90 & 61.53 & 79.85 & 72.02 & 76.93 & 59.31 & 78.23 & 64.29\\[-1.2pt]
 & \sd{0.24} & \sd{0.49} & \sd{0.14} & \sd{0.25} & \sd{0.16} & \sd{0.52} & \sd{0.18} & \sd{0.42} & \sd{0.09} & \sd{0.25} & \sd{0.04} & \sd{0.25} & \sd{0.05} & \sd{0.16} & \sd{0.06} & \sd{0.22} \\[2pt]
\textbf{+ Ours} & 79.53 & 70.25 & 82.26 & 75.04 & 78.88 & 71.16 & 80.22 & 72.15 & 78.51 & 71.61 & 79.11 & 67.17 & 77.50 & 65.83 & 78.37 & 68.21\\[-1.2pt]
 & \sd{0.33} & \sd{2.34} & \sd{0.21} & \sd{0.40} & \sd{0.07} & \sd{1.34} & \sd{0.20} & \sd{1.36} & \sd{1.18} & \sd{0.62} & \sd{0.79} & \sd{1.57} & \sd{1.35} & \sd{0.58} & \sd{1.11} & \sd{0.92}\\[-1.2pt]
 & \ppgain{+1.76} & \ppgain{+10.80} & \ppgain{-0.61} & \ppgain{-1.48} & \ppgain{+1.11} & \ppgain{+7.12} & \ppgain{+0.75} & \ppgain{+5.48} & \ppgain{+0.61} & \ppgain{+10.08} & \ppgain{-0.74} & \ppgain{-4.85} & \ppgain{+0.57} & \ppgain{+6.52} & \ppgain{+0.14} & \ppgain{+3.92}\\[2pt]
\textbf{+ Ours}$^{*}$ & 79.83 & 76.43 & 81.47 & 79.31 & 79.15 & 75.89 & 80.15 & 77.21 & 80.03 & 78.44 & 79.39 & 75.21 & 78.61 & 76.58 & 79.35 & 76.74\\[-1.2pt]
 & \sd{0.40} & \sd{0.88} & \sd{0.14} & \sd{0.28} & \sd{0.18} & \sd{0.17} & \sd{0.24} & \sd{0.44} & \sd{0.27} & \sd{0.66} & \sd{0.08} & \sd{2.06} & \sd{0.12} & \sd{0.13} & \sd{0.16} & \sd{0.95}\\[-1.2pt]
 & \ppgain{+2.06} & \ppgain{+16.98} & \ppgain{-1.40} & \ppgain{+2.79} & \ppgain{+1.38} & \ppgain{+11.85} & \ppgain{+0.68} & \ppgain{+10.54} & \ppgain{+2.13} & \ppgain{+16.91} & \ppgain{-0.46} & \ppgain{+3.19} & \ppgain{+1.68} & \ppgain{+17.27} & \ppgain{+1.12} & \ppgain{+12.45}\\[2pt]
\bottomrule
\end{tabular*}
\end{table}

\subsection*{AI use statement}
We used generative AI tools to assist with drafting and editing the manuscript and with implementation details.
The authors reviewed all AI-assisted content and take responsibility for the final manuscript.

\end{document}